\setlist[itemize]{itemsep=0pt, parsep=0pt, topsep=2pt, partopsep=0pt}
\def\senbun#1(#2)#3({\@senbun(#2)(}
\def\@senbun(#1,#2)(#3,#4){%
   \@tempdima#1\p@ \advance\@tempdima#3\p@
   \divide\@tempdima\tw@
   \@tempdimb#2\p@ \advance\@tempdimb#4\p@
   \divide\@tempdimb\tw@
   \edef\@senbun@temp{\noexpand\qbezier(#1,#2)%
      (\strip@pt\@tempdima,\strip@pt\@tempdimb)(#3,#4)}%
  \@senbun@temp}
\newtheorem{lemma}{Lemma}
\newtheorem{theorem}{Theorem}
\theoremstyle{definition}
 \newtheorem{definition}{Definition}
\newcommand{\ASY}{{\sc Asynch}}
\newcommand{\FSY}{{\sc Fsynch}}
\newcommand{\SSY}{{\sc Ssynch}}
\newcommand{\RSY}{{\sc Rsynch}}
\newcommand{\RR}{\textsc{RR}\xspace}
\newcommand{\LU}{\ensuremath{\mathcal{LUMI}}\xspace}
\newcommand{\FS}{\ensuremath{\mathcal{FST\!A}}\xspace}
\newcommand{\FC}{\ensuremath{\mathcal{FCOM}}\xspace}
\newcommand{\OB}{\ensuremath{\mathcal{OBLOT}}\xspace}
\newcommand{\Look}{\ensuremath{\mathit{Look}}\xspace}
\newcommand{\Compute}{\ensuremath{\mathit{Compute}}\xspace}
\newcommand{\Move}{\ensuremath{\mathit{Move}}\xspace}
\newcommand{\LCM}{\ensuremath{\mathit{LCM}}\xspace}
\newcommand{\clight}{\textit{Light}}
\newcommand{\N}{{\rm I\kern-.22em N}} 
\newcommand{\Z}{{\sf Z\kern-.42em Z}} 
\newcommand{\R}{{\rm I\kern-.22em R}}
\newcommand{\LK}{{\mathit{Look}}}
\newcommand{\M}{{\mathit{Move}}}
\newcommand{\LC}{{\mathit{LC}}}
\newcommand{\CM}{{\mathit{CM}}}
\newcommand{\MCv}{{\tt MCv}}
\newcommand{\DMSD}{{\tt DMSD}}
\newcommand{\INIT}{{\tt INIT}}
\newcommand{\ANCHOR}{{\tt ANCHOR}}
\newcommand{\LUM}{{\mathcal{LUMI}}} 
\newcounter{Codeline}
\newcommand{\Newcodeline}{\setcounter{Codeline}{1}}
\newcommand{\Cl}{{\theCodeline}: \addtocounter{Codeline}{1}}
\newcommand{\crm}{\\}
\newcommand{\length}{\mathit{length}\xspace}
\newcommand{\calC}{\mathcal{C}}
\newcommand{\calR}{\mathcal{R}}
\newcommand{\calA}{\mathcal{A}}
\newcommand{\exc}{\textsf{exc}}
\newcommand{\cpy}{\textsf{cpy}}
\newcommand{\rst}{\textsf{rst}}
\newcommand{\phase}{\texttt{phase}}
\newcommand{\Stop}{\texttt{Stop}}
\newcommand{\MoveSet}[1]{\mathrm{MoveSet}(#1)}
\newcommand{\moves}[1]{\mathrm{moves}(#1)}
\newcommand{\Stopped}[1]{\mathrm{Stopped}(#1)}
\newcommand{\Dyad}{\mathbb{Z}[1/2]}
\newtheorem{lemmaduplicate}{Lemma}
\title{Two-Robot Computational Landscape: A Complete Characterization of Model Power in Minimal Mobile Robot Systems\thanks{{
This work was supported by JSPS KAKENHI Grant Numbers
JP20KK0232, JP23K16838, 
JP25K03078, 
JP25K03079, 
and by JST FOREST Program JPMJFR226U.}}

} 
\date{}
\author[1]{Naoki Kitamura}
\author[2]{Yuichi Sudo}
\author[2]{Koichi Wada\thanks{Corresponding Author: wada@hosei.ac.jp}}
\affil[1]{The University of Osaka, Japan}
\affil[2]{Hosei University, Japan}
\begin{document}

\maketitle


\begin{abstract}
The computational power of autonomous mobile robots under the Look--Compute--Move (LCM) model
has been widely studied through an extensive hierarchy of robot models defined by the presence
of memory, communication, and synchrony assumptions.
While the general $n$-robot landscape has been largely established, the exact structure for
\emph{two robots} has remained unresolved.
This paper presents the first complete characterization of computational power for two autonomous robots
across all major models---$\OB$, $\FS$, $\FC$, and $\LU$---under the full spectrum of schedulers
(\FSY, \RSY, \SSY, \ASY, and their atomic variants).

Our results reveal a landscape that fundamentally differs from the general case.
Most notably, we prove that $\FS$ and $\LU$ coincide under full synchrony, a surprising collapse
indicating that perfect synchrony can substitute both memory and communication when only two robots exist.
We also show that both the color complexity and the computation time of the simulator proving the equivalence can be substantially improved.
Moreover, We show that $\FS$ and $\FC$ are orthogonal:
there exists a problem solvable in the weakest communication model but impossible even in the strongest
finite-state model, completing the bidirectional incomparability.
This yields the first complete and exact \emph{computational landscape} for two robots,
highlighting the intrinsic challenges of coordination at the minimal scale.
\end{abstract}

{\bf Keywords:} Distributed Computing,
Mobile Computational Entities,
Robots with Lights,
Minimal Robot Systems,
Landscape of Computational Power

\renewcommand{\baselinestretch}{0.85}
\section{Introduction}
\label{sec:introduction}
\subsection{Background and Motivation}
The computational power of autonomous mobile robots operating through $\Look$-$\Compute$-$\Move$ ($\LCM$) cycles has been a central topic in distributed computing. Robots are modeled as anonymous, uniform, disoriented points in the Euclidean plane. In each cycle, a robot observes the configuration ($\Look$), computes a destination ($\Compute$), and moves accordingly ($\Move$). These agents can solve distributed tasks collectively, despite their simplicity.

The weakest standard model, $\OB$, assumes robots are oblivious (no persistent memory) and silent (no explicit communication). Since its introduction~\cite{SY}, this model has been extensively studied for basic coordination tasks such as Gathering~\cite{AP,AOSY,BDT,CDN,CFPS,CP,FPSW05,ISKIDWY,SY}, Pattern Formation~\cite{FPSW08,FYOKY,SY,YS,YUKY}, and Flocking~\cite{CG,GP,SIW}. The limitations imposed by obliviousness and silence have led to extensive investigation of stronger models.
A widely studied enhancement is the luminous robot model, $\LU$~\cite{DFPSY}, where robots are equipped with a visible, persistent light (i.e., a constant-size memory) used for communication and state retention. The lights can be updated during $\Compute$ and are visible to others, enabling both memory and communication in each cycle.

To understand the individual roles of memory and communication, two submodels of $\LU$ have been introduced: $\FS$ (persistent memory without communication), and $\FC$ (communication without memory)~\cite{apdcm,BFKPSW22,OWD}. Studying these four models—$\OB$, $\FS$, $\FC$, and $\LU$—clarifies the contribution of internal robot capabilities to problem solvability.

External assumptions such as synchrony and activation schedulers also play a crucial role. The \emph{semi-synchronous} (\SSY) model~\cite{SY} activates an arbitrary non-empty subset of robots in each round, who perform one atomic $\LCM$ cycle. Special cases include the \emph{fully-synchronous} (\FSY) model, where all robots are activated every round, and the \emph{restricted synchronous} (\RSY) model~\cite{BFKPSW22}, where following \FSY, successive activations are such that the sets of robots activated in consecutive rounds are disjoint.

In contrast, the \emph{asynchronous} (\ASY) model~\cite{FPSW99} introduces arbitrary but finite delays between phases of each robot's cycle, under a fair adversarial scheduler. This model captures high uncertainty and has proven computationally weaker.

Stronger variants of \ASY\ have been studied by assuming partial atomicity: for example, \textbf{$LC$-atomic} and \textbf{$CM$-atomic} models~\cite{OWD}. These restrict delays to only parts of the cycle and allow finer-grained modeling.

Previous studies have identified 14 distinct robot models by proving \emph{separations} (i.e., problems solvable in one model but not in another) between pairs of internal models and external schedulers, and by establishing \emph{equivalences} or \emph{orthogonality} among some of them~\cite{DFPSY,FSW19,BFKPSW22,FSSW23}.


\begin{figure}[H]
    \centering
    \includegraphics[width=0.60\linewidth]{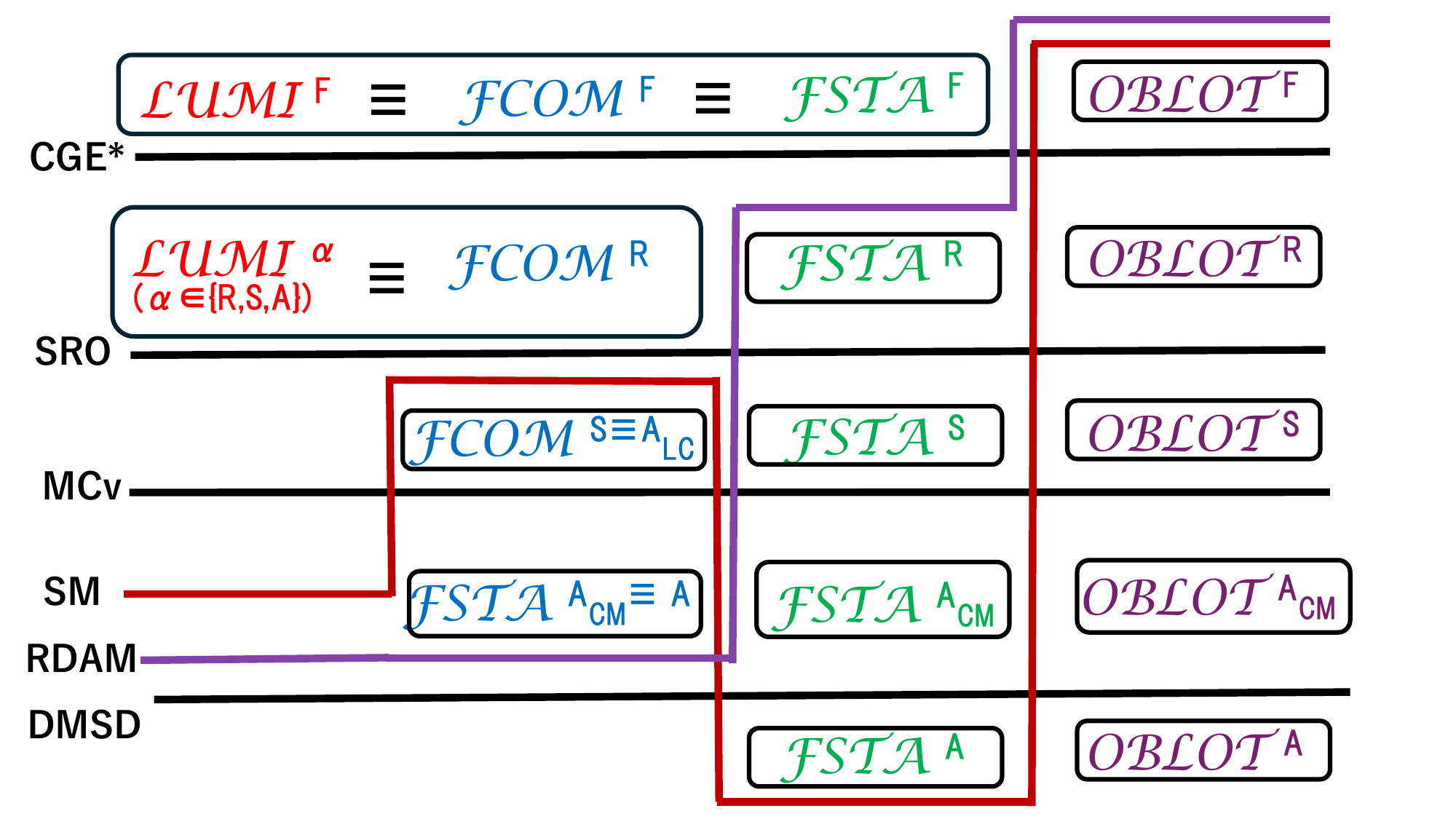}
\caption{
Computational landscape of two-robot models under major scheduler classes.
Models enclosed by the same ellipse are \emph{equivalent} in computational power.
Each connecting line represents a \emph{separating problem}:
the task above the line is solvable in the stronger model,
whereas the one below it is not.
}

    \label{fig:boundary-line}
\end{figure}
\vspace{-0.5cm}

\subsection{Contributions}

While prior research has mainly focused on multi-robot systems, the case of \emph{two robots} remains surprisingly subtle.  
With only two anonymous, identical entities, symmetry becomes absolute and geometric memory collapses: the configuration reduces to a single line segment, and no deterministic rule can break mirror symmetry without additional assumptions.  
Unlike in the multi-robot case—where light visibility ($\LU, \FC, \FS$) and scheduler type (\FSY, \RSY, \SSY, \ASY) yield a rich computational hierarchy—the two-robot landscape shows a partial collapse: under full synchrony, all robot models with light  become equivalent, yet under weaker schedulers the structure may remain as intricate as in the multi-robot setting, making its precise characterization considerably more difficult.


\medskip

This paper provides the \textbf{complete solvability landscape} for all combinations of  
$\OB, \FS,\\ \FC,$ and $\LU$ under the major scheduler classes \FSY, \RSY, \SSY, \ASY, and atomic variants of \ASY.  
We classify every pair of models as either equivalent or strictly separated, thereby resolving all previously open cases for two-robot systems. Fig.~\ref{fig:boundary-line} shows the computational landscape of two robots~\footnote{In the case of two robots, there are 13 robot models, because $\LU^F (\equiv \FC^F)$ and $\FS^F$ are equivalent; in the general multi-robot case, however, these two models are separated, yielding 14 robot models~\cite{FSW19}}.


{\small
\begin{table}[ht]
  \caption{The previous results of simulations and this paper's improvement.}
  \begin{center}
    \textbf{previous results}\\[3pt]
    \begin{tabular}{|c|c|c|c|c|}
      \hline
      simulating model & simulated model & $\#$ colors & $\#$ epochs & Ref. \\
      \hhline{|=====|}
      $\LU^{A}$   & $\LU^{R}$ 
        & $3k$ & $6T$ & \cite{NTW24} \\ \hline
              $\mathcal{\FC}^{F}$ & $\LU^{F}$ 
        & $2k^2$ & $2T$ & \cite{FSW19} \\ \hline
      $\FC^{R}$ & $\LU^{S}$ 
        & $64k^2$ & $8T$ & \cite{BFKPSW22} \\ \hline
      $\LU^{S}$ & $\LUM^{R}$ 
        & $4k$ & $7T$ &  \cite{NTW24}\\ \hline
      $\FC^{A}$ & $\FC^{A_{CM}}$ 
        & $36k$ & $8T$ & \cite{FSSW23} \\ \hline
              $\FC^{A_{LC}}$ & $\FC^{S}$ 
        & $36k$  & $8T$ & \cite{FSSW23} \\ \hline
    \end{tabular}

        \textbf{our improvement}\\[3pt]
    \begin{tabular}{|c|c|c|c|c|}
      \hline
      simulating model & simulated model & $\#$ colors & $\#$ epochs & Ref. \\
      \hhline{|=====|}
        $\FC^{F}$ ($\FS^{F}$)  & $\LU^{F}$ 
        & $1$ & $T$ & this paper \\ \hline    
      $\FC^{R}$   & $\LU^{R}$ 
        & $3k^2$ & $3T$ & this paper \\ \hline
              $\FC^{A}$ & $\FC^{A_{CM}}$ 
        & $7k$ & $4T$ & this paper \\ \hline
              $\FC^{A_{LC}}$ & $\FC^{S}$ 
        & $7k$  & $4T$ & this paper \\ \hline
    \end{tabular}
  \end{center}

  \label{tab:sim-results}
\end{table}
}
 The contributions of this paper are as follows.
\begin{itemize}
  \item \textbf{Complete solvability map:}
        We determine all computational relationships among the 13 canonical models for two robots, including all strict separations, equivalences, and orthogonal (incomparable) pairs.

  \item \textbf{Simulator efficiency:}
        We evaluate the efficiency of simulators that realize stronger models within weaker ones, in terms of the number of colors required and the number of epochs needed to complete the simulated algorithm.
        This quantitative analysis complements the qualitative solvability map, providing a finer-grained understanding of model power. Table~\ref{tab:sim-results} shows the previous results and this paper's improvement, where the simulated algorithm has $k$ colors and finishes within $T$ epochs\footnote{These results are obtained by restricting the simulation framework to two robots. 
Moreover, the simulation of $\LU^R$ by $\FC^R$ is constructed by combining the simulation of $\LU^S$ by $\FC^R$ with that of $\LU^R$ by $\LU^S$, requiring $4\times 64k^2=256k^2$ colors and $8\times7=56$ epochs.}. By restricting the setting to two robots, both the number of colors and the number of epochs can be significantly improved.
  \item \textbf{Separation between $CM$-{\bf atomic} \ASY\ and general \ASY:}
  We prove that the $CM$-{\bf atomic} \ASY, where each robot executes its consecutive $\Compute$ and $\M$ operations atomically, is strictly stronger than the general \ASY.
      No previous work has established such a separation for two robots.
      To achieve this, we introduce a new problem specifically designed to distinguish these two models.
      This result shows that atomicity in the \Look–\Compute–\Move\ cycle can strictly enhance the computational power of asynchronous robots, even under fair activation.

        
        \item \textbf{Collapse of $\FS^F$ and $\LU^F$, and simulation-free direct construction:}
While it was already known that $\FC^F$ and $\LU^F$ coincide for general $n$-robot systems~\cite{BFKPSW22}, our result further shows that, when only two robots exist, $\FS^F$ and $\LU^F$ also coincide under full synchrony.
This means that, in the two-robot setting, full synchrony can completely substitute both memory and communication—a surprising and conceptually important finding.
Moreover, the equivalence between $\FC$ ($\FS$) and $\LU$ under \FSY\ is \emph{constructive}:
an algorithm for $\FC$ or $\FS$ can be directly obtained from the corresponding $\LU$ algorithm without increasing the number of colors or computation time.


  \item \textbf{Orthogonality of $\FC$ and $\FS$:}
        We newly identify a problem solvable in the weakest communication model but impossible even in the strongest finite-state model, complementing the previously known reverse direction.
        This establishes a true \emph{orthogonality} between memory and communication capabilities for two robots.

  \item \textbf{Intrinsic difficulty of two-robot systems:}
        The two-robot case accentuates the role of disorientation, activation symmetry, and move atomicity.
        Tasks that are trivial for three or more robots (e.g., rotational expansion or convergence) become nontrivial or even unsolvable, revealing the delicate interplay between synchrony and minimal multiplicity.
\end{itemize}

These findings provide the first complete and exact landscape of computational power for two autonomous robots---closing a long-standing gap in the theory of mobile robot computation and establishing new foundations for understanding minimal distributed systems.



\section{Preliminaries}
\label{sec:preliminaries} \subsection{Robots}

We consider a set $R_n = \{ r_0, \ldots, r_{n-1} \}$ of $n > 1$ mobile computational entities, called {\em robots}, which move and operate in the Euclidean plane $\mathbb{R}^2$, where they are viewed as points. In this paper, we consider the case that $n=2$ and the two robots are denoted $a$ and $b$. 
We simply write $R$ instead of $R_2$.
The robots are {\em autonomous} (i.e., they operate without central control or external supervision),  
{\em identical} (i.e., they are indistinguishable by appearance and do not have unique identifiers), and  
{\em homogeneous} (i.e., they execute the same program).  
Throughout this paper, we denote the position of a robot $r \in R$ at time $t \in \mathbb{R}_{\ge 0}$ by $r(t) \in \mathbb{R}^2$.

Each robot is equipped with a local coordinate system (in which it is always at its origin), and it is able
to observe the positions of the other robots in its local coordinate system.
The robots are {\em variable disoriented} (i.e., that is, there might not be consistency between the coordinate systems of different robots at the same time, or of the same robot at different times \footnote{The disorientation is said to be {\em self-consistency} if it is assumed that each local coordinate system always remains the same.}).
The robots however have {\em chirality}; that is, they agree on the same circular orientation of the plane (e.g., ``clockwise'' direction). 

A robot is endowed with motorial and computational capabilities. When active, a robot performs a $\LK$-$\Compute$-$\M{}$ ($\LCM$) cycle:
\begin{enumerate}
\item $\LK$: The robot obtains an instantaneous snapshot of the positions occupied by the robots, expressed in its local coordinate system. It cannot detect whether multiple robots are at the same location, nor how many (i.e., there is no {\em multiplicity detection}).
\item $\Compute$: The robot executes its built-in deterministic algorithm, identical for all robots, using the snapshot as input. The result of the computation is a destination point.
\item $\M$: The robot moves continuously in a straight line until it reaches the computed destination (i.e., movements are {\em rigid}).\footnote{
Movements are said to be \emph{non-rigid} if they may be interrupted by the adversary. 
}
If the destination is the current location, the robot stays still.
\end{enumerate}

In the standard model, $\OB$, the robots are also {\em silent}: 
they have no means of direct
communication of information to other robots; 
furthermore, they are {\em oblivious}: at the start of a cycle, a robot has no
memory of observations and computations performed in previous cycles.

In the other common model, $\LUM$,
each robot $r$ is equipped with a persistent register $\clight[r]$, called a {\em light},
whose value—referred to as its {\em color}—is drawn from a constant-sized set $C$ and is visible to all robots.
The color can be set by $r$ at the $\Compute$ operation, and it is not automatically reset at the end of a cycle.
In $\LUM$, the $\LK$ operation returns a colored snapshot; that is, it returns the set of distinct pairs $(\textit{position}, \textit{color})$
representing the states of the other robots at that time.
Note that if $|C|=1$, this case corresponds to the $\OB$ model.



 Two submodels of $\LU$ have been defined and investigated, $\FS$ and $\FC$, each offering only one
 of its two capabilities, persistent memory and 
direct means of communication, respectively.  
In $\FS$, a robot can only see the color of its own lights; thus, the color merely encodes an internal state. Therefore, robots are {\em silent}, as in $\OB$, but they are {\em finite-state}. 
In $\FC$, a robot can only see the colors of the lights of the other robots;
thus, it can communicate the color of its own lights to others,
but it cannot remember its own state (i.e., its own $k$ colors).
Hence, robots are enabled with {\em  finite-communication}  but are {\em oblivious}. 


\subsection{Schedulers,  Events}

In this subsection, we define several schedulers---\FSY, \RSY, \SSY, \ASY, 
and several subclasses of \ASY
---in terms of the activation schedule of the robots and the duration of their $\LCM$ cycles.  
These schedulers impose different constraints on the adversary, as we shall see,  
but they all share a common constraint called \emph{fairness}:  
every robot must perform the $\LK$, $\Compute$, and $\M$ operations infinitely often.

The time it takes to complete a cycle is assumed to be finite and the operations $\LK$
 and  $\Compute$  are  assumed to be instantaneous. 

We assume that each of the $\LK$, $\Compute$, and $\M$ operations completes in finite time.  
Moreover, the $\LK$ and $\Compute$ operations are assumed to be instantaneous.  
In the literature, the $\Compute$ operation has also been modeled as having some nonzero duration,  
at the end of which a robot changes its color (in the $\LU$ and $\FC$ models).  
However, we can assume without loss of generality that the $\Compute$ operation is instantaneous:  
since the robot’s color always changes exactly at the end of the $\Compute$ operation,  
its duration has no effect on the subsequent execution.

For any robot $r \in R$ and $i \in \mathbb{Z}_{> 0}$, let  
$t_L(r,i)$, $t_C(r,i)$, $t_B(r,i)$, and $t_E(r,i) \in \mathbb{R}_{\ge 0}$  
denote the times at which $r$ performs its $i$-th Look, Compute, Move-begin, and Move-end operations, respectively.  
These satisfy:  
$t_L(r,i) < t_C(r,i) < t_B(r,i) < t_E(r,i) < t_L(r,i+1)$.  
If $r$ decides to move from $p_B$ to $p_E$ at $t_C(r,i)$, it moves continuously along $[p_B, p_E]$ during $[t_B(r,i), t_E(r,i)]$, with $r(t_B(r,i)) = p_B$, $r(t_E(r,i)) = p_E$, and variable speed: for any $t_1 < t_2$ in $[t_B(r,i), t_E(r,i)]$,  
$|p_B - r(t_1)| < |p_E - r(t_2)|$ if $p_B \ne p_E$.

In the $\LU$ and $\FC$ models, if $r$ changes its light from $c_1$ to $c_2$ at time $t$ and another robot $s$ performs Compute at $t$, then $s$ observes $c_2$.





In the the {\em synchronous} setting (\SSY), also called {\em semi-synchronous} and first studied in \cite{SY},  time is divided into discrete
intervals, called {\em rounds}; in each round, a non-empty set of robots is activated and they simultaneously perform
a single  $\LK$-$\Compute$-$\M$ cycle in perfect synchronization. 
The particular  synchronous setting, where every robot is activated in every round
is called {\em fully-synchronous} (\FSY).

We define \RSY\ as the semi-synchronous scheduler in which,
after an optional finite prefix of fully synchronous rounds,
the remaining rounds activate non-empty subsets of robots 
with the constraint that any two consecutive subsets are disjoint. In the case of $n=2$, the phase is a (prefix of) schedule in round-robin (\RR) where $a$ and $b$ appear alternately. 


In the {\em asynchronous} setting (\ASY), first studied in \cite{FPSW99}, we do not have any assumption---except for the fairness mentioned before---on the timing of each $\LK$, $\Compute$, and $\M$ operation and the duration of each $\M$ operation. 


In the rest of this subsection, we introduce subclasses of \ASY,  
classified according to the level of atomicity of the $\LK$, $\Compute$, and $\M$ operations.

\begin{itemize}
\item $\LC$-{\bf atomic}-\ASY:  
Under this scheduler, each robot performs its consecutive $\LK$ and $\Compute$ operations atomically;  
that is, no robot obtains a snapshot during the interval between the $\LK$ and $\Compute$ operations of any other robot~\cite{DKKOW19,OWD}.  
Formally, this scheduler satisfies the following condition:  
$$ \forall r, s \in R,\ \forall i, j \in \mathbb{Z}_{> 0}: \ t_L(r,i) \notin (t_L(s,j), t_C(s,j)]. $$
Note that the interval $(t_L(s,j), t_C(s,j)]$ is left-open; hence, this scheduler may allow two or more robots to obtain snapshots simultaneously.
\item \textbf{$M$-atomic-\ASY}:  
Under this scheduler, each robot performs its $\M$ operation atomically; that is, no robot obtains a snapshot while any other robot is performing its $\M$ operation~\cite{DKKOW19,OWD}.  
Formally, this scheduler satisfies the following condition:  
$$ \forall r, s \in R,\ \forall i, j \in \mathbb{Z}_{> 0}: \ t_L(r,i) \notin [t_B(s,j), t_E(s,j)]. $$

\item $\CM$-{\bf atomic}-\ASY:  
Under this scheduler, each robot performs its consecutive $\Compute$ and $\M$ operations atomically;  
that is, no robot obtains a snapshot during the interval between the $\Compute$ and $\M$ operations of any other robot. 
Formally, this scheduler satisfies the following condition:  
$$ \forall r, s \in R,\ \forall i, j \in \mathbb{Z}_{> 0}: \ t_L(r,i) \notin [t_C(s,j), t_E(s,j)]. $$
In the $\FS$ and $\OB$ models, a robot cannot observe the peer’s light between its Compute and Move-Begin phases; therefore, $M$-atomic-\ASY\ and  $\CM$-{\bf atomic}-\ASY\ are equivalent trivially.
In contrast, in the $\FC$ models, it has been shown that $M$-atomic-\ASY\ and  $\CM$-{\bf atomic}-\ASY\ are equivalent~\cite{FSSW23}. 
Hence, in the following, we therefore focus only on the  $\CM$-{\bf atomic} case.

\item $\LCM$-{\bf atomic}-\ASY:
Under this scheduler, each robot performs its consecutive $\LK$, $\Compute$, and $\M$ operations atomically; that is, no robot obtains a snapshot during the interval between the $\LK$, $\Compute$, $\M$ operations of any other robot.  It can be proved that \SSY\ and $\LCM$-{\bf atomic}-\ASY\ are equivalent\cite{FSSW23}.
\end{itemize}








 In the following, for simplicity of notation, we shall use the symbols 
 ${F}$, ${R}$, ${S}$, ${A}$, 
 ${A_{CM}}$, and ${A_{LC}}$ to 
 denote the schedulers \FSY, \RSY,
\SSY, \ASY, 
$\CM$-{\bf atomic}-\ASY, and $\LC$-{\bf atomic}-\ASY, respectively.

Throughout the paper, we assume variable disorientation, chirality and rigidity, unless explicitly stated otherwise.

We define an \emph{epoch} as one minimal time interval in which every robot
completes at least one \Move (possibly of zero length) since the previous boundary.
Intuitively, each epoch corresponds to the progress of all robots through one
complete \LCM cycle.
The formal definition based on Move-end counters is given Appendix~\ref{app:epoch}.
Using this notion of epoch, we evaluate the efficiency of the simulator by measuring how many simulator epochs are required to reproduce one epoch of the original algorithm.


\subsection{Problems and Computational Relationships}
 Let ${\cal M} = \{\LU, \FC,\FS,\OB \}$  be the set of  models under investigation and 
${\cal S}= \{ F, R, S, A, A_{LC}, 
A_{CM}\}$ be the set of  schedulers 
under consideration.


A \emph{configuration} $\calC (t)$ of $2$ robots at time $t$ is a function $\gamma: \calR_2 \times \mathbb{Z}_{> 0} \to \mathbb{R}^2 \times C$ that specifies the location and color of each robot at time $t$.  
Each location is given with respect to a fixed global coordinate system (which is unknown to the robots).  
Recall that the color set $C$ is a singleton (i.e., $|C| = 1$) in the $\OB$ model; in this case, the colors carry no information.  
A configuration that specifies only the locations (resp. colors) of the robots
is called a \emph{geometric (resp. color) configuration}; it is defined as a function $\gamma_1: \calR_2 \times \mathbb{Z}_{> 0} \to \mathbb{R}^2$ (resp. $\gamma_2: \calR_2 \times \mathbb{Z}_{> 0} \to C$).
In the following, for robot $r$, we denote $\gamma_1(r,t)$ and $\gamma_2(r,t)$ simply as $r(t)$ and $\mathrm{color}_r(t)$, respectively. 

In the following, to formally define the problem, we introduce predicates that describe the robots’ actions—whether they are stationary or moving—at time $t$;

(1) $Stop$ and $Moving$:
\[
\Stop_r(t)\;:\;\exists \varepsilon>0\;\forall \tau\in(t-\varepsilon,t+\varepsilon):\ r(\tau)=r(t).
\quad\]
\[
\Move_r(t)\;:\;\exists \varepsilon>0\;\exists \tau,\tau'\in(t-\varepsilon,t+\varepsilon)\;:\;r(\tau)\neq r(\tau').
\]

 (2) Moving-time set and number of moves

\[
\MoveSet{r}\;:=\;\{\,t\ge 0\mid \Move_r(t)\,\}.\]
\text{Since }$\MoveSet{r}$\text{ is open, it decomposes into pairwise-disjoint open intervals}
\[
\MoveSet{r}=\bigsqcup_{k=1}^{N_r} I_r^{(k)}
\quad (I_r^{(k)}\text{ open},\ N_r\in\mathbb{N}\cup\{0,\infty\}).
\]
\[
\moves{r}\;:=\;N_r
\quad\text{(assuming finiteness when required)}.
\]

A problem to be solved (or task to be performed) is described by a set of 
{\em temporal geometric predicates} 
 which implicitly define the {\em valid}  initial, intermediate, and (if existing) 
terminal. 
A terminal configuration is one in which, once reached, the robots no longer move. 
placements, 
as well as restrictions (if any) on the size  $n$ of the set $R$ of robots. 

Given a model $M \in {\cal M}$ and  a scheduler $K\in  {\cal S}$, we denote by
$M(K)$,
 the set of problems solvable 
by robots in $M$ 
under adversarial scheduler $K$.
Let $M_1, M_2\in{\cal M}$ and $K_1, K_2\in{\cal S}$.
\begin{itemize} 
\item We say that  model $M_1$  under scheduler  $K_1$
is {\em computationally not less powerful than} 
model $M_2$ under $K_2$, denoted by
$M_{1}^{K_1} \geq M_{2}^{K_2}$,
 if $M_{1}(K_1) \supseteq M_{2}(K_2)$.

\item We say that 
 $M_1$  under  $K_1$
is {\em computationally more powerful than} 
$M_2$ under $K_2$,  
denoted by
$M_{1}^{K_1} >  M_{2}^{K_2}$,
if 
$M_{1}^{K_1} \geq M_{2}^{K_2}$ 
and
$(M_{1}(K_1) \setminus M_{2}(K_2))  \neq \emptyset$.

\item We say that  $M_1$  under  $K_1$
and  $M_2$  under  $K_2$, are {\em computationally equivalent }, denoted by  
$M_{1}^{K_1} \equiv  M_{2}^{K_2}$,
if $M_{1}^{K_1} \geq M_{2}^{K_2}$ and $M_{2}^{K_2} \geq M_{1}^{K_1}$.

\item Finally, we say that 
$K_1$
$K_2$, are {\em computationally orthogonal} (or {\em incomparable}), denoted by  
$M_{1}^{K_1} \bot  M_{2}^{K_2}$, 
if 
 $(M_{1}(K_1) \setminus M_{2}(K_2))  \neq \emptyset$
 and 
 $(M_{2}(K_2) \setminus M_{1}(K_1))  \neq \emptyset$.

 \end{itemize}
\section{Equivalence Between Models}

\subsection{\texorpdfstring{$\LU^R \equiv \LU^A$}{LUALUR}}
This equivalence can be obtained by combining \( \LU^A\equiv\LU^S \)~\cite{DFPSY} with \( \LU^S\equiv\LU^R \)~\cite{FSW19}; however, when restricted to two robots, \cite{NTW24} shows that \( \LU^R \) can be simulated directly in \( \LU^A \), achieving it with three colors in \(6\) epochs. Moreover, this number of colors is optimal~\cite{NTW24}.

\begin{theorem}\label{Sim-LUMIR-by-LUMIA}{\em \cite{DFPSY,FSW19,NTW24}}
$\LU^{R} \equiv \LU^{A}$.
\end{theorem}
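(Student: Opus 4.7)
The plan is to split the equivalence $\LU^R \equiv \LU^A$ into its two defining inclusions and argue each separately.

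One direction, $\LU^R \geq \LU^A$, is immediate from the definitions: every execution admitted by the \RSY\ scheduler can be realised as an \ASY\ execution (in which the robots simply happen to wake up simultaneously and complete their operations in lockstep), so any algorithm that solves a task $P$ against every \ASY\ adversary in particular solves $P$ against every \RSY\ adversary. This gives $\LU(A)\subseteq \LU(R)$, which is exactly $\LU^R \geq \LU^A$.

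The substantive direction $\LU^A \geq \LU^R$ requires simulating an arbitrary $\LU^R$-algorithm inside $\LU^A$. A first, generic approach is to compose two already-known simulators: the simulation of $\LU^R$ by $\LU^S$ from~\cite{FSW19} with the classical simulation of $\LU^S$ by $\LU^A$ from~\cite{DFPSY}. Both simulators use the visible lights as phase tags: a robot launches a new simulated round only after observing on its peer a value certifying that the peer has already completed the previous simulated round, so that the emergent behaviour becomes indistinguishable from a synchronous, and then a round-robin, activation pattern.

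For the specific two-robot setting of this paper I would instead invoke the sharper construction of~\cite{NTW24}, which bypasses the intermediate \SSY\ layer and simulates $\LU^R$ directly inside $\LU^A$ using only three colors with an overhead of six epochs per simulated epoch; the same paper shows that three colors is optimal. The part I expect to be most delicate---and where the construction does its real work---is enforcing the disjointness condition of \RSY: the two robots must strictly alternate across consecutive simulated rounds, even though the \ASY\ adversary is free to delay one robot's \Move\ arbitrarily. This is achieved by a handshake in which, after completing a simulated round, a robot advances its light to a designated ``waiting'' color and refuses to begin its next simulated round until it observes on the peer a tag certifying that the peer has already taken its turn; fairness of \ASY\ then guarantees eventual progress. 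Combining this simulation with the trivial direction yields the claimed equivalence.
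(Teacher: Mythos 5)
Your argument is correct and matches the paper's treatment exactly: the paper also obtains $\LU^{R}\geq\LU^{A}$ trivially from scheduler containment and the converse either by composing the known equivalences $\LU^{A}\equiv\LU^{S}$~\cite{DFPSY} and $\LU^{S}\equiv\LU^{R}$~\cite{FSW19}, or by invoking the direct two-robot simulator of~\cite{NTW24} with three colors and six epochs per simulated epoch (with three colors shown optimal there). The paper offers no new proof of this theorem beyond these citations, so your proposal is essentially identical to its justification.
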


\subsection{\texorpdfstring{$\LU^F \equiv \FC^F \equiv \FS^F$}{LUFFCF} }

It is shown in \cite{FSW19} the equivalence $\LU^F \equiv \FC^F$ for general $n \ge 2$,
and thus it also holds for our case $n=2$.
The equivalence was established by a simulation using additional colors; that is, if an algorithm solves a problem $P$ with $k$ colors in $\LU^F$,
then there exists another algorithm that solves $P$ with $2k^2$ colors.
In contrast, the equivalence $\LU^F \equiv \FS^F$ does not hold for general $n \ge 2$,
as there exists a separator between $\LU^F$ and $\FS^F${~\cite{apdcm}}.
In this subsection, we show that no such separator exists when restricted to the case $n=2$;
that is, we prove $\LU^F \equiv \FS^F$.
Our simulation from $\FS^F$ to $\LU^F$ requires no additional colors,
and the same holds for the simulation from $\FC^F$ to $\LU^F$.


\begin{lemma}
\label{lemma:no-cost-sim}
Let $P$ be a problem and $\calA$ an algorithm that solves $P$ with $k$ colors within $T$ epochs in $\LU^F$.
Then there exists an algorithm $\calA_S$ (resp.~$\calA_C$) that solves $P$ with $k$ colors within $T$ epochs under $\FS^F$ (resp.~$\FC^F$).
\end{lemma}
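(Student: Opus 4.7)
The plan is to construct $\calA_S$ and $\calA_C$ directly from $\calA$ by replacing the unobservable color input with the observable one, and to establish correctness by maintaining the invariant $\mathrm{color}_a(t) = \mathrm{color}_b(t)$ at every round of the execution. Because $n=2$ and the scheduler is \FSY, both robots act in perfect lockstep, and this one-sided substitution is exactly what lets a $\FS^F$ robot recover the missing peer-color from its own, and a $\FC^F$ robot recover the missing own-color from the visible peer-color.

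Concretely, I define $\calA_S(c_r, \mathrm{view}) := \calA(c_r, c_r, \mathrm{view})$, so that the $\FS^F$ robot plugs its own visible color into both color slots of $\calA$; symmetrically, I set $\calA_C(c_{\bar r}, \mathrm{view}) := \calA(c_{\bar r}, c_{\bar r}, \mathrm{view})$, so that the $\FC^F$ robot plugs the visible peer-color into both slots. Neither construction introduces any new color, and each round of the simulator triggers exactly one round of $\calA$, so the bounds of $k$ colors and $T$ epochs pass through verbatim provided the simulation is faithful.

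I then prove faithfulness by induction on the round number. For the base case I appeal to anonymity: since the problem specification is insensitive to the labeling of the two identical copies, I may assume $\mathrm{color}_a(0) = \mathrm{color}_b(0)$. For the inductive step, assume both robots carry a common color $c$ at round $t$; under \FSY\ they compute in parallel, producing the new colors $\calA_c(c, c, \mathrm{view}_a)$ and $\calA_c(c, c, \mathrm{view}_b)$, where $\calA_c$ denotes the color component of $\calA$'s output. For $n=2$ each view is just the peer's position at a common distance $d$ in the corresponding local frame, so the two views differ only by a rotation induced by the two robots' independent choices of frames. Since $\calA$ must solve $P$ under every such choice of frames at every round (variable disorientation), its color-update rule on equal-color inputs has to be invariant under rotation of the view; granting this, the new colors coincide and the invariant is preserved. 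Along the $c_a = c_b$ trajectory the simulator's execution then matches a valid execution of $\calA$, and therefore solves $P$ within $T$ epochs.

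The hard part will be making the rotational-invariance step rigorous. A general $\LU^F$ algorithm is a raw function on the local-frame view, and in principle its color output on equal-color inputs could depend on the direction of the peer. My plan is to argue that $\calA$ may be replaced, without loss of generality, by a symmetrized algorithm whose color rule on equal-color inputs depends only on the scalar peer-distance, and to verify separately that this symmetrized rule still solves $P$ along the $c_a = c_b$ diagonal (using the fact that an adversary rotating one robot's frame would otherwise force $\calA$ into two distinct colorings on what is geometrically the same configuration). Care is needed to symmetrize only the color component: destinations legitimately depend on direction and must be left alone. The clean separation of the color update from the movement update is therefore the crux, and is where the specific structure of the two-robot case---the peer's direction being the only geometric degree of freedom---plays the essential role.
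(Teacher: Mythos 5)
You have correctly located the crux: for a raw $\LU^F$ algorithm the color update may depend on where the peer appears in the local frame, and since the two robots' frames are chosen independently, the lockstep--equal--colors invariant does not follow for free. But your proposed repair does not close the gap. First, your parenthetical justification --- that $\calA$ ``has to be'' rotation-invariant on equal-color inputs because otherwise the adversary forces two distinct colorings of the same geometric configuration --- is not a contradiction: in $\LU^F$ each robot sees the peer's color, so an algorithm whose colors diverge across frames can still be perfectly correct; the divergence is a problem only for your simulation, not for $\calA$ itself. Second, symmetrizing \emph{only} the color component while computing the destination from the un-normalized view produces hybrid steps (color derived from one snapshot, destination from another) that are not steps of $\calA$ under \emph{any} single admissible adversary, so you cannot inherit correctness of the modified algorithm from that of $\calA$; this is exactly the ``verify separately'' step you flag, and it does not go through as planned. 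Third, letting the symmetrized color rule depend on the scalar peer-distance is still unsafe, since variable disorientation also allows the two robots' local units of length to differ.

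The paper closes the gap with a move you almost reach: since the adversary may choose each local coordinate system so that the peer always appears at the fixed point $(0,1)$, one defines $\calA'$ to normalize its \emph{entire} snapshot to that canonical form, run $\calA$ there, and interpret the output accordingly. Every execution of $\calA'$ is then literally an execution of $\calA$ under an admissible (normalizing) adversary, so $\calA'$ solves $P$ with the same $k$ colors and $T$ epochs; and $\calA'$'s color update is geometry-independent by construction, so under \FSY\ with equal initial colors the two colors coincide forever. That invariant is exactly what licenses your substitution $\calA(c,c,\cdot)$ in $\FS^F$ and $\FC^F$, with no separate correctness argument for a symmetrized rule.
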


\begin{proof}
Suppose that algorithm $\calA$ solves $P$ under $\LU^{F}$.
The adversary may exploit the variable disorientation setting
so that, at any time, each robot observes its partner at point $(0,1)$.
This implies that there exists an algorithm $\calA'$ such that,
at every moment, each robot computes its destination and next color
as if it always observes a snapshot in which its peer is located at point $(0,1)$.
During the execution of $\calA'$, the two robots always have the same color
because of the synchronicity of \FSY.
Therefore, the execution of $\calA'$ under $\LU^{F}$
can be simulated in $\FS^{F}$ and $\FC^{F}$, since
in $\FS^{F}$ (resp.~$\FC^{F}$)
each robot can determine
the color of its peer (resp.~its own color)---it is identical to its own color (resp.~the color of its peer).

\end{proof}

\begin{theorem}\label{equ-FCOMF-by-FSTAF}
$\LU^{F} \equiv \FC^{F} \equiv \FS^{F}$.
\end{theorem}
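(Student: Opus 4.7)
The plan is to assemble the theorem from two pieces: (i) the trivial containments $\LU^F \geq \FS^F$ and $\LU^F \geq \FC^F$, which hold by definition of the models, and (ii) the reverse containments $\FS^F \geq \LU^F$ and $\FC^F \geq \LU^F$, which are exactly the content of Lemma \ref{lemma:no-cost-sim}.

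For the first piece, I would point out that any algorithm for $\FS^F$ (respectively $\FC^F$) can be executed verbatim in $\LU^F$: a robot in $\LU$ sees both its own color (as in $\FS$) and the color of its peer (as in $\FC$), so it can simulate the restricted visibility by simply ignoring the unavailable component. Hence $\LU(F) \supseteq \FS(F)$ and $\LU(F) \supseteq \FC(F)$, giving $\LU^F \geq \FS^F$ and $\LU^F \geq \FC^F$. This observation is one or two sentences and requires no construction.

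For the second piece, Lemma \ref{lemma:no-cost-sim} already states that if a problem $P$ is solvable in $\LU^F$ with $k$ colors in $T$ epochs, then it is also solvable in both $\FS^F$ and $\FC^F$ with the same number of colors and epochs. This directly yields $\FS(F) \supseteq \LU(F)$ and $\FC(F) \supseteq \LU(F)$, i.e., $\FS^F \geq \LU^F$ and $\FC^F \geq \LU^F$.

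Combining the two directions gives $\LU^F \equiv \FS^F$ and $\LU^F \equiv \FC^F$, and by transitivity of $\equiv$ we conclude $\LU^F \equiv \FC^F \equiv \FS^F$. There is no real obstacle here, since the nontrivial direction (the simulation from $\LU^F$ to the weaker models at no cost) is handled by the preceding lemma, whose key insight is that variable disorientation together with full synchrony forces the two robots to maintain identical colors throughout the execution; the theorem itself is therefore an immediate corollary.
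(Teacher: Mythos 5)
Your proposal is correct and matches the paper's route exactly: the paper states Theorem~\ref{equ-FCOMF-by-FSTAF} as an immediate consequence of Lemma~\ref{lemma:no-cost-sim} together with the trivial containments $\LU^F \geq \FS^F$ and $\LU^F \geq \FC^F$ (an $\LU$ robot simply ignores the color component unavailable in the submodel). Nothing is missing.
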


\subsection{\texorpdfstring{$\LU^R \equiv \FC^R$}{LURFSR}}\label{sec:LUReqFCR}


The equivalence $\LU^R \equiv \FC^R$ has previously been established via an $\FC^R$-simulation of $\LU^S$~\cite{BFKPSW22} together with an $\LU^S$-simulation of $\LU^R$~\cite{BFKPSW22,NTW24}. 
In this subsection, restricting attention to two robots, we prove the same equivalence by \emph{directly} simulating $\LU^R$ in $\FC^R$, without passing through $\LU^S$.


\paragraph*{Protocol Description}







\begin{figure}[t]


\centering\includegraphics[keepaspectratio, width=10cm]{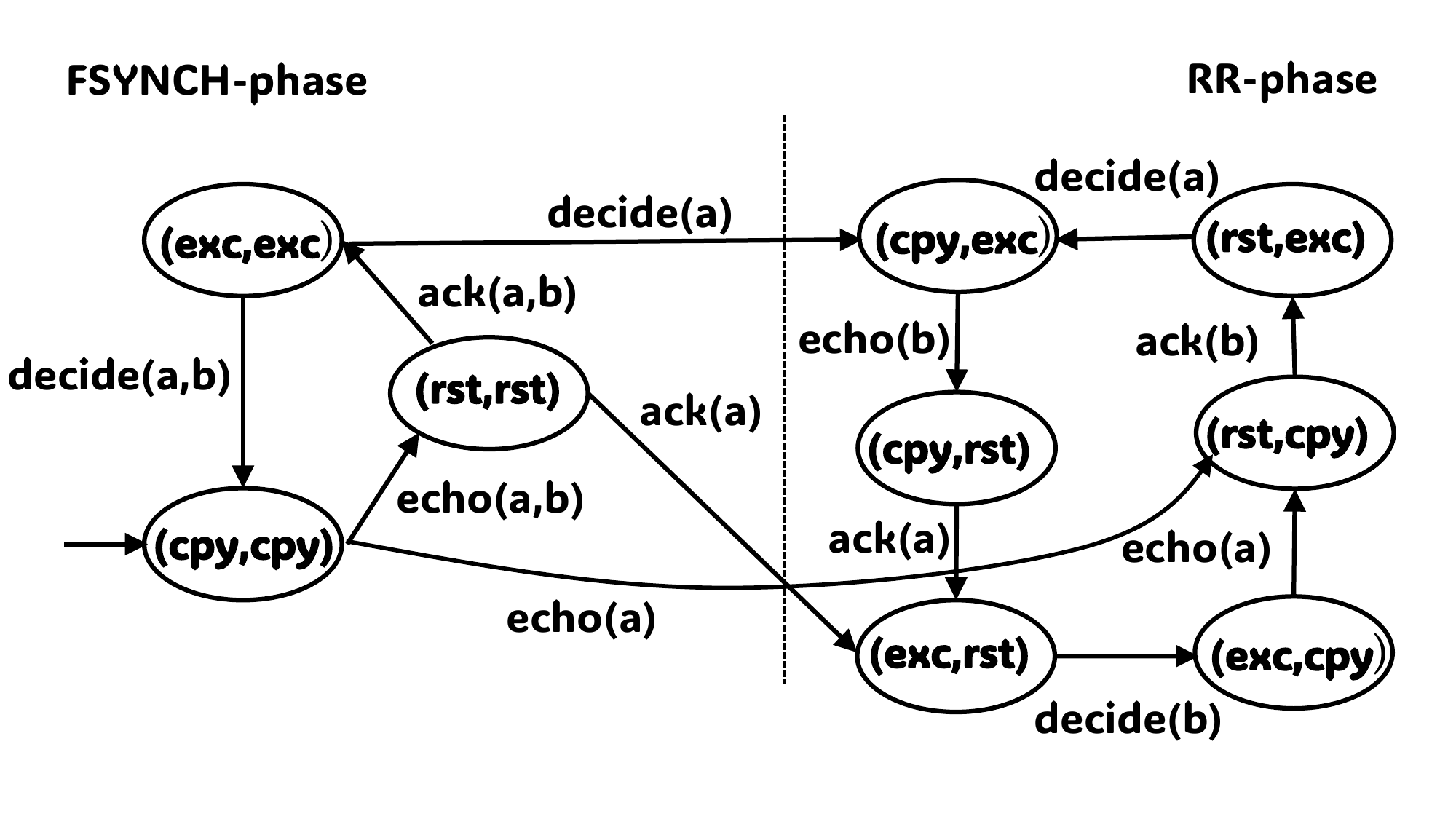}


\caption{Global phase pairs. Left: \FSY; right: \RR.}\label{fig:transition-diagram-SIM}

\end{figure}


We simulate any two-robot algorithm $\mathcal{A}$ written for
$\LU^R$ inside $\FC^R$
using three lights per robot $r$ and the other robot is denoted by $peer$:
\begin{itemize}
  \item \texttt{r.my.light} — the light value that $\mathcal{A}$ exposes for $r$,
  \item \texttt{r.your.light} — the last observed light of the peer (used as input to $\mathcal{A}$),
  \item \texttt{r.phase} $\in \{\textsf{exc},\textsf{cpy},\textsf{rst}\}$\footnote{The name \textsf{exc},\textsf{rst}, and\textsf{cpy} mean $execute$, $reset$, and $copy$.} — a short handshake for turn delimiting, initially set to {\textsf{cpy}}.
\end{itemize}
Under $\mathrm{RSYNCH}$, activations strictly alternate (optionally simultaneous first activations).
On each activation, the active robot executes:
\begin{itemize}
  \item \textbf{If} \texttt{peer.phase = exc} \textbf{(decision turn):}
        run $\mathcal{A}$ on the snapshot
        \[
          (\text{positions},\ L_{\text{self}},\ L_{\text{peer}})
          \quad\text{with}\quad
          L_{\text{self}} := \texttt{peer.your.light},\quad
          L_{\text{peer}} := \texttt{peer.my.light},
        \]
        set the returned \texttt{r.my.light} and destination \texttt{r.des};
        then set \texttt{r.phase} $\leftarrow$ \textsf{cpy}.
          \item \textbf{If} \texttt{peer.phase = cpy} \textbf{(peer just decided):}
        set \texttt{r.your.light} $\leftarrow$ \texttt{peer.my.light} \ (\emph{echo})\ and
        \texttt{r.phase} $\leftarrow$ \textsf{rst}.
  \item \textbf{If} \texttt{peer.phase = rst} \textbf{(peer just copied):}
        set \texttt{r.phase} $\leftarrow$ \textsf{exc} (acknowledge).
\end{itemize}

In \emph{Move}, the robot moves rigidly to \texttt{r.des} (defaulted to \texttt{r.pos} at the start of \emph{Compute}).

The transition diagram of state transitions is presented in Fig.~\ref{fig:transition-diagram-SIM}, where when changing from \FSY\ to \RR\, we may assume that the robot $a$ is the one that becomes activated. Note that since the initial value of $r.phase$ is \textsf{cpy}, the robot $r$ that is activated first copies $peer.my.light$ into $r.your.light$, thereby enabling the execution of $\LU$ in the next decision turn.

When the execution starts in the \FSY\ mode (the left side of Fig.~\ref{fig:transition-diagram-SIM}), the robot $r$ copies  \texttt{peer.my.light} to \texttt{r.your.light}, move to the executable state, and repeatedly perform $\mathcal{A}$.
Since each robot’s \texttt{r.your.light} is correctly updated, the behavior of $\LU$ can be faithfully simulated.

When switching to the \RR\ phase, regardless of the state from which the transition occurs, the correct light values have already been updated, and from that point, through alternating activations, each robot can correctly execute the $\LU$ algorithm (the right side of Fig.~\ref{fig:transition-diagram-SIM}).


The simulator maintains three public lights per robot to coordinate the phases
of $\mathcal{A}$’s atomic steps under \RSY\ alternation.
Each robot’s phase variable cycles as
$ \textsf{exc}\!\to\!\textsf{rst}\!\to\!\textsf{cpy}\!\to\!\textsf{exc}$,
so that exactly one robot performs a \emph{decision turn} at a time.
During this turn, the robot runs $\mathcal{A}$, publishes its new light,
and moves rigidly while the peer is inactive.
Hence, every decision turn corresponds to one atomic step of $\mathcal{A}$ in $\LU^R$.
The following two facts make the correctness immediate from the transition diagram
(Fig.~\ref{fig:transition-diagram-SIM}):

\begin{itemize}
  \item \textbf{Atomic visibility.}  A robot’s light update is always observed by the peer
        only \emph{after} the move is completed, exactly as assumed in $\LU^R$.
  \item \textbf{Phase discipline.}  Each cycle of three phase changes
        ($\exc\!\to\!\rst\!\to\!\cpy\!\to\!\exc$) realigns the two robots and
        guarantees that a new decision turn occurs within three activation.
\end{itemize}

Therefore, the simulator faithfully reproduces the stepwise behavior of
$\mathcal{A}$: one epoch of $\mathcal{A}$ is realized by $3$ epochs in the simulator.
The proof of the invariants and the detailed case analysis appear in Appendix~\ref{app:Sim-LU-by-FC-3cols}.

\begin{lemma}\label{le:Sim-LU-by-FC-3cols}
Any algorithm $\mathcal{A}$ with $k$ colors and within $T$ epochs in $\LU^R$ can be
simulated by a simulator with $3k^2$ colors and within $3T$ epochs in $\FC^R$.
\end{lemma}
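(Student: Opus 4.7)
The plan is to establish correctness by a phase-based simulation invariant and then verify the resource bounds. I would first call an activation of $r$ a \emph{decision turn} exactly when it observes $\texttt{peer.phase} = \exc$; by the phase discipline these are precisely the activations on which the simulator runs $\calA$, publishes a new $\texttt{my.light}$, and issues a rigid move. The core claim to prove is the invariant: at every decision turn of $r$, the fields $\texttt{peer.your.light}$ and $\texttt{peer.my.light}$ hold precisely the values that $r$'s own light and its peer's light would have in the corresponding atomic step of some legitimate $\LU^R$ execution of $\calA$.

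I would prove this invariant by induction along the three-activation cycle $\exc \to \cpy \to \rst \to \exc$ in the \RR\ regime. Under strict alternation exactly one robot acts per round, so a light update by the active robot is seen by the peer on its next activation precisely as if the two robots had executed successive atomic $\LCM$ cycles in $\LU^R$. The $\cpy$ phase is the critical rule: it copies the peer's freshly published $\texttt{my.light}$ into the local $\texttt{your.light}$, thereby re-creating on an oblivious $\FC^R$ robot the persistence that $\LU^R$ grants for free. The $\rst$ phase merely returns the baton, after which the peer's next $\exc$ turn sees a correctly updated pair of light values. The base case follows from the initial setting $\texttt{r.phase} = \cpy$, which forces the first-activated robot to install its peer's initial light before any decision is taken.

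Next I would handle the optional $\FSY$ prefix. In such rounds both robots are activated simultaneously and execute the same branch, so by identical code their phase and light components remain perfectly aligned; the cycle $\exc \to \cpy \to \rst \to \exc$ is traversed in lockstep and simultaneous decision turns arise, faithfully mirroring a single $\FSY$ step of $\calA$. The main obstacle — and the place I would spend most care on — is the handoff from $\FSY$ to strict alternation: the switch may occur at any of the three phases, and I must verify that from whichever shared phase both robots are in, the alternating schedule that follows re-enters a well-formed three-phase cycle without breaking the invariant. I would carry this out by an explicit three-case check on the phase at the moment of transition, showing in each case that the first-activated robot's next decision turn encounters correctly filled $\texttt{your.light}$ and $\texttt{my.light}$ fields, so no spurious $\calA$-step is performed.

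Finally I would count resources. Each robot's externally visible state is the triple $(\texttt{my.light}, \texttt{your.light}, \phase)$, which ranges over a set of size $k \cdot k \cdot 3 = 3k^2$; since $\FC^R$ permits a robot to broadcast one color from an arbitrary finite palette, this yields the stated $3k^2$-color simulator. For the epoch bound, every complete cycle of three phases at robot $r$ in the \RR\ regime is produced by three consecutive activations of $r$, spanning at most three simulator epochs, and each such cycle contains exactly one decision turn corresponding to one epoch of $\calA$. Consequently the simulator reproduces every execution of $\calA$ within $3T$ epochs, completing the proof.
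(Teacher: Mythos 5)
Your proposal is correct and follows essentially the same route as the paper's own proof: the same decision-turn/ack-turn classification, the same light-correctness invariant established by induction over the three-phase handshake, the same separate treatment of the \FSY\ prefix and the three-case analysis at the switch to alternation, and the same $3k^2$-color and $3T$-epoch accounting. No substantive differences or gaps to report.
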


\begin{theorem}\label{th:Sim-LUMIR-by-FCOMR}
$\LU^{R} \equiv \FC^{R}$
\end{theorem}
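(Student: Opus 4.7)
The theorem follows almost immediately from Lemma~\ref{le:Sim-LU-by-FC-3cols}, so the plan is just to package both inclusions of the equivalence. I would split the argument into the two directions $\LU^{R}\ge\FC^{R}$ and $\FC^{R}\ge\LU^{R}$, handle the first in one line, and then invoke the lemma for the second.

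For $\LU^{R}\ge\FC^{R}$, my plan is to observe that $\FC$ is strictly a sub-capability of $\LU$: an $\LU^{R}$ robot sees the colored snapshot $(\text{positions},\ \mathrm{color}_{\text{self}},\ \mathrm{color}_{\text{peer}})$, which strictly contains the $\FC^{R}$ snapshot $(\text{positions},\ \mathrm{color}_{\text{peer}})$. Hence any $\FC^{R}$ algorithm solving a problem $P$ can be run verbatim in $\LU^{R}$ under the same scheduler $R$ by simply discarding the $\mathrm{color}_{\text{self}}$ component of the snapshot before invoking its transition function. This gives $\FC^{R}(K)\subseteq\LU^{R}(K)$ with no change in colors or epochs.

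For $\FC^{R}\ge\LU^{R}$, I would appeal directly to Lemma~\ref{le:Sim-LU-by-FC-3cols}. Given any $P\in\LU^{R}$ witnessed by an algorithm $\mathcal{A}$ using $k$ colors and $T$ epochs, the lemma yields a simulator in $\FC^{R}$ using $3k^{2}$ colors and $3T$ epochs. The one point that needs a sentence of care is that the simulator must preserve $P$'s \emph{temporal geometric predicates}, not merely the sequence of snapshots: this is immediate because the only phases in which the simulator moves are the decision turns, during which the destination $\texttt{r.des}$ is precisely the destination chosen by $\mathcal{A}$, whereas the \textsf{cpy} and \textsf{rst} handshake turns leave $\texttt{r.des}$ at $\texttt{r.pos}$ and therefore produce no motion. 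Thus the geometric trajectory of each simulated robot is exactly that of $\mathcal{A}$ (up to the reparametrization of time induced by the factor-$3$ slowdown), so any predicate on positions that holds for $\mathcal{A}$ holds for the simulator. Hence $P\in\FC^{R}$.

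Combining the two inclusions yields $\LU^{R}\equiv\FC^{R}$. There is no real obstacle to this proof: the substance lives entirely inside Lemma~\ref{le:Sim-LU-by-FC-3cols}, and the theorem is essentially a bookkeeping corollary.
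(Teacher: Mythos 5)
Your proposal is correct and matches the paper's approach: the theorem is stated as a direct consequence of Lemma~\ref{le:Sim-LU-by-FC-3cols} (which gives the nontrivial direction $\FC^{R}\ge\LU^{R}$ via the $3k^2$-color, $3T$-epoch simulator), with the converse inclusion following trivially because $\FC$ is a submodel of $\LU$ whose snapshots can be recovered by discarding the self-color. Your added remark that the handshake turns produce no motion, so the geometric predicates are preserved, is consistent with invariants I1$'$ and I2 in the paper's appendix.
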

\subsection{\texorpdfstring{$\FC^{S} \equiv \FC^{A_{LC}}$}{FCSFCALC} and \texorpdfstring{$\FC^{A_{CM}} \equiv \FC^A$}{FCAMFCA}}

These two equivalences can be achieved by a single simulation algorithm~\cite{FSSW23}.
Here, by exploiting the fact that the system consists of only two robots, we refine the construction into a more efficient algorithm that requires fewer colors and epochs.
\vspace{-0.3cm}

\paragraph*{Overview of the simulator {\tt SIM($\mathcal{A}$)} (for two robots)}

We propose a new simulation protocol {\tt SIM($\mathcal{A}$)} that simulates any two-robot algorithm
$\mathcal{A}$ designed for the $CM$-{\bf atomic} model $\FC^{A_{CM}}$
within the plain asynchronous model $\FC^{A}$.
Unlike the previous general-$n$ construction~\cite{FSSW23}, this simulator exploits the
two-robot symmetry to minimize both the number of epochs and the number of colors.

Each robot maintains three public lights
(\texttt{light}, \texttt{phase}, \texttt{my-state} and \texttt{your-state})
to coordinate a short three-phase handshake:
\textsf{exc} (execute), \textsf{cpy} (copy), and \textsf{rst} (reset).
This handshake serves the same function as the algorithm described in Section~\ref{sec:LUReqFCR}.
Furthermore, to ensure in the \ASY\ setting that each robot’s \emph{Compute--Move} interval remains invisible to its peer, we introduce  control variables \texttt{my-state} and \texttt{your-state}.
After completion of exection of $\mathcal{A}$, a robot sets \texttt{my-state} to \textsf{M}, indicating that it is currently executing the algorithm.
If the peer observes \textsf{M} during a \emph{Look}, it refrains from executing $\mathcal{A}$.
In the \textsf{cpy} phase, each robot $r$ copies the peer’s \texttt{my-state} to \texttt{r.your-state}
so that it can correctly track the peer’s execution status\footnote{Because {\tt SIM($\mathcal{A}$)} simulates $\FC^{A_{CM}}$ under $\FC^{A}$, it is unnecessary to copy the lights used by $\mathcal{A}$}.

A complete simulation cycle consists of at most four epochs,
during which one atomic step of $\mathcal{A}$ is faithfully reproduced.
Thus, each call to $\mathcal{A}$ corresponds to one atomic move in $\FC^{A_M}$,
preserving both correctness and fairness.

The key improvement lies in the reduced color space:
although the simulator uses $3\times2\times2=12$ colors,
only seven effective patterns appear in the transition diagram
(See Fig.~\ref{fig:trans} in Appendix~\ref{app:Pr-th5}),
resulting in a $7k$-color simulator when $\mathcal{A}$ uses $k$ colors.
This represents a significant simplification over previous multi-robot
simulators, both in state complexity and in phase length.
All invariants and the full correctness proof of {\tt SIM($\mathcal{A}$)}
(including $CM$-atomic safety and cycle refinement mapping)
are provided in Appendix~\ref{app:Pr-th5}.

\begin{lemma}\label{le:Sim-FCAM-by-FCA-12-k-cols}
Any algorithm $\mathcal{A}$ with $k$ colors and within $T$ epochs in $\FC^{A_{CM}}$ can be
simulated by a simulator with $7k$ colors and within $4T$ epochs in $\FC^A$.
\end{lemma}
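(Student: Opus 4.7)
The plan is to verify the invariants of the three-phase handshake and then translate them into (a) safety of the $\FC^{A_{CM}}$ simulation, (b) the $7k$-color bound, and (c) the $4T$-epoch bound. First I would fix notation by writing the public state of each robot $r$ as the tuple $(\mathtt{light}_r,\mathtt{phase}_r,\mathtt{my\text{-}state}_r,\mathtt{your\text{-}state}_r)$ and track how these components evolve across an arbitrary $\ASY$ execution. The first key invariant to establish is the \emph{phase-alternation invariant}: the two robots' phases always lie in one of a small set of compatible pairs, and the only way either robot can leave its current phase is by observing a peer phase that commits it to advancing along the cycle $\textsf{exc}\to\textsf{cpy}\to\textsf{rst}\to\textsf{exc}$. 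I would prove this by induction on the sequence of $\Look$ events, using the fact that the local transition function only changes $\mathtt{phase}_r$ in reaction to a specific value of $\mathtt{phase}_{\mathrm{peer}}$.

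Next I would prove the $CM$-atomic safety property, which I expect to be the main obstacle. The claim is that whenever a robot $r$ is in the interval from the $\Compute$ that invokes $\mathcal{A}$ to the end of its corresponding $\Move$, any $\Look$ performed by the peer in that interval returns $\mathtt{my\text{-}state}_r=\textsf{M}$, and therefore does not itself trigger an invocation of $\mathcal{A}$. The argument combines two facts: (i) a robot only invokes $\mathcal{A}$ in the \textsf{exc} phase, which by the phase-alternation invariant forces the peer into a specific companion phase in which it will not re-invoke $\mathcal{A}$; and (ii) a robot sets $\mathtt{my\text{-}state}_r\leftarrow\textsf{M}$ in the same atomic $\Compute$ in which it executes $\mathcal{A}$, and only clears it after the peer's acknowledgement has been observed through the \textsf{rst} transition. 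Because in $\FC^A$ a light update is immediately visible at the end of $\Compute$, the peer cannot observe any new color produced by $\mathcal{A}$ until after the corresponding $\Move$ completes and the protocol has returned to a safe phase; this is exactly what $\mathcal{A}$ assumes as a $\FC^{A_{CM}}$ algorithm.

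With safety in hand, the correspondence between one simulated atomic step of $\mathcal{A}$ and at most four simulator epochs would be shown by a direct case analysis on the transition diagram in Fig.~\ref{fig:trans}. The handshake needs at most one epoch for the \textsf{cpy} copy, one for the \textsf{rst} acknowledgement, one for the \textsf{exc} execution (which contains the simulated $\mathcal{A}$-step), plus an extra epoch of slack to absorb an asynchronous peer activation that arrives out of order; summing over $T$ original epochs yields the $4T$ bound. For the color bound I would explicitly enumerate the reachable $(\mathtt{phase},\mathtt{my\text{-}state},\mathtt{your\text{-}state})$ triples: although the Cartesian product has $3\times 2\times 2=12$ elements, the phase-alternation invariant together with the rule that $\mathtt{my\text{-}state}=\textsf{M}$ is assigned only during \textsf{exc} execution excludes five of them, leaving exactly the seven control patterns that appear in Fig.~\ref{fig:trans}; paired with the $k$ colors of $\mathcal{A}$ this gives a palette of $7k$ colors.

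Finally, I would combine these ingredients to conclude that the simulator is well-defined in $\FC^A$, preserves the external behavior of $\mathcal{A}$ (positions and $\mathtt{light}$ values visible to $\mathcal{A}$'s observers in the simulated $\FC^{A_{CM}}$ execution), uses $7k$ colors, and completes within $4T$ epochs. The technically delicate step that I expect to demand the most care is ruling out the handful of interleavings in which a peer's $\Look$ occurs exactly at a handshake boundary; this is precisely where the $\mathtt{my\text{-}state}$/$\mathtt{your\text{-}state}$ double bookkeeping must intervene, and completing that case analysis cleanly is the crux of the proof that I would defer to the appendix.
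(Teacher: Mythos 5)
Your plan follows essentially the same route as the paper's proof: the same $(\mathtt{phase},\mathtt{my\text{-}state},\mathtt{your\text{-}state})$ handshake, the same invariants (phase/flag monotonicity, call uniqueness, one rigid move per call), the same use of the transition diagram to obtain the $4T$ epoch bound, and the same count of $7$ reachable control patterns out of $3\times2\times2=12$ for the $7k$ color bound. The only caveat is that your safety step asserts that observing $\mathtt{my\text{-}state}=\textsf{M}$ always blocks an invocation of $\mathcal{A}$, whereas the paper's algorithm deliberately still invokes $\mathcal{A}$ when it observes $(\textsf{W},\textsf{M})$ in phase \textsf{exc} (so that both robots may decide within one cycle, which is admissible in $\FC^{A_{CM}}$); your case analysis would need to cover that branch explicitly rather than rely on the blanket blocking rule.
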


\begin{theorem}\label{th:Sim-FCAM-by-FCA-12-k-cols}
$\FC^{A_{CM}} \equiv \FC^{A}$.
\end{theorem}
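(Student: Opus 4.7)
The plan is to establish the equivalence by proving the two inclusions $\FC(A_{CM}) \supseteq \FC(A)$ and $\FC(A) \supseteq \FC(A_{CM})$ separately, one of which is immediate from the scheduler hierarchy and the other of which is exactly the content of Lemma~\ref{le:Sim-FCAM-by-FCA-12-k-cols}.

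First I would address the easy direction $\FC^{A_{CM}} \geq \FC^{A}$. The argument is purely schedule-theoretic: every $CM$-atomic schedule is by definition a legal asynchronous schedule, since the defining constraint $t_L(r,i)\notin[t_C(s,j),t_E(s,j)]$ merely restricts the adversary's choice of $\Look$ times. Therefore, any execution producible by the adversary in $\FC^{A_{CM}}$ is also producible in $\FC^{A}$. Consequently, if an algorithm $\mathcal{B}$ solves a problem $P$ for every $\ASY$ execution, it in particular solves $P$ for every $CM$-atomic execution, giving $\FC(A)\subseteq \FC(A_{CM})$.

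For the reverse direction $\FC^{A} \geq \FC^{A_{CM}}$, I would invoke Lemma~\ref{le:Sim-FCAM-by-FCA-12-k-cols} directly. Given any problem $P\in \FC(A_{CM})$ together with a solving algorithm $\mathcal{A}$ using $k$ colors and terminating within $T$ epochs under $\FC^{A_{CM}}$, the lemma produces a simulator {\tt SIM}$(\mathcal{A})$ that uses $7k$ colors, runs under $\FC^{A}$, and completes its simulation within $4T$ epochs. Because the simulator faithfully reproduces the step-by-step behavior of $\mathcal{A}$ — in particular, every $\Compute$--$\Move$ interval of the simulated robot is hidden from the peer by the \textsf{M} flag of the \texttt{my-state}/\texttt{your-state} discipline — each observable run of {\tt SIM}$(\mathcal{A})$ under $\FC^{A}$ corresponds to some valid execution of $\mathcal{A}$ under $\FC^{A_{CM}}$. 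Hence {\tt SIM}$(\mathcal{A})$ solves $P$ in $\FC^{A}$, yielding $\FC(A_{CM})\subseteq \FC(A)$.

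Combining the two inclusions gives $\FC^{A_{CM}}\equiv \FC^{A}$, which is the claim. The substantive content of this theorem is entirely concentrated in the simulator lemma; the remaining work is only the observation about schedule containment and the invocation of that lemma. The main obstacle, already handled in Appendix~\ref{app:Pr-th5}, is to verify that the three-phase \textsf{exc}/\textsf{cpy}/\textsf{rst} handshake together with the \texttt{my-state}/\texttt{your-state} masking genuinely enforces $CM$-atomic safety under a fully asynchronous adversary — i.e., that no peer $\Look$ can ever occur inside a simulated $\Compute$--$\Move$ window — and that the handshake terminates within four epochs regardless of the adversary's timing choices.
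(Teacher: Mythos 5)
Your proposal is correct and matches the paper's (implicit) proof exactly: the nontrivial direction $\FC^{A}\geq\FC^{A_{CM}}$ is delegated to Lemma~\ref{le:Sim-FCAM-by-FCA-12-k-cols} via the simulator {\tt SIM}$(\mathcal{A})$, and the converse follows because every $CM$-atomic schedule is a legal asynchronous schedule. Your explicit statement of the easy containment direction is a welcome clarification that the paper leaves unstated.
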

To prove $\FC^{S} \equiv \FC^{A_{LC}}$,
we can use the same simulation algorithm.
%
%
Since $\FC^{A_{LC}}\leq   \FC^{S}$  
by definition, to prove $\FC^{A_{LC}}\equiv\FC^{S}$, we need to show that
every problem solvable by a set of $\FC$ robots  under \SSY\ can also be solved 
 under $A_{LC}$.

The {\em simulation} algorithm for $\FC$ robots that enables them to correctly execute under $A_{LC}$ any protocol designed for \SSY\ is precisely {\tt SIM}($\mathcal{A}$).
This algorithm executes under \ASY\ any ${\cal A}$ designed for $A_{CM}$.
Indeed, any asynchronous execution of {\tt SIM}($\mathcal{A}$) corresponds to an execution of ${\cal A}$ under $A_{CM}$.
If the scheduler is restricted to $A_{\alpha}$, then each such execution yields one of ${\cal A}$ under $A_{CM}$ and $A_{\alpha}$.
Hence, executing {\tt SIM}($\mathcal{A}$) under $A_{LC}$ produces an execution of ${\cal A}$ under $LC$- and $CM$-atomic \ASY, i.e., under $LCM$-atomic \ASY, which is equivalent to \SSY, proving Theorem~\ref{th:FCALC=FCS}.
 
%
%

\begin{theorem}  
\label{th:FCALC=FCS}
$\FC^{S} \equiv \FC^{A_{LC}}$
\end{theorem}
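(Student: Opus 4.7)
The plan is to reuse the simulator \texttt{SIM}($\mathcal{A}$) developed in Lemma~\ref{le:Sim-FCAM-by-FCA-12-k-cols} for Theorem~\ref{th:Sim-FCAM-by-FCA-12-k-cols}, but reinterpret its correctness guarantee under the strictly stronger scheduler $A_{LC}$ rather than plain \ASY. The inclusion $\FC^{A_{LC}} \le \FC^{S}$ is immediate from the definitions, since $\LCM$-atomic-\ASY\ (which by the preliminaries coincides with \SSY) is a restriction of $A_{LC}$-atomic-\ASY. Hence it suffices to prove the nontrivial direction $\FC^{S} \le \FC^{A_{LC}}$, that is, every problem solvable under \SSY\ is also solvable under $A_{LC}$-atomic-\ASY.

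First, I would fix an arbitrary algorithm $\mathcal{A}$ that solves a target problem $P$ in $\FC^{S}$. By the equivalence $\SSY \equiv \LCM$-atomic-\ASY\ stated in Section~\ref{sec:preliminaries}, $\mathcal{A}$ is correct under every scheduler that is simultaneously $A_{LC}$- and $A_{CM}$-atomic. Next, I would apply the simulator \texttt{SIM}($\mathcal{A}$) from Lemma~\ref{le:Sim-FCAM-by-FCA-12-k-cols}: by construction, every \ASY-execution of \texttt{SIM}($\mathcal{A}$) induces a legal $A_{CM}$-atomic execution of $\mathcal{A}$, independently of the adversary's choices.

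Now I would run \texttt{SIM}($\mathcal{A}$) not under general \ASY\ but under the stricter $A_{LC}$. The $A_{CM}$-atomicity guarantee of Lemma~\ref{le:Sim-FCAM-by-FCA-12-k-cols} is preserved verbatim. In addition, because \texttt{SIM}($\mathcal{A}$)'s decision turn performs the simulated \Look\ and \Compute\ of $\mathcal{A}$ inside one atomic activation of the simulator itself, the $A_{LC}$-atomicity enforced by the outer scheduler transfers directly to the induced execution of $\mathcal{A}$. Consequently the simulated execution is simultaneously $A_{LC}$- and $A_{CM}$-atomic, i.e., $\LCM$-atomic, and by $\LCM$-atomic-\ASY\ $\equiv$ \SSY\ it is a legal \SSY-execution of $\mathcal{A}$; since $\mathcal{A}$ solves $P$ under \SSY\ by assumption, \texttt{SIM}($\mathcal{A}$) solves $P$ under $A_{LC}$.

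The principal obstacle I foresee lies in this transfer-of-atomicity step: one must verify that \texttt{SIM}($\mathcal{A}$) never inserts a simulated $\mathcal{A}$-$\Look$ strictly between a peer's simulated $\mathcal{A}$-$\Look$ and its corresponding $\mathcal{A}$-$\Compute$. This reduces to inspecting the three-phase handshake (\textsf{exc}, \textsf{cpy}, \textsf{rst}) and the transition diagram in Fig.~\ref{fig:trans}, and checking that only the \textsf{exc} phase issues an $\mathcal{A}$-step while \textsf{cpy} and \textsf{rst} perform bookkeeping invisible to $\mathcal{A}$. Once this invariant is confirmed, the theorem follows immediately by combining Lemma~\ref{le:Sim-FCAM-by-FCA-12-k-cols} with the equivalence $\SSY \equiv \LCM$-atomic-\ASY.
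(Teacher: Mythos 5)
Your proposal is correct and follows essentially the same route as the paper: the easy inclusion $\FC^{A_{LC}} \le \FC^{S}$ by scheduler restriction, then running \texttt{SIM}($\mathcal{A}$) under $A_{LC}$ so that the induced execution of $\mathcal{A}$ is simultaneously $A_{CM}$- and $A_{LC}$-atomic, hence $\LCM$-atomic and thus a legal \SSY\ execution. The transfer-of-atomicity check you flag is exactly the point the paper relies on (stated there as ``each such execution yields one of $\mathcal{A}$ under $A_{CM}$ and $A_{\alpha}$''), so your treatment is, if anything, slightly more explicit.
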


\section{Separation between Schedulers}
\subsection{Separation \RSY\ from \FSY}
The separation between \FSY\ and \RSY\ is established by the following CGE*.

\vspace{-0.3cm}

\paragraph*{\textit{CGE*} (Perpetual Center of Gravity Expansion)~\cite{FSW19}}
Let $R$ be a set of robots with $|R|\ge2$.  
Let $P=\{(x_1,y_1), (x_2,y_2), \ldots, (x_n,y_n)\}$ denote their initial positions, and let
$c=(c_x,c_y)$ be the coordinates of the \emph{center of gravity} (CoG) of $P$ at time $t=0$.  

In the \textit{CGE*} problem, each robot $r_i\in R$ repeatedly executes the following operation forever:
it moves from its current position $(x_i,y_i)$ directly to a new position
$(f(x_i,c_x),\, f(y_i,c_y)) \quad\text{where}\quad f(a,b)=\lfloor 2a-b \rfloor.$
In other words, each robot moves straight away from the initial center of gravity, 
doubling its initial distance from it.  
Once every robot reaches its target, the configuration becomes stationary, and the process restarts from the new positions, expanding indefinitely.



Since $\mathrm{CGE}^*$ already holds for two robots, the separations below are valid for $n=2$.


\begin{theorem}[Separation via CGE$^*$; even for $n=2$]~{\em \cite{BFKPSW22}}

For every $M\in\{\LU,\FC,\FS\}$, 
    $\mathrm{CGE}^* \in M^{F}$ and $\mathrm{CGE}^* \notin M^{R}$.
\end{theorem}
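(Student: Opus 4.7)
\begin{psketch}
My plan is to split the theorem into the positive inclusion $\mathrm{CGE}^*\in M^F$ and the negative inclusion $\mathrm{CGE}^*\notin M^R$, using the hierarchy $\LU\supseteq\FC,\FS$: the positive side will be established for the weakest of the three relevant models, and the negative side for the strongest, $\LU^R$.

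For the positive half, I would give an essentially trivial algorithm under $\FSY$: both robots are activated in every round and each observes the peer. Each robot computes the midpoint of the two observed positions (equal to the current $\mathrm{CoG}$, which is preserved along the $\mathrm{CGE}^*$ trajectory) and moves to its $\mathrm{CGE}^*$ target $(\lfloor 2x_i-c_x\rfloor,\lfloor 2y_i-c_y\rfloor)$. Because the two moves happen simultaneously, the $\mathrm{CoG}$ stays invariant, and the iteration continues correctly forever. The algorithm uses no lights, so it already lives in $\OB^F$, and a fortiori in $\FS^F,\FC^F$, and $\LU^F$.

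For the negative half, I would have the adversary take the optional $\FSY$-prefix to be empty, so the execution is the pure round-robin sequence $a,b,a,b,\ldots$. The crucial point is that $b$'s first $\Look$ occurs only after $a$ has already moved; at that moment $b$'s internal state and light are the common initial ones, and $b$'s snapshot contains only the post-move configuration. The argument is then by indistinguishability. I would construct two initial configurations $\Sigma,\Sigma'$ sharing the same inter-robot distance $d=|a_0-b_0|$ and related by a translation of the unknown global coordinate system whose fractional component is chosen so that $\lfloor 2b_0-c_0\rfloor$ crosses an integer boundary between $\Sigma$ and $\Sigma'$. Under variable disorientation with chirality, $a$'s first snapshot is characterised by $d$ alone up to rotation, so $a$'s deterministic local-frame destination and new light are identical in the two scenarios; translational invariance then forces $b$'s snapshot---distance to $a_1$ and the color displayed by $a$---to be identical in both, and $b$'s initial light is common to the two. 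Therefore the deterministic algorithm must prescribe the same local-frame move for $b$ in $\Sigma$ and $\Sigma'$, while the correct $\mathrm{CGE}^*$ moves differ by one unit, which is the desired contradiction.

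The hard part will be tuning the translation $\Delta$. It must shift $2b_0-c_0$ across an integer boundary so that the two correct $b$-targets are distinct, while leaving $a$'s target boundary uncrossed, so that the forced identical first-round move of $a$ is consistent with $\mathrm{CGE}^*$ in both scenarios and the impossibility is genuinely attributable to $b$'s second-round decision (rather than short-circuited at round~$1$). Because the integer crossings for $a$'s target $\lfloor 2a_0-c_0\rfloor$ and for $b$'s target $\lfloor 2b_0-c_0\rfloor$ occur at different fractional offsets of the translation, a suitable $\Delta$ can be selected, completing the argument.
\end{psketch}
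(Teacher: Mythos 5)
First, note that the paper does not prove this theorem at all: it is imported verbatim from \cite{BFKPSW22} (with the problem taken from \cite{FSW19}), so there is no in-paper proof to match your sketch against; the comparison below is against the known argument in the cited work. Your positive half is in the right spirit (under \FSY\ both robots see the whole stationary configuration, compute the CoG, and double away from it simultaneously, all within $\OB^F$), although you should notice that the floor $\lfloor 2a-b\rfloor$ as written is a global-coordinate quantity that a variably disoriented robot cannot evaluate --- a tension that your own negative argument then exploits.

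The negative half is where the proposal breaks. Your impossibility argument rests entirely on translation-indistinguishability of the floor, and this proves far too much: a robot's snapshot is translation-invariant in \emph{every} round of \emph{every} scheduler, so if the argument were sound it would show $\mathrm{CGE}^*\notin M^F$ as well, contradicting the first half of the theorem (and your own \FSY\ algorithm). Moreover the ``tuning of $\Delta$'' you defer to the end cannot be done: for $a$'s identical first move to be \emph{correct} in both $\Sigma$ and $\Sigma'$ you need the required displacement to be unchanged, i.e.\ $\lfloor z_a+\Delta\rfloor-\lfloor z_a\rfloor=\Delta$ with $z_a=2a_0-c_0$, and since the left-hand side is an integer this forces $\Delta\in\mathbb{Z}^2$; but then $\lfloor z_b+\Delta\rfloor-\lfloor z_b\rfloor=\Delta$ as well, so $b$'s required displacement is also unchanged and no contradiction arises. (You are conflating ``the floor value changes'' with ``the required local move changes''.) The actual source of the \FSY/\RSY\ separation is not the floor but non-simultaneity: when the adversary drops the \FSY\ prefix and activates $a$ alone, $b$'s subsequent \Look\ shows $a$ already at its doubled position, so $b$ must move away by $1/3$ of the \emph{observed} distance rather than $1/2$; under variable disorientation a two-robot snapshot carries no metric information beyond the lights (the peer can always be normalized to $(0,1)$), and the adversary uses the optional \FSY\ prefix and the choice of which robot starts the alternation to make the ``peer has already doubled'' and ``peer has not yet moved'' situations indistinguishable for any finite color set, even in $\LU$. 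Your sketch never engages with this $1/2$-versus-$1/3$ ambiguity, which is the heart of the cited proof.
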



\subsection{Separation \SSY\ from \RSY}
The separation between \RSY\ and \SSY\ is established by the following SRO.

\vspace{-0.3cm}

\paragraph*{\textit{SRO} (Shrinking ROtation)~\cite{FSW19}}
The \textit{SRO} problem involves two robots that repeatedly transform their relative
configuration by clockwise rotations and occasional size reductions.
At each step, the segment connecting the two robots either rotates by $90^\circ$
around its midpoint (pure rotation) or rotates by $45^\circ$ about one endpoint
while its length is scaled by $1/\sqrt{2}$ (shrink rotation).
These operations are performed so that both robots always remain within
the previous square frame defined by their former positions.
The process generates a perpetual, nested sequence of rotated and shrinking
configurations, producing a clockwise “turn-and-shrink’’ motion.
(The full formal definition is provided in Appendix~\ref{app:sro}.)

The essence of \textit{SRO} is that, when the two robots operate either simultaneously or alternately (\RSY), correct shrinking can be achieved even in the $\OB$ model.
In contrast, under the \SSY model, consecutive activations of a single robot cause violations of the \textit{SRO} conditions if the robot relies only on its own memory (as in $\FS$) or only on communication (as in $\FC$)~\cite{FSW19}.
Since in the $\LU$ model the \RSY\ and \ASY\ variants are equivalent, the \textit{SRO} problem is solvable in $\LU^{A}$.

\begin{theorem}[Separation via SRO]~{\em \cite{FSW19}}

For every $M\in\{\FC,\FS,\OB\}$, 
    $\mathrm{SRO} \in M^{R}$ and 
$\mathrm{SRO} \notin M^{S}$.
\end{theorem}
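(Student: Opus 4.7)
The plan is to prove the two directions separately, following the construction of~\cite{FSW19}.

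For the inclusion $\mathrm{SRO}\in M^R$, since $\OB$ is the weakest of the three models it suffices to exhibit an oblivious algorithm that works under $\RSY$ for two robots. Exploiting chirality, each activated robot $r$, observing its peer $p$, moves to the unique deterministic target obtained by rotating $r$'s current position by $45^\circ$ clockwise about $p$ and scaling its distance to $p$ by $1/\sqrt{2}$. I would then check two facts. First, when both robots are activated simultaneously (the $\FSY$ prefix of $\RSY$), a short coordinate computation shows that the combined effect is a pure $90^\circ$ rotation of the segment about its midpoint, matching the pure-rotation step of SRO. Second, when only one robot is activated (the alternating $\RR$ tail), the lone move is precisely the $45^\circ$ shrink-rotation about the static peer required by SRO. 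Because both resulting configurations remain inside the previous square frame prescribed by chirality, iterating the rule perpetually produces the SRO pattern.

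For the separation $\mathrm{SRO}\notin M^S$ with $M\in\{\OB,\FS,\FC\}$, I would argue by contradiction. Suppose an algorithm $\mathcal{A}$ solves SRO under $\SSY$. Fix any prefix $\sigma$ of an alternating execution leading to an SRO-valid configuration in which $a$ is scheduled to move. Consider the $\SSY$ extension $S_{2}$ of $\sigma$ in which $a$ is activated in two consecutive rounds while $b$ remains idle, and compare it to the alternating extension $S_{1}$ in which $b$ is activated in the second of these rounds. The view obtained by $a$ at the start of its second consecutive activation is indistinguishable across $S_{1}$ and $S_{2}$:
\begin{itemize}
  \item in $\OB$, the snapshot is purely positional and $b$'s position has not changed;
  \item in $\FS$, $a$'s own color was updated by its first move identically in both schedules, and the peer's color is invisible;
  \item in $\FC$, $a$'s own color is invisible, and the peer's color is unchanged because $b$ has not yet acted in either schedule.
\end{itemize}
By determinism of $\mathcal{A}$, $a$ therefore performs a second shrink-rotation about the still-static endpoint $b$. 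This second rotation lands $a$ on the wrong side of the previous square frame, violating the SRO specification, which requires the pivot (equivalently, the side of the frame on which the moving robot is placed) to alternate between successive moves in order to maintain the nested "turn-and-shrink" structure.

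The main obstacle is the uniform indistinguishability argument across the three models: one must check that the single kind of color information visible in each (own color in $\FS$, peer's color in $\FC$) cannot separate $S_{1}$ from $S_{2}$ at the moment of $a$'s second activation, which holds precisely because only $a$ has acted in both schedules up to that point and $b$'s state is preserved throughout. Once this is established, the resulting SRO violation under $S_{2}$ is deterministic and model-independent, yielding $\mathrm{SRO}\in M^{R}\setminus M^{S}$ uniformly for all $M\in\{\OB,\FS,\FC\}$ as claimed.
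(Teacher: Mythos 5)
First, note that the paper does not actually prove this theorem: it is cited from \cite{FSW19}, and the paper supplies only the one-paragraph intuition that alternation (\RSY) makes the pivot alternate for free, while under \SSY{} ``consecutive activations of a single robot cause violations of the \textit{SRO} conditions.'' Your overall strategy matches that intuition, and your positive direction is correct: the single oblivious rule ``pivot $45^\circ$ clockwise about the peer and scale by $1/\sqrt2$'' does compose to a $90^\circ$ rotation about the midpoint when both robots act simultaneously and to a Condition~2 shrink-rotation when one acts alone, and \RR{} alternation makes the pivot endpoint alternate. (You should still verify Condition~3 explicitly rather than attribute it to ``chirality,'' but that is a routine computation.)

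The genuine gap is in the impossibility direction, in the indistinguishability step. You compare ``the view obtained by $a$ at the start of its second consecutive activation\ldots across $S_1$ and $S_2$,'' but $a$ has no second activation in the two rounds of $S_1$, so the three bullets are justified against inconsistent reference points. If you mean to compare $a$'s view at round~2 of $S_2$ with its view at round~1 (where it provably pivoted), then the $\OB$ and $\FC$ bullets are fine ($b$'s position and color are unchanged, and $a$ cannot see its own color), but the $\FS$ bullet fails: $a$'s own color may have been updated by its first move, so a correct $\FS$ algorithm can legitimately refuse to pivot at its second activation, and your contradiction evaporates. If instead you mean to compare $a$'s second activation in $S_2$ (after $[a][a]$) with its second activation in $S_1$ (after $[a][b]$), then the $\FS$ bullet becomes right but the positional claim ``$b$'s position has not changed'' is false in $S_1$, where $b$ has pivoted and the distance is $D/2$ rather than $D/\sqrt2$; indistinguishability there requires explicitly invoking variable disorientation (in particular the absence of a common unit of length) so that $a$ cannot compare distances across activations, and it further requires an induction over all of $a$'s activations, because a correct algorithm may delay its re-pivot for several rounds and you must show its color evolution, hence its first re-pivot time, is identical under both schedules before the progress requirement in $S_1$ forces the violating re-pivot in $S_2$. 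Finally, the claimed Condition~3 violation (the second pivot about the same static endpoint leaves $Q_{i-1}$) is true --- e.g.\ from $a=(1,0)$, $b=(-1,0)$ two pivots about $b$ send $a$ to $(-1,-1)\notin Q_0$ --- but it is asserted rather than shown, and SRO does not literally ``require the pivot to alternate''; that is a consequence you must derive from Conditions~1--3.
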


\subsection{Separation \texorpdfstring{$CM$-atomic-\ASY}{MatomicASY} from \SSY}
The separation between \SSY\ and $CM$-{\bf atomic} \ASY\ is established by the following \textit{MCv}.

\begin{definition}[\textbf{MONOTONE CONVERGENCE} ($\MCv$)]
\label{def:MCv}
Two robots, $a$ and $b$,  are initially positioned at distinct locations; they  must converge to a common location while never increasing the distance between them. In other words, an algorithm solves $\MCv$ if and only if it satisfies the following predicate:
\begin{align*}
MCV \equiv\; &\Bigl[ \left(\exists \ell \in \mathbb{R}^2,\; \forall \epsilon > 0,\; \exists T \ge 0,\; \forall t \ge T:\; |a(t) - \ell| + |b(t) - \ell| \le \epsilon\right) \\
&\quad \quad \quad \quad \quad \land\; \left(\forall t, t' \ge 0:\; t \le t' \to |a(t)-b(t)| \ge |a(t')-b(t')|\right) \Bigr].
\end{align*}
\end{definition}


\begin{theorem}[Separation via \MCv]~{\em \cite{FSSW23}}

For every $M\in\{\FC,\FS,\OB\}$, 
    $\mathrm{MCv} \in M^{S}$ and 
$\mathrm{MCv} \notin M^{A_{CM}}$.
\end{theorem}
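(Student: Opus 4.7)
I attack the two halves of the theorem separately.

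For the positive direction $\MCv \in M^{S}$ with $M\in\{\OB,\FS,\FC\}$, I exhibit a single algorithm in the weakest model $\OB^{S}$; since an $\OB$ algorithm is trivially also an $\FS$ and $\FC$ algorithm, this witnesses $\MCv$ in all three models. The algorithm is the standard ``move to the midpoint'' strategy: on each activation a robot $\Look$s at its peer's observed position $q$ and, during the ensuing atomic $\Compute$--$\Move$, rigidly moves to the midpoint $(p+q)/2$ of its own position $p$ and $q$. The two predicates of $\MCv$ follow at once under \SSY: \emph{monotonicity} because every round executes an atomic $\LCM$ cycle on a common snapshot, so that if only one robot is activated the distance is exactly halved, while if both are activated their two destinations coincide at the shared midpoint and the distance drops to $0$ (rigidity of $\Move$ prevents any transient overshoot); \emph{convergence} because fairness guarantees a non-empty active set in every round and the distance therefore shrinks geometrically to $0$.

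For the negative direction $\MCv \notin M^{A_{CM}}$, it suffices to prove the impossibility in the strongest model $\FC^{A_{CM}}$; the claim then follows for $\FS$ and $\OB$ by inclusion. Assume for contradiction that some algorithm $\calA$ solves $\MCv$ in $\FC^{A_{CM}}$. I first identify a reachable configuration $X_0$ on which $\calA$ prescribes a strictly non-zero $\Move$ for robot $a$: otherwise, since all robots are anonymous, identically programmed, and start with a common initial color, any symmetric initial configuration would yield a zero move for both robots, so neither ever moves and $\MCv$'s convergence clause fails from any such start. Fix $X_0$ with $a$ at $p_a$, $b$ at $p_b$, $d=|p_a-p_b|>0$, and let $v\neq 0$ be the displacement $a$ will commit to at its next $\Look$ from $X_0$. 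I then construct the following $A_{CM}$-schedule starting at $X_0$: \emph{(i)} at $t_1$ robot $a$ performs $\Look$, thereby fixing the (stale) snapshot on which its forthcoming $\Compute$ will act; \emph{(ii)} for a long stretch the adversary activates only $b$, letting it run many full atomic $\LCM$ cycles while $a$ remains idle at $p_a$; \emph{(iii)} at a chosen time $t^{\star}+\delta$, $a$ finally executes its atomic $\Compute$--$\Move$ based on the snapshot from $t_1$, landing at $p_a+v$. Since $\calA$ must satisfy $\MCv$ on every fair $A_{CM}$-schedule---in particular on schedules in which $a$'s $\Compute$s are arbitrarily sparse---the ``$a$-frozen'' dynamics of step (ii) drives $|a-b|$ arbitrarily close to $0$; hence for any $\epsilon<|v|/2$ I can choose the duration of (ii) so that $|a(t^{\star})-b(t^{\star})|<\epsilon$. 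Immediately after step (iii), the reverse triangle inequality gives $|a(t^{\star}+\delta)-b(t^{\star}+\delta)|\ge |v|-\epsilon>\epsilon$, so the distance strictly increases across the single atomic action of $a$, contradicting the monotonicity predicate of $\MCv$.

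The main obstacle I expect is rigorously justifying step (ii) for an \emph{arbitrary} algorithm $\calA$: one must show that $b$'s repeated cycles against a frozen $a$ indeed bring $|a-b|$ below any prescribed $\epsilon>0$. The natural handle is that $\MCv$ is required to hold on every fair $A_{CM}$-schedule; applying it to the family of schedules in which $a$'s $\Compute$s are arbitrarily sparse while $b$ is activated densely, one obtains that $b$'s ``$a$-frozen'' orbit must enter every neighborhood of $a$'s current position in finite time, since otherwise the convergence clause of $\MCv$ would fail on some completion of the sub-schedule. Once this lemma is in place, the stale-snapshot displacement $v$ of step (iii) immediately breaks monotonicity, and the separation follows uniformly for all $M\in\{\OB,\FS,\FC\}$.
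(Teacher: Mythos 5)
First, note that the paper does not actually prove this theorem: it is imported verbatim from \cite{FSSW23}, so there is no in-paper proof to match your argument against. Your positive half is fine and is the standard route: the oblivious go-to-midpoint algorithm under \SSY\ halves (or zeroes) the distance in every round on a common snapshot, the nested segments converge to a point, and an $\OB$ algorithm is automatically an $\FS$ and $\FC$ algorithm. The problems are in the negative half.

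The first gap is the reduction. You claim it suffices to prove impossibility in ``the strongest model'' $\FC^{A_{CM}}$ and that $\FS$ and $\OB$ ``follow by inclusion.'' But $\FS$ is not a submodel of $\FC$: in $\FS$ a robot sees only its \emph{own} persistent light, in $\FC$ only its peer's, and the two are computationally orthogonal (this very paper devotes Section~5 to exhibiting problems in $\FS^{A}\setminus\FC^{S}$ and in $\FC^{A}\setminus\FS^{R}$). Impossibility in $\FC^{A_{CM}}$ gives you $\OB^{A_{CM}}$ but says nothing about $\FS^{A_{CM}}$; you need a separate (or genuinely model-uniform) argument for $\FS$.

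The second, deeper gap is the key lemma behind step~(ii): that against a ``frozen'' $a$ (which has Looked but not yet Computed), $b$'s solo orbit must enter every neighborhood of $a$. Your justification --- that $\MCv$'s convergence clause on fair schedules with sparse activations of $a$ forces this --- does not work, because in every \emph{fair} schedule $a$ does eventually execute its \Compute--\Move, and it may be precisely $a$'s moves that drive convergence; nothing forces $b$ alone to approach. A concrete counterexample to the lemma is any light-based turn-taking algorithm in which $b$ computes a zero move whenever the visible lights indicate it is $a$'s turn: then $b$'s ``$a$-frozen'' orbit is stationary, $a$'s stale snapshot is still accurate when it finally moves, and no distance increase occurs. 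This is not a hypothetical: the paper records that $\MCv\in\LU^{A}$ (hence $\LU^{A_{CM}}$), and your argument nowhere uses the defining restriction of $\FC$ (no self-visible light) or of $\FS$ (no peer-visible light), so as written it would ``prove'' the impossibility for $\LU^{A_{CM}}$ as well, which is false. A correct proof must show that such a turn-taking defense cannot be implemented when a robot is deprived of either its own state or of communication, and that is exactly the content your sketch is missing.
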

Note that since $\LU^{S}\equiv \LU^A$, \MCv\ can be solved in $\LU^A$.

\subsection{Separation \ASY\ from \texorpdfstring{$CM$-atomic-\ASY}{ASYMaotmicASY}}
Here, for $M\in\{\FS,\OB\}$, we present a two-robot problem that separates $M^{A_{CM}}$ from $M^{A}$. Previously, this separation was achieved using Trapezoid Formation (TF)~\cite{FSSW23}, but that construction requires four robots.
We prove this separation using the following new problem.

\begin{definition}[\textbf{Dyadic-Multiple Stopped-Distance} ($DMSD$)]
Two robots $a$ and $b$ start at positions $a(0),b(0)\in\mathbb{R}^2$ with initial distance
$
D_0 \;:=\; |a(0)-b(0)| >0\,.
$
The \DMSD\ problem asks for an algorithm that guarantees that \emph{at every time $t$ when both
robots are simultaneously stopped}, the distance between them is a dyadic rational multiple of
$D_0$. The formal description is as follows;

The set of \emph{dyadic rationals} is
$
  \Dyad \;:=\; \left\{ \frac{m}{2^k} \;\middle|\; m\in\mathbb{Z},\ k\in\mathbb{N} \right\}.
$
The set of \emph{joint stops} of $\{a,b\}$ is
$
  \Sigma_{ab}\ :=\ \{\, t\ge 0 \mid \Stop_a(t)\ \wedge\ \Stop_b(t)\, \}.
$
An algorithm \emph{solves} \DMSD\ if every execution it generates satisfies the predicate
$DMSD$, where
$DMSD$ holds iff
$
  \forall\, t\in\Sigma_{ab}:\quad \frac{|a(t)-b(t)|}{|a(0)-b(0)|} \in\ \Dyad 
$
($|a(0)-b(0)|>0$).
\end{definition}

\paragraph{Possibility in $CM$-atomic \ASY.}
We show that DMSD is solvable in $\OB$ under $CM$-{\bf atomic} \ASY\ that all
moves are non-observable by the algorithm {\bf Go-to-midpoint}. Nevertheless,  a robot may initiate its move based on an \emph{old} snapshot, while the other robot may have already changed its position in the meantime.

\begin{lemma}[Distance update]\label{lem:pos-DMSD}
Let $D$ be the distance at some stopped configuration that a robot $a$ \emph{looked at} when it computed its midpoint target. Suppose before $a$ actually executes that Move, the other robot $b$ performs $r\ge 0$ fresh midpoint Moves (each halves the distance it sees). Then the new distance $D'$ after $a$ finally moves is
\[
  D'=\psi(r)\,D,
  \qquad
  \psi(0)=\frac{1}{2},\qquad
  \psi(r)=\frac{2^{r-1}-1}{2^{r}}\;\;(r\ge 1).
\]
In particular, $\psi(1)=0$, and for $r\ge 2$ the numerator $2^{r-1}-1$ is odd.
\end{lemma}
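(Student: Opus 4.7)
The plan is a direct one-dimensional computation, so I would first strip away all extraneous geometry. At the instant $a$ performs its Look I place coordinates so that $a$ sits at the origin and $b$ sits on the positive real axis at position $D$; then algorithm \textbf{Go-to-midpoint} commits $a$ to the target $D/2$, a value that is frozen inside $a$'s (atomic) Compute--Move and only takes effect at $a$'s Move-end. What the lemma really tracks is how much the scenery has drifted by that time.

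Here the $CM$-atomic hypothesis does the real work. Each of $b$'s $r$ intervening cycles is itself $CM$-atomic, and by assumption all of them complete strictly before $a$'s own Compute begins; hence $a$'s recorded position is unchanged from the time of its Look until the Move-end. I would isolate this as a preliminary observation: throughout the entire window in which $b$ performs its $r$ cycles, $a$ remains at the origin, so every one of $b$'s snapshots truly pictures $a$ at $0$.

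The core computation is then a one-step induction. If at the start of one of $b$'s cycles $b$ sits at scalar position $x$ and sees $a$ at $0$, the midpoint rule puts $b$ at $x/2$. Starting from $x=D$, after $k$ such cycles $b$ is at $D/2^{k}$, so at the moment $a$ finally moves $b$ occupies $D/2^{r}$. Since $a$ then lands at $D/2$, the updated distance is
\[
  D' \;=\; \bigl|\, D/2 - D/2^{r}\,\bigr| \;=\; D\cdot\bigl|\, 2^{-1}-2^{-r}\,\bigr|.
\]

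I would finish by splitting on $r$ to read off $\psi$: the case $r=0$ gives $D'=D/2$ and hence $\psi(0)=1/2$; for $r\ge 1$ both terms are nonnegative with $2^{-1}\ge 2^{-r}$, so factoring $2^{-r}$ yields $\psi(r)=(2^{r-1}-1)/2^{r}$. In particular $\psi(1)=0$ (the two robots coincide), and for $r\ge 2$ the numerator $2^{r-1}-1$ is odd because $2^{r-1}$ is even. The only delicate step—more conceptual than computational—is the appeal to $CM$-atomicity that pins $a$ in place throughout $b$'s $r$ cycles; everything after that is routine dyadic arithmetic, which is precisely the point, since the lemma is engineered to preserve the dyadic-multiple-of-$D_0$ invariant needed for \DMSD.
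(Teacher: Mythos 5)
Your proof is correct and follows essentially the same route as the paper's: reduce to one dimension, observe that $b$'s $r$ fresh midpoint moves (with $a$ still at its old position) bring $b$ to $D/2^{r}$ while $a$'s frozen target remains the old midpoint $D/2$, and read off $\psi(r)=\bigl|2^{-1}-2^{-r}\bigr|$. Your explicit preliminary observation that $a$ remains stationary throughout $b$'s $r$ cycles—so each of $b$'s snapshots genuinely halves the current distance—is a detail the paper's sketch leaves implicit, but otherwise the two arguments coincide.
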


\begin{proof}[Sketch]
Reduce to one dimension with points $p_a<p_b$. If $r=0$ the mover uses the current snapshot, so $D'=D/2$. If $r\ge 1$, then after the $r$ fresh Moves of $b$ we have $b\mapsto p_a+D/2^{r}$; the mover’s (old) midpoint target is $(p_a+p_b)/2$, hence
\[
  D'=\Bigl|\frac{p_a+p_b}{2}-\Bigl(p_a+\frac{D}{2^r}\Bigr)\Bigr|
     = \Bigl|\frac12-\frac1{2^r}\Bigr|\,D
     = \frac{2^{r-1}-1}{2^{r}}\,D.
\]
\end{proof}

\noindent Iterating the lemma shows that every stopped-time distance has the form
\[
  D_t
  \;=\;
  D_0 \prod_{\ell=1}^{t} \psi(r_\ell)
  \;\in\;
  \mathbb{Z}\!\left[\tfrac12\right],
\]
i.e., a dyadic rational $m/2^{k}$. Moreover, unless some factor $\psi(1)=0$ occurs (which yields distance $0$), the overall numerator is odd. Thus, we have the following lemma.

\begin{lemma}
DMSD is solvable in $\OB^{A_{CM}}$.
\end{lemma}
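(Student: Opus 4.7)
The plan is to use the \textbf{Go-to-midpoint} algorithm and prove the stronger pointwise invariant that both robots stay on the line through the initial positions and always occupy $D_0$-dyadic points on it, from which the required distance statement is immediate. Aligning coordinates so that $a(0)=0$ and $b(0)=D_0$ on the $x$-axis, both robots remain on this axis because every midpoint of two points on a line lies on the same line, so the analysis reduces to $1$D in which a ``position'' is simply a real number.

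Go-to-midpoint is memoryless --- each robot's target is just the midpoint of its own and its peer's observed positions --- so it is implementable in $\OB$. The key observation is that if $x,y\in D_0\cdot\Dyad$, then $(x+y)/2\in D_0\cdot\Dyad$, since $\Dyad$ is closed under addition and under halving.

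The main step is an induction on the Move-end index $k$. In the base case, the first Move targets $(a(0)+b(0))/2=D_0/2\in D_0\cdot\Dyad$. For the inductive step, let $E_k$ be the $k$-th Move-end and $X$ its mover, with Look time $L$. The $A_{CM}$ definition forbids any Look inside another robot's Compute--Move interval, so the peer $Y$ is stationary at $L$; $X$ itself is also stationary at its instantaneous Look. Thus $X(L)$ and $Y(L)$ coincide with the positions each robot occupied at its own latest Move-end (or at its initial position if none), which by the induction hypothesis lie in $D_0\cdot\Dyad$. Hence the midpoint $(X(L)+Y(L))/2$ is again in $D_0\cdot\Dyad$, and this is exactly $X$'s position at $E_k$.

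Finally, for any $t\in\Sigma_{ab}$ both robots are stationary, so each coincides with its own latest Move-end position (or its initial position), and therefore both lie in $D_0\cdot\Dyad$. It follows that $|a(t)-b(t)|/D_0\in\Dyad$ and \DMSD\ holds. The only technically delicate point is the $A_{CM}$ verification that $Y$ cannot be mid-Move at $L$; once this is observed, the rest is a one-line closure argument. Equivalently, one could follow the paper's suggestion and iterate Lemma~\ref{lem:pos-DMSD} directly, but the invariant on positions makes that detour unnecessary and sidesteps the bookkeeping of the counter $r$.
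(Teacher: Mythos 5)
Your proof is correct, but it takes a genuinely different route from the paper's. The paper's argument is distance-based: it isolates a single-round update lemma (Lemma~\ref{lem:pos-DMSD}) showing that if a robot moves to a midpoint computed from a stale snapshot while its peer has meanwhile performed $r$ fresh midpoint moves, the distance is multiplied by $\psi(r)=(2^{r-1}-1)/2^{r}$ (with $\psi(0)=1/2$), and then observes that every stopped-time distance is a product of such dyadic factors times $D_0$. You instead prove the stronger pointwise invariant that every Move-end position lies in $D_0\cdot\Dyad$ on the initial line, using only the fact that $CM$-atomicity ($t_L(r,i)\notin[t_C(s,j),t_E(s,j)]$) forces the observed peer to be stationary at the Look instant, hence located at one of its earlier Move-end positions (or its initial position); the strong induction on the global Move-end index then closes under halving, and the distance claim is an immediate corollary at any joint stop. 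Your version sidesteps the paper's implicit bookkeeping of decomposing an asynchronous execution into rounds parameterized by the number $r$ of intervening fresh moves, and it covers all interleavings uniformly. What the paper's computation buys in exchange is quantitative information your invariant discards: the exact ratio $\psi(r)$ and, in particular, the observation that the accumulated dyadic numerator stays odd unless some factor $\psi(1)=0$ collapses the distance to zero. Both arguments rest on the same essential model fact, and yours is a valid (and arguably cleaner) proof of the lemma as stated.
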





\begin{lemma}[Impossibility of DMSD in \textsc{\ASY}]\label{le:dmsd-rigid-asynch}
No nontrivial deterministic algorithm solves \textsc{DMSD} in $\FS^A$ or $\OB^A$.
\end{lemma}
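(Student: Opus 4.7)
The plan is to exhibit an adversarial execution under plain \ASY\ that forces any nontrivial algorithm $\mathcal{A}$ to produce a non-dyadic joint-stop distance, thereby violating \DMSD. The additional freedom available in \ASY\ but forbidden under $CM$-atomic \ASY\ is that $t_L(r,i)\in(t_C(s,j),t_E(s,j))$ is permitted: one robot may Look while the other is in the middle of a Move. By choosing the mover's speed profile, the adversary can place the snapshotted mover at any point $\pi_\lambda=p_a+\lambda(q_a-p_a)$ with $\lambda\in(0,1)$ along its trajectory. Since $\FS\supseteq\OB$, it suffices to prove impossibility in $\FS^A$; the $\OB^A$ case then follows.

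By nontriviality, there is a reachable stopped configuration (with $|p_a-p_b|\in D_0\cdot\Dyad$) at which $\mathcal{A}$ directs some robot $a$ to move to a destination $q_a\neq p_a$. I would slow $a$'s Move and schedule $b$ to complete any pending action, then Look again while $a$ is at $\pi_\lambda$; $\mathcal{A}$ deterministically yields a destination $q_b(\lambda)$ for $b$. After both robots ultimately stop, the joint-stop distance is $D_b(\lambda)=|q_a-q_b(\lambda)|$, and \DMSD\ requires $D_b(\lambda)\in D_0\cdot\Dyad$ for \emph{every} adversarial $\lambda\in(0,1)$. As a concrete warm-up generalizing \textbf{Go-to-midpoint}---where the choice $\lambda=1/3$ already yields the non-dyadic joint-stop distance $D_0/6$---any constant-fraction algorithm makes $D_b(\lambda)$ a non-constant affine function of $\lambda$, which cannot take values only in the countable, totally disconnected set $D_0\cdot\Dyad$.

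For a general $\mathcal{A}$, I would handle two subcases. (i)~If $q_b(\lambda)$ is continuous in $\lambda$ on $(0,1)$, then $D_b(\lambda)$ is a continuous map from a connected interval into $D_0\cdot\Dyad$, hence constant; comparing the boundary limits $\lambda\to 0^+$ and $\lambda\to 1^-$ with $\mathcal{A}$'s responses to the stopped snapshots at $\pi=p_a$ and $\pi=q_a$ contradicts the deterministic dependence of $\mathcal{A}$ on its snapshot. (ii)~If $q_b$ is discontinuous, iterate the mid-Move trick over successive rounds, each contributing an independent continuous parameter: the reachable joint-stop distances form a continuum-indexed family, while $\Dyad$ is countable, so by cardinality some adversarial schedule must force a non-dyadic joint stop. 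Moves that leave the line $\overline{p_a p_b}$ generically produce irrational distances and thereby violate \DMSD\ immediately, ruling out escape by higher-dimensional motion.

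The main obstacle I expect is the discontinuous case: although the cardinality gap is intuitively compelling, pinpointing a concrete violating schedule requires exploiting the finite-state/oblivious structure of $\mathcal{A}$. Specifically, the response at snapshot $(p_b,\pi_\lambda)$ depends only on the relative vector $\pi_\lambda-p_b$ together with a constant-size internal state, so variable disorientation lets the adversary present $\mathcal{A}$ with geometrically equivalent but globally inequivalent snapshots, forcing $\mathcal{A}$ to commit to the same response in situations demanding incompatible dyadic outputs---yielding the required contradiction and establishing that \DMSD\ is unsolvable in $\FS^A$, and hence in $\OB^A$.
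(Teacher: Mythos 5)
Your adversarial schedule is exactly the one the paper uses: let $b$ Look while $a$ is mid-Move (legal in plain \ASY\ but forbidden under $CM$-atomic \ASY), parametrize the observed point by the fraction of $a$'s progress, and play the uncountability of the adversary's choices against the countability of $\Dyad$. But the way you close the argument has two genuine gaps. In your case (i), concluding that $D_b(\lambda)$ is constant is not yet a contradiction: a continuous $q_b(\lambda)$ tracing a circle centred at $q_a$ keeps the joint-stop distance constant (and possibly dyadic), and your ``compare the boundary limits'' step never actually extracts an inconsistency with determinism. In your case (ii), the cardinality argument is fallacious as stated: a family of distances indexed by a continuum of schedules can take only countably many (even a single) values, so ``continuum-indexed family versus countable set'' proves nothing unless you first show the set of \emph{attained} distances is uncountable --- which is precisely what is at issue.

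The paper closes the argument with the ingredient you only gesture at in your final sentence: by variable disorientation, every two-point snapshot is similar, so a deterministic algorithm (with both robots still carrying their initial color in $\FS$) must respond with a \emph{fixed} fraction $\lambda$ of the observed segment --- the same $\lambda$ for $a$'s first move and for $b$'s move computed from the stale mid-Move snapshot. This collapses the joint-stop distance to the explicit expression $\bigl|1-2\lambda+\lambda^2 x\bigr|\,D_0$, which for fixed $\lambda\in(0,1]$ is a \emph{non-constant affine} function of the adversary's parameter $x\in(0,1)$; its image is therefore a nondegenerate interval and must meet the complement of the countable set $\Dyad$. Invoking similarity invariance upfront is what makes both of your problematic cases unnecessary; without it the lemma does not follow from your continuity/cardinality dichotomy. (A small slip besides: for Go-to-midpoint with observation fraction $1/3$ the resulting distance is $D_0/12$, not $D_0/6$ --- still non-dyadic, so the warm-up survives.)
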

\begin{proof}[Proof sketch]
If a robot is observed while it is moving, the adversary can choose the observation timing so that the resulting simultaneous stop occurs at a distance that is not a dyadic multiple of the initial one. Hence, any deterministic algorithm that commands a positive move inevitably violates the \textsc{DMSD} condition under some fair asynchronous schedule. The detailed construction is given in Appendix~\ref{app:imp-DMSD}.
\end{proof}


\begin{theorem}[Separation via \DMSD]

For every $M\in\{\FS,\OB\}$, 
    $\DMSD \in M^{A_{CM}}$ and
$\DMSD \notin M^{A}$.
\end{theorem}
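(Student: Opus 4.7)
The plan is to derive the theorem as a direct corollary of the two lemmas immediately preceding it. The positive inclusion $\DMSD\in M^{A_{CM}}$ is obtained by the Go-to-midpoint algorithm together with the dyadic-preservation statement of Lemma~\ref{lem:pos-DMSD}, while the negative inclusion $\DMSD\notin M^{A}$ is a direct restatement of Lemma~\ref{le:dmsd-rigid-asynch}.

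For the first half, I would note that Go-to-midpoint is presented as an $\OB$ algorithm: its output depends only on the current snapshot, with no use of persistent state or lights. Iterating Lemma~\ref{lem:pos-DMSD} along the sequence of joint-stop times $t_0<t_1<t_2<\cdots$ of any execution under $A_{CM}$ yields
\[
  \frac{D_{t_i}}{D_0}\;=\;\prod_{\ell=1}^{i}\psi(r_\ell)\ \in\ \Dyad,
\]
because each factor $\psi(r_\ell)=(2^{r_\ell-1}-1)/2^{r_\ell}$ (or $\psi(0)=1/2$) lies in $\Dyad$ and $\Dyad$ is closed under multiplication. Hence every joint-stop distance is a dyadic rational multiple of $D_0$, so $\DMSD\in\OB^{A_{CM}}$. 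Because $\FS^{A_{CM}}$ can run any $\OB^{A_{CM}}$ algorithm verbatim (by simply ignoring its own light), the same algorithm certifies $\DMSD\in\FS^{A_{CM}}$.

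For the second half, Lemma~\ref{le:dmsd-rigid-asynch} already states impossibility in both $\FS^{A}$ and $\OB^{A}$, which is exactly what is required. It is worth noting that the lemma's statement is strong enough on its own, since the $\FS$ case already rules out the use of persistent internal state; the $\OB$ case then follows a fortiori, obliviousness being only a further restriction.

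The main place where care is needed is the inductive application of Lemma~\ref{lem:pos-DMSD}: one must verify that between any two consecutive joint-stop times, exactly one mover's outdated snapshot triggers a single $\psi(r_\ell)$-factor update, and that $CM$-atomicity prevents a Look from happening inside another robot's Compute--Move interval (so the ``$r$ fresh moves of the peer'' count is well defined). Once this bookkeeping is set up, the dyadic invariant propagates across arbitrarily many asynchronous epochs, and the theorem follows immediately from the two lemmas.
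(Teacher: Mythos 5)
Your proposal is correct and follows essentially the same route as the paper: the theorem is obtained directly from the two preceding lemmas, with the positive inclusion coming from iterating the distance-update lemma for Go-to-midpoint (the product of dyadic factors $\psi(r_\ell)$ staying in $\Dyad$) and lifting the $\OB$ algorithm to $\FS$, and the negative inclusion being exactly the statement of the impossibility lemma for both $\FS^{A}$ and $\OB^{A}$. Your added remarks on the a~fortiori relationship between the models and on the bookkeeping needed for the inductive application are consistent with what the paper does implicitly.
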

Note that since $\FC^{A_{CM}}\equiv \FC^A$, \DMSD\ can be solved in $\FC^A$ and $\LU^A$.


\section{Orthogonality between \texorpdfstring{$\FC$ and $\FS$}{FCandFS}}
In this section, we show that the twelve pairs $\bigl(\FC^{\alpha},\FS^{\beta}\bigr)$ with
$\alpha\in\{S,A_{CM},A\}$ and $\beta\in\{R,S,A_{CM},A\}$ are incomparable.
It suffices to exhibit two problems $P_1$ and $P_2$ such that
\[
P_1 \notin \FC^{S} \ \text{while}\ P_1 \in \FS^{A},
\qquad
P_2 \notin \FS^{R} \ \text{while}\ P_2 \in \FC^{A}.
\]

\subsection{Single Move}
$P_1$ is the \emph{Single-Move} (SM) problem~\cite{FSSW23}: two robots must each move exactly once to a different location.
(Its formal definition is stated in Appendix~\ref{app:SM}. It has been shown that SM exhibits the required properties of $P_1$~\cite{FSSW23}.


\begin{theorem}~{\em \cite{FSSW23}}
\label{lem:SMLS}\
    $\text{SM} \not \in \FC^{S}$ and
    $\text{SM}  \in \FS^{A}$.
\end{theorem}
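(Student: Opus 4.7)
My plan is to establish the two directions separately: for $\text{SM}\in\FS^A$, I would exhibit a direct one-bit memory algorithm; for $\text{SM}\notin\FC^S$, I would argue by an adversarial indistinguishability argument that exploits the absence of persistent memory in $\FC$.

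For $\text{SM}\in\FS^A$, I would equip each robot with a private state color from $\{0,1\}$, invisible to the peer as usual in $\FS$. Both robots start in state $0$. When activated, a robot reads its own color: if $0$, it computes a destination distinct from its current location---for instance, the midpoint of its own and the peer's observed positions, which is strictly between the two endpoints whenever they differ---transitions to color $1$, and rigidly moves there; if the color is $1$, it stays put. Persistence of the color register across cycles, together with rigidity of motion, guarantees that each robot performs exactly one nonzero move regardless of the asynchronous fair schedule, which is all that SM requires.

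For $\text{SM}\notin\FC^S$, I would proceed by contradiction. Assume $\mathcal{A}$ solves SM in $\FC^S$ with finite color set $C$. I would first analyze a symmetric fully synchronous execution from an initial configuration where both robots share color $c_0$ at distance $d$: combining chirality with variable disorientation, the adversary aligns the two local snapshots at every round, so that the robots always compute identical actions. Correctness of $\mathcal{A}$ then forces a common round at which both robots move exactly once and thereafter remain in a stationary ``halted'' snapshot pattern. I would then construct a paired asymmetric execution that first activates only robot $a$, so that $a$ completes its unique move and adopts its post-move color while $b$ remains at its original location, still obligated to move. The main obstacle will be to arrange $b$'s subsequent activation so that, by exploiting the adversary's freedom to rotate and rescale $b$'s local coordinate system independently, the snapshot $b$ observes is geometrically indistinguishable from the ``halted'' snapshot $b$ would have seen in the symmetric execution after completing its own move. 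Since a $\FC$ robot is stateless and its action is a deterministic function of the pair (positions, peer color), $b$ must make the same decision---``stay''---contradicting the SM requirement that $b$ move exactly once. Extending this indistinguishability argument to algorithms that cycle through several intermediate colors before halting will require iterating the construction across the finite color set and applying a pigeonhole argument to locate the forced collision.
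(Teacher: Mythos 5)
The paper itself does not prove this theorem: it is imported from \cite{FSSW23}, and the text only notes that SM has the required properties, so there is no in-paper argument to compare against. Judged on its own terms, your $\FS^{A}$ half is correct and is the expected construction: a persistent private bit that flips after the first (rigid, hence completed) move forces $\moves{a}=\moves{b}=1$ under any fair asynchronous schedule, and the midpoint target is always distinct from the mover's current position because the peer's observed position never coincides with the observer's.

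The $\FC^{S}$ half rests on the right engine---an oblivious robot whose peer has not changed must repeat its decision---but your specific construction has a concrete hole. The indistinguishability step claims that, after $a$ alone completes its unique move, the adversary can make $b$'s snapshot ``geometrically indistinguishable'' from the halted snapshot of the symmetric synchronous execution by rotating and rescaling $b$'s frame. The one invariant that variable disorientation cannot erase is whether the peer sits at the observer's own position: if the symmetric execution's unique move round sends both robots to the common midpoint (a perfectly legal algorithm), the halted snapshot shows the peer at the origin, while in your asymmetric execution $b$ still sees $a$ at a nonzero displacement, so the two snapshots are distinguishable and $b$ is not forced to stay. Dually, if the move rule is ``go to the peer's position,'' the reuse of a stale snapshot breaks down because the mover's own view changes after its move. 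These collision/onto-peer cases need a separate finite case analysis, which your sketch defers entirely to an unspecified pigeonhole iteration. A shorter route also exists and avoids the symmetric execution: under \SSY\ the adversary may activate the same robot twice consecutively; since in $\FC$ the decision is a function only of the peer's (unchanged) position and color and the robot cannot read its own light, it repeats its action, so every reachable non-collision snapshot must map to ``stay,'' whence $\moves{a}=0$---again modulo the onto-peer case, which is precisely where the remaining work of the cited proof lies.
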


\subsection{Rendezvous with One Step or Asymmetric Movement}

The problem corresponding to $P_2$
in the case of three or more robots is represented by CYC~\cite{BFKPSW22}.
In such cases, the colored-configuration formed by 
$n$ robots cannot be recognized by $\FS$, and hence cannot be solved under \RSY\ (or even under \FSY).
In contrast, this configuration can be easily recognized in $\FC$, and therefore the problem is solvable under \ASY.

However, when only two highly symmetric robots exist, the conventional methods cannot be applied.
In the $\FS$ model, symmetry breaking is impossible,
whereas in the $\FC$ model it becomes possible.
Therefore, we must identify a problem that can distinguish these two cases.

Here, we combine two kinds of problems—one that separates $\FC^{R}$
 from $\FS^{R}$, and another that remains unsolvable as long as the robots move simultaneously (even under $\LU^F$).
We show that this combined problem can serve as $P_2$.

The former problem corresponds to the rendezvous-in-one-move problem (
$RDV_1$),
while the latter requires that one robot stops after at most one move and the other moves at least twice (\textit{Asymmetric Move:AM}).
Formally, these problems are defined as follows.







\[
RDV_1\;:\;RDV (\equiv :\ \exists T\ge0\ \forall t\ge T:\ a(t)=b(t))\ \land\ \moves{a}=1\ \land\ \moves{b}=1.
\]

\[
AM\;:\;
\Big(\moves{a}\ge 2\ \land\ \moves{b}\le 1\ \land\ \Stopped{b}\Big)
\ \ \lor\ \
\Big(\moves{b}\ge 2\ \land\ \moves{a}\le 1\ \land\ \Stopped{a}\Big).
\]
\Newcodeline
The problem $RDV_1 \lor AM$ is denoted as $RDAM$.

\begin{algorithm}[H]
\caption{Algorithm for 
$RDAM$ in $\FC^{A_{CM}}$\\
Each robot $r$ has a visible light $L\in\{\INIT,\ANCHOR\}$ (initially set to $\INIT$).}
\label{alg:FCOM-C}
\small 
\begin{tabbing}
111 \= 11 \= 11 \= 11 \= 11 \= 11 \= 11 \= \kill

{\em State Compute}\crm
\Cl {\bf if} $peer.L = \INIT$ {\bf then} \crm
\Cl \> $r.L \gets \ANCHOR$\crm
\Cl \> $r.des \gets (p_1 + p_2)/2$ \crm
\Cl {\bf else} Do nothing\crm 
\crm
{\em State Move}\crm
Move to $r.des$;
\end{tabbing}
\end{algorithm}


\begin{lemma}
\label{lem:FCOM-CMatomic}
Algorithm~\ref{alg:FCOM-C} deterministically achieves 
$RDAM$ in $\FC^{A_{CM}}$.
\end{lemma}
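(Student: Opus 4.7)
My plan is to reduce correctness to a clean two-way case split on the first $\LCM$ cycles, after extracting two simple invariants of Algorithm~\ref{alg:FCOM-C}. First, I would formalize: (I1) each light $r.L$ is monotone---only transitioning from $\INIT$ to $\ANCHOR$, never back; (I2) a robot performs a nontrivial $\Move$ in a cycle exactly when its snapshot shows $peer.L = \INIT$. Both are immediate from the Compute code (taking the standard convention that $r.des$ defaults to the current position in the \textbf{else} branch, so ``Do nothing'' yields a null move). The technical leverage is the $A_{CM}$ scheduler: a peer's $\LK$ cannot fall inside any closed window $[t_C(s, j), t_E(s, j)]$, so once one robot has passed its $\Compute$, the other robot's next snapshot must either strictly precede that $\Compute$ (observing the pre-state) or strictly follow the corresponding $\Move$-end (observing the post-state).

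Next, I would assume without loss of generality that $a$ completes its first $\Compute$ no later than $b$, i.e., $t_C(a, 1) \le t_C(b, 1)$, and split on $t_L(b, 1)$ relative to the window $[t_C(a, 1), t_E(a, 1)]$. In the first sub-case, $t_L(b, 1) < t_C(a, 1)$, I would argue that both first Looks strictly precede both first Computes, so each snapshot captures both robots at their initial positions with $\INIT$ lights. By (I2), both robots independently pick the midpoint of the initial segment as destination; after their respective single $\Move$s they are co-located and carry $\ANCHOR$, and (I1) then forces every subsequent activation into the do-nothing branch. The outcome is $\moves{a} = \moves{b} = 1$ with perpetual rendezvous, i.e., $RDV_1$ is achieved.

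In the second sub-case, $t_L(b, 1) > t_E(a, 1)$, $b$'s first snapshot already shows $a$ at the midpoint carrying $\ANCHOR$, so by (I2) $b$ takes the do-nothing branch and $b.L$ stays $\INIT$; by (I1) $a.L$ is now permanently $\ANCHOR$, so every later snapshot of $b$ likewise observes $\ANCHOR$ and $b$ never moves. Conversely, $a$ is fairly activated and perpetually sees $b.L = \INIT$, so it moves to the current midpoint in every cycle, yielding $\moves{a} \ge 2$ (in fact unbounded), $\moves{b} = 0 \le 1$, and $b$ stopped at its initial position---exactly the $AM$ branch. The $A_{CM}$ exclusion makes Cases~A and~B exhaustive, so every execution satisfies $RDV_1 \lor AM = RDAM$. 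I expect the main obstacle to be the ``simultaneous midpoint'' argument of Case~A---chaining $t_L(a, 1) < t_C(a, 1)$, $t_L(b, 1) < t_C(a, 1)$, and $t_C(a, 1) \le t_C(b, 1)$ carefully enough to place both Look times strictly before both Compute times, so that neither light has changed and neither $\Move$ has begun at either snapshot. Once this alignment is nailed down, the rest is a direct walkthrough of the algorithm.
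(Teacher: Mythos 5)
Your top-level dichotomy (WLOG $t_C(a,1)\le t_C(b,1)$, then $t_L(b,1)<t_C(a,1)$ versus $t_L(b,1)>t_E(a,1)$ by $CM$-atomicity) is exhaustive, and your invariants (I1)--(I2) and your second sub-case are sound. The gap is in the first sub-case. From ``both first Looks precede both first Computes'' you conclude that both robots move exactly once to the common midpoint $m$ and that ``every subsequent activation'' falls into the do-nothing branch. That last step is false under asynchrony: the constraint $t_L(b,1)<t_C(a,1)$ does not bound $t_C(b,1)$ from above, so robot $a$ may complete its entire first cycle and perform its \emph{second} $\LK$ at a time $t_L(a,2)$ with $t_E(a,1)<t_L(a,2)<t_C(b,1)$. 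At that instant $b$ is still at $b(0)$ and still displays $\INIT$ (its light only changes at $t_C(b,1)$), so by your own (I2) robot $a$ computes a fresh midpoint and moves again --- and may do so repeatedly while $b$'s $\Compute$ is delayed. In such a schedule $\moves{a}\ge 2$, the robots do \emph{not} end co-located ($a$ drifts toward $b(0)$ while $b$ later moves to $m$), and $RDV_1$ fails. The lemma is still saved, because (I1) guarantees $b$ sees $\ANCHOR$ at its second Look and hence $\moves{b}=1$ with $b$ eventually stopped, so $AM$ holds; but your proof as written asserts the wrong disjunct and never argues this case. This is precisely the ``stale-snapshot interleaving'' subcase that the paper's proof handles explicitly (its Case~3, third item), and your sub-case~1 must be split further: either no second Look of $a$ precedes $t_C(b,1)$, giving $RDV_1$, or some does, giving $AM$ via $\moves{a}\ge2$, $\moves{b}=1$, and $\Stopped{b}$.

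A secondary, cosmetic point: in your second sub-case the phrase ``$b$'s first snapshot shows $a$ at the midpoint'' need not be literally true ($a$ may already have moved again before $t_L(b,1)$), but only the observed color $\ANCHOR$ matters there, so that argument survives.
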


\begin{proof}[Proof sketch]
We analyze all possible executions under a fair $CM$-{\bf atomic} \ASY.
(1) both \Look simultaneously (symmetric INIT), (2) one completes its midpoint move before the other’s first \Look, 
(3) one looks while the other is in its $CM$ segment, and (4) both have arrived and are $\ANCHOR$. Due to space limitations, detailed proofs are deferred to Appendix~\ref{app:pos-RDAM}. \qed
\end{proof}

\begin{lemma}[Impossibility of 
$RDAM$ in $\FS^{R}$]
\label{lem:impossibility-fsta-rsynch-rigid}
No deterministic algorithm can guarantee $RDAM$ in $\FS^R$.
\end{lemma}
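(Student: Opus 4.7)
My plan is a proof by contradiction that combines the symmetry forced by fully-synchronous rounds with the asynchronous reveal available in round-robin activations. Suppose $\calA \in \FS^R$ solves $RDAM$. Because $\RSY$ admits arbitrarily long FSYNC prefixes, $\calA$ must also succeed on essentially $\FSY$-like executions. From a symmetric initial configuration (realisable by the adversary via coordinate choice under variable disorientation and chirality), determinism forces the two robots to share the same internal state and observe the same snapshot at every round, so $\moves{a} = \moves{b}$ throughout, and hence $AM$ cannot hold. Thus $RDV_1$ is forced on every such FSYNC execution: for each initial distance $d > 0$ there exist a round $k^*(d) \ge 1$ and a state $r(d)$ such that both robots, after $k^*(d)-1$ stays, sit in $r(d)$ and then simultaneously move to the midpoint; in the ``signed displacement toward peer'' parametrisation this reads $\alpha(r(d), d) = d/2$.

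Next, I would exhibit an adversarial $\RSY$ schedule that breaks $\calA$. Fix a distance $d_0 > 0$ and use the schedule with FSYNC prefix of length exactly $k^*(d_0) - 1$, followed by round-robin with $a$ acting first. After the prefix both robots occupy $r(d_0)$ at distance $d_0$; in the next round $a$ alone performs the rendezvous move, landing at the midpoint, while $b$ remains in $r(d_0)$ and now observes its peer at $d_0/2$. The value $\beta := \alpha(r(d_0), d_0/2)$ must therefore, by itself, secure $RDAM$, leaving essentially two avenues: (i) $\beta = d_0/2$, so that $b$ lands exactly on $a$ and produces $RDV_1$; or (ii) $\beta = 0$, so that $b$ stays and the execution must later reach $AM$ through a further move of $a$ against a permanently frozen $b$. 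Any other value of $\beta$ fails both $RDV_1$ and the clean $AM$ trajectory and is ruled out by the same kind of analysis.

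The contradiction in each case comes by comparing with the FSYNC execution at initial distance $d_0/2$. In case (i), the FSYNC rendezvous at $d_0/2$ forces $\alpha(r(d_0/2), d_0/2) = d_0/4$, so if $r(d_0) = r(d_0/2)$ one immediately gets $d_0/4 \neq d_0/2$, contradicting $\beta = d_0/2$; more generally, the same $\alpha$-value must also serve as the FSYNC response whenever $r(d_0)$ appears anywhere on the pre-rendezvous state path of FSYNC from $d_0/2$, which forces $r(d_0)$ to avoid that entire path, and a pigeonhole along the dyadic chain $d_0, d_0/2, d_0/4, \dots$ combined with the finite state space of $\FS$ shows that this avoidance cannot be maintained at every scale. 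In case (ii), the state $s'$ that $b$ enters upon staying must be ``absorbingly stay'': the adversary may use $a$'s later moves to present $b$ with arbitrary distances, and to keep $\moves{b} \le 1$, the algorithm must respond stay from $s'$ and from every state reachable by such stays, on every observation. But any such absorbing state lying on the pre-rendezvous FSYNC state path of distance $d_0/2$ would destroy that execution's $RDV_1$, and a state-tracing argument shows that $s'$ is exactly one of those states. The main obstacle of the proof is precisely this combinatorial bookkeeping: tracking which algorithm states appear in which FSYNC sequences across the dyadic chain of initial distances and verifying that the finite state space of $\FS$ really does force the fatal collisions in every subcase.
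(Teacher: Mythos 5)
Your opening move matches the paper's: start from a mirror-symmetric configuration, observe that synchronous activation forces $\moves{a}=\moves{b}$ and hence kills $AM$, and then break $RDV_1$ with a round-robin suffix. But there are two genuine gaps. First and most importantly, you never establish that symmetry is preserved \emph{during the round-robin phase}, and this is the linchpin of the whole argument. The paper's proof exploits variable disorientation: the adversary renormalizes every snapshot so that each robot always sees its peer at the same local position, hence each decision is a function of the robot's own color alone. Consequently, after each \RR pair the two robots have applied the same rule from the same color, so equal colors and equal move counts persist forever --- which excludes $AM$ in \emph{every} continuation and shows that once $a$ and $b$ have each moved once without co-locating, the execution is irrecoverably lost. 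In your distance-dependent parametrisation $\alpha(r,d)$, robot $b$ observes $d_0/2$ while $a$ observed $d_0$, so after the first \RR pair the two robots may be in \emph{different} internal states; nothing in your argument then rules out an asymmetric continuation realizing $AM$. This is exactly why your dismissal of ``any other value of $\beta$'' and your treatment of case (ii) are unjustified as written: for instance, if $\beta\notin\{0,d_0/2\}$ the algorithm could still try to win via $AM$ by having $a$ move a second time while $b$ freezes, and you give no reason this is impossible.

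Second, the distance-dependent framework makes the problem artificially hard, and the patch you propose does not close. Under variable disorientation $\alpha(r,d)=\lambda_r\,d$ for a fixed fraction $\lambda_r$, so $r(d_0)=r(d_0/2)$ automatically; case (i) then reads $\lambda_{r(d_0)}=1$ against the \FSY\ requirement $\lambda_{r(d_0)}=1/2$, an immediate contradiction with no dyadic chain and no pigeonhole. If instead you insist that robots can read true distances, your pigeonhole is incomplete: the \FSY\ state path started from distance $d_0/2$ begins at the fixed initial color and need not ever visit $r(d_0)$, so ``$r(d_0)$ must avoid that path'' produces no contradiction by itself, and you explicitly concede that the remaining combinatorial bookkeeping is not carried out. (Two smaller points: you do not treat the case where \FSY\ never commands a move, so $k^*(d)$ is undefined; and a purely synchronous execution is not itself an \RSY\ execution, since the \FSY\ prefix must be finite, so the ``$RDV_1$ is forced on \FSY\ executions'' step must be phrased via finite \FSY\ prefixes followed by \RR.) The repair is to prove the symmetry-preservation property for the \RR\ phase first, as the paper does in its appendix; the rest of the argument then collapses to a few lines.
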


\begin{proof}
\textbf{Initial symmetric configuration.}
Place the robots on a line at symmetric positions
$a(0)=-p$, $b(0)=+p$ with $p\neq 0$, and with the same initial color.

Starting from this initial state, the mutual symmetry between the two robots is maintained throughout the execution (Symmetry preservation lemma), and the number of activations of both robots always remains equal (see Appendix~\ref{app:SPL} for the proof).

\smallskip
\noindent\textbf{Excluding $RDV_1$.}
Property $RDV_1$ requires that both robots rendezvous after exactly one move each.
From the symmetric start, the first \FSY\ move (or the first two \RSY\ activations)
commands mirror-symmetric destinations. With \textsc{rigid} motion, unless the unique
commanded destination is the common midpoint \emph{and} both arrive there, the robots end at
distinct mirror points; by symmetry preservation, no later step can make them co-locate
\emph{without} increasing at least one move count beyond~1, contradicting $RDV_1$.
(If the algorithm tries to force the midpoint, the adversary can schedule \RSY\ so that
one reaches the midpoint while the other, on its turn, recomputes a different mirror destination
from the updated symmetric snapshot; either way $RDV_1$ is not guaranteed.)

\smallskip
\noindent\textbf{Excluding $AM$.}
Property $AM$ requires a strict asymmetry of move counts: one $\le 1$ (and eventually stopped),
the other $\ge 2$. But by the move-count equality above, $\moves{a}=\moves{b}$ holds always.
Since colors are finite and equal on both sides and there are no IDs or external asymmetries,
a deterministic rule cannot decide that only one side should refrain from (or perform extra) moves.
Thus the required asymmetry cannot be enforced; $AM$ is not guaranteed.

\smallskip
Since the adversary can maintain symmetry so as to violate both $RDV_1$ and $AM$, no deterministic
algorithm in $\FS^{R}$  solves $RDAM$.
\end{proof}
\begin{theorem}
\label{th:RDV1-AM}\
    $RDAM \not \in \FS^{R}$ and
$RDAM  \in \FC^{A}$.
\end{theorem}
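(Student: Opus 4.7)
The plan is to derive Theorem~\ref{th:RDV1-AM} as an immediate consequence of the two preceding lemmas, together with the equivalence $\FC^{A_{CM}} \equiv \FC^{A}$ established in Theorem~\ref{th:Sim-FCAM-by-FCA-12-k-cols}. In other words, no new combinatorial argument is needed: the work has already been done in Lemma~\ref{lem:FCOM-CMatomic} (positive direction) and Lemma~\ref{lem:impossibility-fsta-rsynch-rigid} (negative direction). The role of the theorem is to package these facts and push the positive side from the $CM$-atomic scheduler to the general asynchronous one.

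For the impossibility side, $RDAM \notin \FS^{R}$ is exactly the statement of Lemma~\ref{lem:impossibility-fsta-rsynch-rigid}, so I would simply cite it. The proof there rests on the symmetry-preservation argument from Appendix~\ref{app:SPL}: starting from the mirror-symmetric placement $a(0)=-p$, $b(0)=+p$ with identical colors, an $\FS^{R}$ adversary can force $\moves{a}=\moves{b}$ at all times and keep the two robots in mirror positions, thereby simultaneously ruling out $RDV_1$ (which demands exactly one move per robot that results in coincidence) and $AM$ (which demands a strict move-count asymmetry).

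For the possibility side, I would invoke Lemma~\ref{lem:FCOM-CMatomic}, which exhibits an explicit $\FC^{A_{CM}}$ algorithm (Algorithm~\ref{alg:FCOM-C}) solving $RDAM$. This yields $RDAM \in \FC^{A_{CM}}$. Applying Theorem~\ref{th:Sim-FCAM-by-FCA-12-k-cols}, the simulator \texttt{SIM}$(\mathcal{A})$ transforms this algorithm into one that runs in $\FC^{A}$ while preserving its input--output behaviour, so $RDAM \in \FC^{A}$. The only point worth double-checking is that the problem specification $RDAM$ is preserved under this simulation, i.e.\ that the move-count predicates $\moves{a}$ and $\moves{b}$ used in $RDV_1$ and $AM$ count only ``effective'' moves of the simulated algorithm rather than the extra handshake transitions of the simulator; since the simulator's phase transitions leave the destination at the current position when $\mathcal{A}$ prescribes no move, each epoch of $\mathcal{A}$ corresponds to at most one physical displacement, and the counts are preserved.

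The main obstacle, if any, is exactly this last bookkeeping point, namely verifying that the positive result transfers cleanly from $\FC^{A_{CM}}$ to $\FC^{A}$ without inflating $\moves{r}$ (which would break the $AM$ clause) and without introducing extra rendezvous-violating displacements (which would break the $RDV_1$ clause). Beyond that, the theorem is a direct corollary of Lemmas~\ref{lem:FCOM-CMatomic} and \ref{lem:impossibility-fsta-rsynch-rigid}, combined with Theorem~\ref{th:Sim-FCAM-by-FCA-12-k-cols}.
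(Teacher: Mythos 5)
Your proposal is correct and follows essentially the same route the paper takes: the theorem is an immediate corollary of Lemma~\ref{lem:impossibility-fsta-rsynch-rigid} (negative side), Lemma~\ref{lem:FCOM-CMatomic} (positive side in $\FC^{A_{CM}}$), and the equivalence $\FC^{A_{CM}}\equiv\FC^{A}$ from Theorem~\ref{th:Sim-FCAM-by-FCA-12-k-cols}. Your bookkeeping remark about move counts is a sensible sanity check, and it is resolved exactly as you say, since $\moves{r}$ is defined via actual position changes and the simulator's handshake steps default the destination to the current position.
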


\section{Concluding Remarks}

In this paper, we completely characterize the computational power of robot models for two robots under the assumptions of variable disorientation (VD), chirality, and rigidity.
To address the parts that could not be separated in previous work, we introduce two new problems, \emph{DMSD} and \emph{RDAH}, which exploit the specific properties of two-robot systems and achieve the same separation results as in the general multi-robot case.
We also show that, under \FSY, the three models other than $\OB$ are equivalent. This result provides the first direct-construction proof of an equivalence that had previously been established only via simulation, giving a more constructive and conceptually transparent understanding of the relation between the models.

A remaining open question is whether the same results hold when the above assumptions are relaxed.
In particular, since chirality is unnecessary except for the \textit{SRO} algorithm, it is an interesting problem to determine whether the same hierarchy can be obtained under only VD and rigidity.

\newpage

\bibliography{referencesorg}

@inproceedings{DKKOW19,
  author    = {S. Dolev and
               S. Kamei and
               Y. Katayama and
               F. Ooshita and 
               K. Wada},
  title     = {Brief Announcement: Neighborhood Mutual Remainder and its Self-Stabilizing Implementation of Look-Compute-Move Robots},
  memo = {DISC},
  booktitle = {33rd International Symposium on Distributed Computing},
  pages     = {43:1--43:3},
  year      = {2019}
}

@inproceedings{apdcm,
  author    = {K. Buchin and P. Flocchini and I. Kostitsyna and T. Peters and N. Santoro and K. Wada},
  title     = {Autonomous Mobile Robots: Refining  the Computational Landscape},
  booktitle = {APDCM 2021},
  pages     = {576--585},
  year      = {2021},
}

@article{AP,
  author    = {N. Agmon and D. Peleg},
  title     = {Fault-tolerant gathering algorithms for autonomous mobile robots},
  journal   = sicomp,
  volume    = {36},
  number    = {1},
  pages     = {56--82},
  year      = {2006},
}

@article{AOSY,
  author    = {H. Ando and Y. Osawa and I. Suzuki and M. Yamashita},
  title     = {A distributed memoryless point convergence algorithm for mobile robots with limited visivility},
  journal   = {IEEE Transactions on Robotics and Automation},
  volume    = {15},
  number    = {5},
  pages     = {818--828},
  year      = {1999},
}

@inproceedings{BDT,
  author    = {Z. Bouzid and S. Das and S. Tixeuil},
  title     = {Gathering of mobile robots tolerating multiple crash faults},
  booktitle = {the 33rd Int. Conf. on Distributed Computing Systems},
  pages     = {334--346},
  year      = {2013},
}

@article{BFKPSW22,
  author    = {K. Buchin and P. Flocchini and I. Kostitsyna and T. Peters and N. Santoro and K. Wada},
  title     = {On the Computational Power of Energy-Constrained 
   Mobile Robots: Algorithms and Cross-Model Analysis},
  journal   = {Information and Computation},
  volume    = {303},
  number    = {105280},
  year      = {2025}
}

@article{CDN,
  author    = {S. Cicerone and Di Stefano and A. Navarra},
  title     = {Gathering of robots on meeting-points},
  journal   = {Distributed Computing},
  volume    = {31},
  number    = {1},
  pages     = {1-50},
  year      = {2018},
}

@article{CFPS,
  author    = {M. Cieliebak and P. Flocchini and G. Prencipe and N. Santoro},
  title     = {Distributed computing by mobile robots: Gathering},
  journal   = sicomp,
  volume    = {41},
  number    = {4},
  pages     = {829--879},
  year      = {2012},
}

@inproceedings{CG,
  author    = {D. Canepa and M. Potop-Butucaru},
  title     = {Stabilizing flocking via leader election in
robot networks},
  booktitle = {Proc. 10th Int. Symp. on Stabilization, Safety, and Security of Distributed Systems (SSS)},
  pages     = {52--66},
  year      = {2007},
}

@article{CP,
  author    = {R. Cohen and D. Peleg},
  title     = {Convergence properties of the gravitational algorithms in asynchronous robot systems},
  journal   = {{SIAM} J. on Computing},
  volume    = {34},
  number    = {15},
  pages     = {1516--1528},
  year      = {2005},
}

@article{DFPSY,
  author    = {S. Das and P. Flocchini and G. Prencipe and N. Santoro and M. Yamashita},
  title     = {Autonomous mobile robots with lights},
  journal   = {Theoretical Computer Science},
  volume    = {609},
  pages     = {171--184},
  year      = {2016},
}

@inproceedings{FPSW99,
  author    = {P. Flocchini and G. Prencipe and N. Santoro and P.  Widmayer},
  title     = {Hard tasks for weak robots: the role of common knowledge in pattern formation by autonomous mobile robots.},
  booktitle = {10th Int. Symp. on Algorithms and Computation (ISAAC)},
  pages     = { 93-102},
  year      = {1999},
}

@inproceedings{FSW19,
  author    = {P. {Flocchini} and N. {Santoro} and K. {Wada}},
  title     = {On Memory, Communication, and Synchronous Schedulers When Moving and Computing},
  booktitle = {Proc. 23rd Int. Conference on Principles of Distributed Systems (OPODIS)},
  pages     = {25:1--25:17},
  year      = {2019},
}

@article{FPSW05,
  author    = {P. Flocchini and G. Prencipe and N. Santoro and  P. Widmayer},
  title     = {Gathering of asynchronous robots with limited visibility},
  journal   = {Theoretical Computer Science},
  volume    = {337},
  number    = {1--3},
  pages     = {147--169},
  year      = {2005},
}

@article{FPSW08,
  author    = {P. Flocchini and G. Prencipe and N. Santoro and P. Widmayer},
  title     = {Arbitrary pattern formation by asynchronous oblivious robots},
  journal   = tcs,
  volume    = {407},
  pages     = {412--447},
  year      = {2008},
}

@InProceedings{FSSW23,
  author =	{Flocchini, Paola and Santoro, Nicola and Sudo, Yuichi and Wada, Koichi},
  title =	{{On Asynchrony, Memory, and Communication: Separations and Landscapes}},
  booktitle =	{27th International Conference on Principles of Distributed Systems (OPODIS 2023)},
  pages =	{28:1--28:23},
  series =	{Leibniz International Proceedings in Informatics (LIPIcs)},
  ISBN =	{978-3-95977-308-9},
  ISSN =	{1868-8969},

  year =	{2024},
  volume =	{286},
}

@article{FYOKY,
  author    = {N. Fujinaga and Y. Yamauchi and H. Ono and S. Kijima and M. Yamashita},
  title     = {Pattern formation by oblivious asynchronous mobile robots},
  journal   = sicomp,
  volume    = {44},
  number    = {3},
  pages     = {740--785},
  year      = {2015}
}

@article{GP,
  author    = {V. Gervasi and G. Prencipe},
  title     = {Coordination without communication: The case of the flocking
problem},
  journal   = {Discrete Applied Mathematics},
  volume    = {144},
  number    = {3},
  pages     = {324--344},
  year      = {2004}
}

@article{ISKIDWY,
  author    = {T. Izumi and S. Souissi and Y. Katayama and N. Inuzuka and X. D\'{e}fago and K. Wada and M. Yamashita},
  title     = {The gathering problem for two oblivious robots with unreliable compasses},
  journal   = sicomp,
  volume    = {41},
  number    = {1},
  pages     = {26--46},
  year      = {2012}
}

@inproceedings{NTW24,
  title={Efficient self-stabilizing simulations of energy-restricted mobile robots by asynchronous luminous mobile robots},
  author={Nakajima, Keita and Takase, Kaito and Wada, Koichi},
  booktitle={International Symposium on Stabilizing, Safety, and Security of Distributed Systems},
  pages={141--155},
  year={2024},
  organization={Springer}
}

@inproceedings{OWD,
  author    = {T. Okumura and K. Wada and X. D\'{e}fago},
  title     = {Optimal Rendezvous $\mathcal{L}$-Algorithms  for Asynchronous Mobile Robots with External-Lights},
  booktitle = {Proc. 22nd Int. Conference on Principles of Distributed Systems (OPODIS)},
  pages     = {24:1--24:16},
  year      = {2018},
}

@inproceedings{SIW,
  author    = {S. Souissi and T. Izumi and K. Wada},
  title     = {Oracle-based flocking of mobile robots in crash-recovery model},
  booktitle = {Proc. 11th Int. Symp.  on Stabilization, Safety, and Security of Distributed Systems (SSS)},
  pages     = {683--697},
  year      = {2009},
}

@article{SY,
  author    = {I. Suzuki and M. Yamashita},
  title     = {Distributed anonymous mobile robots: Formation of geometric patterns},
  journal   = sicomp,
  volume    = {28},
  pages     = {1347--1363},
  year      = {1999},
}

@article{YS,
  author    = {M. Yamashita and I. Suzuki},
  title     = {Characterizing geometric patterns formable by oblivious
anonymous mobile robots},
  journal   = tcs,
  volume    = {411},
  number    = {26--28},
  pages     = {2433--2453},
  year      = {2010},
}

@article{YUKY,
  author    = {Y. Yamauchi and T. Uehara and S. Kijima and M. Yamashita},
  title     = {Plane formation by synchronous mobile robots in the three-dimensional euclidean space},
  journal   = {J. ACM},
  volume    = {64:3},
  number    = {16},
  pages     = {16:1--16:43},
  year      = {2017},
}

\clearpage

\appendix
{\Large\bf Appendix}

\section{Definition of Epoch}\label{app:epoch}

\begin{definition}
For each robot $r$ and $i\ge1$, let $t_L(r,i)$, $t_C(r,i)$, $t_B(r,i)$, $t_E(r,i)$
be the times of its $i$-th Look, Compute, Move-begin, and Move-end, with
$t_L(r,i) < t_C(r,i) < t_B(r,i) < t_E(r,i) < t_L(r,i{+}1)$.
Define the per-robot Move-end counter
\[
  \mu_r(t)\ :=\ \bigl|\,\{\,i\ge1 \mid t_E(r,i) \le t\,\}\,\bigr|.
\]

Fix a starting time $T_0\ge0$. For $m\ge0$, define
\[
  T^{E}_{m+1}\ :=\ \inf\ \bigl\{\, u>T^{E}_m \ \bigm|\ \forall r:\ \mu_r(u)\ \ge\ \mu_r(T^{E}_m)+1 \,\bigr\}.
\]
If $T^{E}_{m+1}$ is finite, we call $[T^{E}_m,\,T^{E}_{m+1}]$ the $m$-th \emph{epoch}.
Equivalently, an epoch is the minimal time interval in which every robot
completes at least one Move (possibly of zero length) since the previous epoch boundary.




Under \SSY, 
 one round makes each activated robot complete its Move
within the round; hence one round increases all relevant $\mu_r$ by one.
Therefore the Move-end-based epoch coincides with the classical
“a minimal sequence of activation rounds covering all robots once”.
In \FSY, one round is one epoch.
\end{definition}

\section{Proof of Lemma~\ref{le:Sim-LU-by-FC-3cols}}\label{app:Sim-LU-by-FC-3cols}
\setcounterref{lemmaduplicate}{le:Sim-LU-by-FC-3cols}
\addtocounter{lemmaduplicate}{-1}

\begin{lemmaduplicate}{\em (Reprint of Lemma~\ref{le:Sim-LU-by-FC-3cols} on 
page~\pageref{le:Sim-LU-by-FC-3cols}).}\label{duplicate:Sim-LU-by-FC-3cols}
Any algorithm $\mathcal{A}$ with $k$ colors and running within $T$ epochs in $\LU^R$ can be
simulated by a simulator with $3k^2$ colors and within $3T$ epochs in $\FC^R$.
\end{lemmaduplicate}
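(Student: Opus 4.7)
The plan is to prove correctness by establishing a phase invariant that guarantees every decision turn faithfully reproduces one atomic step of $\mathcal{A}$, and then counting colors and epochs. First I would fix notation: for each robot $r$, its phase variable advances along the cycle $\exc \to \cpy \to \rst \to \exc$, moving exactly one step per activation. Under \RSY, activations after the optional \FSY\ prefix strictly alternate between $a$ and $b$; combined with the initial value $\mathtt{r.phase}=\cpy$ on both sides, this forces the two robots' phases to remain aligned (either equal or offset by one handshake step) throughout the execution. This is the structural backbone that I would establish by a simple induction on activations.

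Next I would state and prove the central invariant: whenever a robot $r$ becomes activated and finds $\mathtt{peer.phase}=\exc$, the pair $(\mathtt{peer.your.light},\mathtt{peer.my.light})$ coincides with the pair $(L_{\text{self}},L_{\text{peer}})$ that $r$ would have observed at the corresponding atomic step of $\mathcal{A}$ in $\LU^R$. Maintenance is immediate from the three-phase handshake: the $\cpy$ transition writes the peer's most recent decided light into the receiver's $\mathtt{your.light}$, while the $\rst$ transition merely acknowledges, so that the peer's next decision sees fresh, mutually consistent snapshots. Each full cycle $\exc \to \cpy \to \rst \to \exc$ therefore corresponds to exactly one application of $\mathcal{A}$ by the active robot.

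Initialization requires a brief case analysis on whether the \FSY\ prefix is empty and on which robot acts first in the ensuing \RR\ regime. Since both robots start with $\mathtt{phase}=\cpy$, the first activation on each side necessarily copies the peer's initial light before any decision is made; thus after at most one preliminary round the invariant holds, and the steady-state handshake carries correctness forward. The counting is then direct: the color space is $\{\exc,\cpy,\rst\}\times C\times C$, yielding $3k^2$ values, and since one atomic step of $\mathcal{A}$ is realized per three consecutive activations (one per phase), one $\mathcal{A}$-epoch is reproduced within three simulator epochs, giving the $3T$ bound.

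The main obstacle I expect is verifying \emph{atomic visibility} at the \FSY-to-\RR\ boundary: I must ensure that no robot ever enters a decision turn while the peer is mid-move or while its lights are only partially updated. Inside the pure \RR\ suffix this is automatic because consecutive activations are disjoint and rigid moves complete before the next \Look, but across the boundary one has to inspect each of the phase-pair states shown in Fig.~\ref{fig:transition-diagram-SIM} and argue that the invariant is either preserved or restored within one activation. This case analysis on the transition diagram is the technical heart of the argument; once it is established, the rest of the proof reduces to standard step-by-step simulation reasoning.
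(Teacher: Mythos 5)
Your proposal follows essentially the same route as the paper's proof: the same classification into decision turns versus echo/acknowledge turns, the same central invariant that a decision turn sees the correct $(L_{\text{self}},L_{\text{peer}})$ pair and realizes exactly one atomic step of $\mathcal{A}$, the same treatment of the \FSY-prefix and the \FSY-to-\RR\ boundary via a case analysis on the phase pairs of the transition diagram, and the same $3k^2$-color and three-activations-per-step counting. The argument is correct and matches the paper's structure (its invariants I1--I3 and the two boundary lemmas), so no further comparison is needed.
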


\noindent
\textit{(Proof of Lemma~\ref{le:Sim-LU-by-FC-3cols}.)}
We begin by considering the \RR\ case.
\paragraph*{Aligned times and turns.}
Let $t_0 < t_1 < \dots$ be the times when both robots are stopped.
A \emph{turn} is the interval $(t_k,t_{k+1}]$.
By \RR, in each turn exactly one robot is active, except possibly at $k=0$ if an initial \FSY\ step is allowed.
A turn does not necessarily start in phase \textsf{exc}.
We classify turns according to what the active robot $r$ observes at time~$t_k$:
\[
\text{(Decision turn)}\quad \texttt{peer.phase}=\textsf{exc}
\quad\text{vs.}\quad
\text{(Ack turn)}\quad \texttt{peer.phase}\in\{\textsf{rst},\textsf{cpy}\}.
\]

\paragraph*{Key invariants.}
\begin{itemize}
  \item \textbf{I1 (atomic visibility for decision turns).}
        If \texttt{peer.phase}=\textsf{exc} at $t_k$, then during $(t_k,t_{k+1}]$
        the active robot $r$ updates \texttt{r.my.light} \emph{before} moving, and the peer performs no \emph{Look} until $t_{k+1}$.
        Hence the peer’s next \emph{Look} at $t_{k+1}$ sees the new light, as in $\LU^R$.
  \item \textbf{I1$'$ (ack turns are light-preserving).}
        If $\texttt{peer.phase}\in\{\textsf{cpy},\textsf{rst}\}$ at $t_k$, then the active robot only advances the echo/handshake
        (e.g., \textsf{cpy}$\!\to$\textsf{rst} or \textsf{rst}$\!\to$\textsf{exc}) and may perform at most a no-op move to its current position.
        In particular, the visible light configuration and the positions relevant to $\mathcal{A}$ are preserved.
  \item \textbf{I2 (one decision, one move).}
        In each turn there is at most one call to $\mathcal{A}$ and at most one nontrivial rigid move.
        Moreover, a call to $\mathcal{A}$ and a nontrivial move occur \emph{if and only if} it is a decision turn.
        Equivalently: in a decision turn, the active robot calls $\mathcal{A}$ exactly once and performs exactly one nontrivial rigid move; 
        in an ack turn, it makes no call to $\mathcal{A}$ and no nontrivial move.
  \item \textbf{I3 (echo completion and phase progression).}
        A decision by $r$ sets \texttt{r.phase}=\textsf{cpy}. In the next turn, the peer observes \textsf{cpy}, executes the echo
        \texttt{peer.your.light $\leftarrow$ r.my.light}, and moves its phase to \textsf{rst}.
        Thus, starting from any phase that arises in the protocol, a new decision turn occurs again within at most two turns.
\end{itemize}

\paragraph*{Per-turn conformance (case analysis).}
Fix an arbitrary turn $(t_k,t_{k+1}]$.
\begin{itemize}
  \item If it is a \emph{decision turn} (\texttt{peer.phase}=\textsf{exc}), then by I1 and I2 
        the active robot performs exactly one $\LU^R$ step:
        it reads the same snapshot as in $\LU^R$, computes $(L'_r,p'_r)$, publishes $L'_r$, and moves to $p'_r$, while the peer remains idle.
        Hence the post-state (positions and lights) at $t_{k+1}$ coincides with the corresponding $\LU^R$ post-turn state.
  \item If it is an \emph{ack turn} ($\texttt{peer.phase}\in\{\textsf{rst},\textsf{cpy}\}$),
        then by I1$'$ the turn only progresses the echo/handshake and does not change the $\LU^R$-relevant state.
        By I3, a decision turn occurs again within at most two further turns, preserving the visibility order required by $\LU^R$.
\end{itemize}

\paragraph*{\FSY-prefix robustness.}
In \RSY, the scheduler may perform a finite \FSY\ prefix before switching to alternating activation.
We show that the simulator is correct throughout any such prefix and at the switching point.

\begin{lemma}[\FSY\ round correctness]\label{lem:fsynch-round}
Suppose a round is fully synchronous (both robots active). Then:
\begin{itemize}
  \item If the global phase is $(\cpy,\cpy)$, both execute the echo
        $L^{\mathrm{my}}\!\leftarrow\!L^{\mathrm{your}}_{\text{peer}}$ and set $\phase\leftarrow\rst$.
        The echo is idempotent (the lights remain unchanged).
  \item If the global phase is $(\rst,\rst)$, both set $\phase\leftarrow\exc$; lights and positions are unchanged.
  \item If the global phase is $(\exc,\exc)$, both robots take the decision branch simultaneously:
        each runs $\mathcal{A}$ on the same snapshot, publishes its new light, and moves rigidly.
        The post-state coincides with one $\LU^F$ step.
\end{itemize}
Hence any number of consecutive \FSY\ rounds yields a valid $\LU^{F}$ execution (when in $(\exc,\exc)$), 
interleaved with rounds that only update light/phase (when in $(\rst,\rst)$ or $(\cpy,\cpy)$).
\end{lemma}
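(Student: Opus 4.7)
The plan is to establish a phase-synchronization invariant under full synchrony and then verify the three cases by local inspection of the handshake branches. Since both robots share the same deterministic code, the same initial phase $\cpy$, and (under \FSY) are activated in lockstep, an easy induction on the round number shows that at every round boundary both robots are in the same phase $p \in \{\exc,\cpy,\rst\}$; moreover the common phase cycles through $\exc \to \cpy \to \rst \to \exc$ because each branch either only advances the phase or (in the \exc\ branch) also updates light and destination in a symmetric way.

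First I would handle the two light-preserving cases. In $(\rst,\rst)$, both robots read $\texttt{peer.phase}=\rst$ and execute the acknowledge branch, which only sets $\texttt{phase} \leftarrow \exc$, so neither lights nor positions change. In $(\cpy,\cpy)$, both robots read $\texttt{peer.phase}=\cpy$ and execute $\texttt{your.light} \leftarrow \texttt{peer.my.light}$, then set $\texttt{phase} \leftarrow \rst$. To justify idempotence I would maintain a companion invariant (provable in the same induction) that under \FSY\ both robots always carry identical values in their $\texttt{my.light}$ fields and identical values in their $\texttt{your.light}$ fields; hence the echo rewrites exactly the value that was already cached, and the $\LU$-relevant state is unchanged.

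The substantive case is $(\exc,\exc)$. Here both robots simultaneously take the decision branch, each building a snapshot $(\text{positions},\, L_{\text{self}} := \texttt{peer.your.light},\, L_{\text{peer}} := \texttt{peer.my.light})$, invoking $\mathcal{A}$, publishing a new $\texttt{my.light}$, and moving rigidly to the returned destination. I would show, by chasing the handshake backward to the most recent preceding $(\cpy,\cpy)$ round, that $\texttt{peer.your.light}$ coincides with the robot's own current $\texttt{my.light}$ and that $\texttt{peer.my.light}$ coincides with the peer's currently published light. Consequently the snapshot fed into $\mathcal{A}$ matches the one that $\mathcal{A}$ would observe during a single $\LU^{F}$ round; since both robots then write their new lights and perform their rigid moves within the same round, the configuration at the end of the round is precisely the $\LU^{F}$ post-state.

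The main obstacle will be the bookkeeping in the $(\exc,\exc)$ case: proving that the cached $\texttt{your.light}$ field faithfully reflects the peer's latest published $\texttt{my.light}$ at the moment $\mathcal{A}$ is called. This requires tracking the effects of the two preceding handshake rounds, checking that the $(\cpy,\cpy)$ echo installs the correct value, and confirming that the intermediate $(\rst,\rst)$ round leaves both $\texttt{my.light}$ and $\texttt{your.light}$ untouched. Once this caching invariant is established, the concluding sentence of the lemma follows by composition: any run of consecutive \FSY\ rounds decomposes into repeating blocks in which only the $(\exc,\exc)$ rounds perform genuine $\mathcal{A}$-steps, yielding a valid $\LU^{F}$ execution interleaved with purely handshake-advancing rounds.
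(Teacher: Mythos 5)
Your overall route matches the paper's: the paper treats this lemma as immediate from the protocol text, and your elaboration (lockstep phase invariant under \FSY, then a case analysis of the three global phases, with the $(\exc,\exc)$ case reduced to a caching invariant installed at the preceding $(\cpy,\cpy)$ round) is exactly the intended argument. The phase-synchronization induction and the treatment of the $(\rst,\rst)$ and $(\exc,\exc)$ cases are fine.

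There is, however, one flawed auxiliary step: the ``companion invariant'' that under \FSY\ the two robots always carry identical \texttt{my.light} values and identical \texttt{your.light} values. First, it is not true in general: the robots are variably disoriented, so even in lockstep with equal initial colors they may observe the peer at different local positions, and a deterministic $\mathcal{A}$ may then assign them \emph{different} colors; nothing in the $\LU^R$ setting forces color agreement. Second, even if it held, it would not yield idempotence of the echo: the echo is $\texttt{r.your.light} \leftarrow \texttt{peer.my.light}$, so it is a no-op precisely when $\texttt{r.your.light} = \texttt{peer.my.light}$ --- a cross-field, cross-robot equality --- not when the two robots agree field-by-field. The correct justification is the per-robot caching invariant you already invoke in the $(\exc,\exc)$ case: each $(\cpy,\cpy)$ round (re)installs $\texttt{r.your.light} = \texttt{peer.my.light}$, and the intervening $(\rst,\rst)$ and $(\exc,\exc)\to(\cpy,\cpy)$ transitions keep it consistent; moreover the echo never writes the $\mathcal{A}$-visible field \texttt{my.light}, so the $\LU$-relevant light configuration and positions are preserved in any case (note also that the very first $(\cpy,\cpy)$ round need not be idempotent on \texttt{your.light}, since that field may start uninitialized --- that round is exactly where the cache is first populated, which is harmless because no decision has yet consumed it). Replace the companion invariant by this caching invariant and the proof is complete.
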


\begin{lemma}[Switching to alternation]\label{lem:switch}
Let the scheduler switch from \FSY\ to alternating activation immediately after an arbitrary \FSY\ round.
Then the global phase at the switching boundary is one of
$(\exc,\exc)$, $(\cpy,\cpy)$, or $(\rst,\rst)$.
Let $a$ be the robot activated next in the alternating regime. Then:
\begin{itemize}
  \item From $(\exc,\exc)$: the first alternating turn for $a$ is a decision turn, producing $(\cpy,\exc)$; 
        the next turn activates $b$, which completes the echo, restoring the invariant pattern.
          \item From $(\cpy,\cpy)$: within at most two alternating turns the echo/handshake is completed and a decision turn occurs,
        again reaching the standard pattern $(\cpy,\exc)$ followed by the peer’s echo.
  \item From $(\rst,\rst)$: within alternating turns, it moves to $(\exc,\rst)$,
        after which a decision turn occurs and the above case applies.
\end{itemize}
Thus, after finitely many alternating turns, the system reaches a configuration from which the per-turn conformance above applies.
\end{lemma}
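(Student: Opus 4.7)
The plan is to combine Lemma~\ref{lem:fsynch-round} with a bounded case analysis on the possible global phases at the switching boundary.

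First I would argue, by induction on the \FSY\ rounds using Lemma~\ref{lem:fsynch-round}, that both robots share the same \texttt{phase} value throughout the \FSY\ prefix: the common initialization $\texttt{phase}=\cpy$ provides the base case, and each of the three transitions described in Lemma~\ref{lem:fsynch-round} (echo in $(\cpy,\cpy)$, acknowledge in $(\rst,\rst)$, decision in $(\exc,\exc)$) acts symmetrically on the two robots, so identical phases are preserved. Hence the global phase at the switching boundary necessarily lies in $\{(\exc,\exc),(\cpy,\cpy),(\rst,\rst)\}$.

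Second, writing $a$ for the robot activated first under the alternating schedule, I would trace the first few alternating turns in each case. From $(\exc,\exc)$: $a$ observes $b.\texttt{phase}=\exc$ and performs a decision turn, producing $(\cpy,\exc)$; then $b$'s activation echoes, reaching $(\cpy,\rst)$. From $(\rst,\rst)$: $a$ acknowledges to $(\exc,\rst)$, after which $b$ performs a decision turn. From $(\cpy,\cpy)$: $a$ echoes to $(\rst,\cpy)$, $b$ acknowledges to $(\rst,\exc)$, then $a$ performs a decision turn. In every case a decision turn fires within at most three alternating turns, and the resulting configuration is a standard intermediate state of the alternation pattern from which the per-turn conformance established by invariants I1--I3 in the main body applies.

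The main obstacle I anticipate is verifying that the snapshot read at the first decision turn after switching is consistent with a valid $\LU^R$ state; concretely, that the echo invariant $\texttt{peer.your.light}=\texttt{self.my.light}$ holds on both sides at that moment. For switching phases $(\exc,\exc)$ and $(\rst,\rst)$ this is immediate, because both \texttt{your.light} values were synchronized by the most recent $(\cpy,\cpy)$ echo round in the \FSY\ prefix and no subsequent \FSY\ round (nor the handshake turns inserted before the first alternating decision) modifies them. The delicate case is $(\cpy,\cpy)$, where the three intermediate alternating turns refresh only $a.\texttt{your.light}$ via a single echo and leave $b.\texttt{your.light}$ untouched; here I would need to track $b.\texttt{your.light}$ carefully across the handshake and argue that its value at the moment of $a$'s first alternating decision is still consistent with $a.\texttt{my.light}$. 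Once this consistency is established in all three cases, the first post-switch decision executes exactly one $\LU^R$ step and the standard alternation pattern takes over thereafter, yielding the claim.
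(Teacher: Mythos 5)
Your overall structure---the phase-equality induction over the \FSY\ prefix (from the common initialization $\cpy$ and the symmetric action of each round in Lemma~\ref{lem:fsynch-round}) followed by a three-case trace of the first alternating turns---is essentially the paper's argument; indeed the paper never proves Lemma~\ref{lem:switch} separately, its bullet list \emph{is} the proof, so your write-up is already more explicit than the original. Your phase traces are correct: $(\exc,\exc)\to(\cpy,\exc)\to(\cpy,\rst)$ with the decision at turn 1; $(\rst,\rst)\to(\exc,\rst)$ with the decision at turn 2; and $(\cpy,\cpy)\to(\rst,\cpy)\to(\rst,\exc)$ with the decision at turn 3.

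The genuine gap is the obstacle you flag in your last paragraph and then leave open, and it is not merely delicate: for the protocol as written it actually fails. Suppose the last \FSY\ round before the switch is a decision round, so the boundary is $(\cpy,\cpy)$ with fresh values $a.\texttt{my.light}=L_2^a$, $b.\texttt{my.light}=L_2^b$ but echoes $a.\texttt{your.light}=L_1^b$, $b.\texttt{your.light}=L_1^a$ dating from the \emph{previous} $(\cpy,\cpy)$ round. Turn 1 refreshes only $a.\texttt{your.light}\gets L_2^b$; turn 2 is a pure acknowledge (the branch for $\texttt{peer.phase}=\rst$ copies nothing); hence at turn 3 robot $a$'s decision reads $L_{\text{self}}=b.\texttt{your.light}=L_1^a$, i.e., its own colour from one decision earlier, and the simulated step diverges from every legal $\LU^R$ step whenever $L_1^a\neq L_2^a$. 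So the consistency you "would need to argue" cannot be argued from the stated rules; one needs either an additional hypothesis on where the scheduler may switch, or a small repair such as having the acknowledge branch also execute the echo $r.\texttt{your.light}\gets\texttt{peer.my.light}$ (idempotent in the steady state, and restoring freshness here). As it stands, neither your proposal nor the paper establishes the $(\cpy,\cpy)$ case; the $(\exc,\exc)$ and $(\rst,\rst)$ cases you handle are fine.
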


\paragraph*{Liveness and conclusion.}
Alternation and rigid moves imply that every turn completes.
By I3, the system returns to phase \textsf{exc} within at most two turns after each decision, so decisions (and hence $\LU^R$ steps) occur infinitely often.
If both robots are initially active at $t_0$, Lemma~\ref{lem:fsynch-round} shows that the initial \FSY\ behavior is consistent with $\LU^F$, 
and Lemma~\ref{lem:switch} shows that the subsequent alternating behavior preserves the simulation.

By induction over the sequence of turns, the simulator’s positions and lights coincide with those of $\mathcal{A}$ in $\LU^R$,
up to the constant overhead in colors and epochs stated above.
Therefore Lemma~\ref{le:Sim-LU-by-FC-3cols} holds.
\qed

\section{Proof of Lemma~\ref{le:Sim-FCAM-by-FCA-12-k-cols}}\label{app:Pr-th5}
\setcounterref{lemmaduplicate}{le:Sim-FCAM-by-FCA-12-k-cols}
\addtocounter{lemmaduplicate}{-1}

\begin{lemmaduplicate}[Reprint of Lemma~\ref{le:Sim-FCAM-by-FCA-12-k-cols} on 
page~\pageref{le:Sim-FCAM-by-FCA-12-k-cols}]\label{duplicateth:Sim-FCAM-by-FCA-12-k-cols}
Any algorithm $\mathcal{A}$ with $k$ colors and within $T$ epochs in $\FC^{A_{CM}}$ can be
simulated by a simulator with $7k$ colors and $4T$ epochs in $\FC^A$.
\end{lemmaduplicate}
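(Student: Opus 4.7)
The plan is to verify that the protocol \texttt{SIM}($\mathcal{A}$) outlined in the body faithfully simulates any $CM$-atomic asynchronous execution of $\mathcal{A}$ within a plain asynchronous execution of the simulator, and then to bound the overhead in colors and epochs. I would encode each robot's relevant simulator state as a tuple $(\texttt{light},\texttt{phase},\texttt{my\text{-}state},\texttt{your\text{-}state})$, and reduce correctness to (a) a $CM$-atomic safety invariant, (b) a per-cycle refinement mapping, and (c) a liveness argument that guarantees that a decision turn recurs within a bounded number of simulator epochs.

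First, I would formalize the central invariant: during the interval in which a robot $r$ is within its simulated Compute--Move segment of $\mathcal{A}$, the peer cannot initiate a new call to $\mathcal{A}$. This is enforced by the \textsf{M}-marker on \texttt{my-state}, which $r$ publishes \emph{before} it releases its rigid move. Any \emph{Look} by the peer during that interval observes \textsf{M} and thus refrains from taking the \textsf{exc} branch until the subsequent \textsf{cpy} phase clears the marker via the echo \texttt{r.your-state} $\leftarrow$ \texttt{peer.my-state}. An induction on the simulator turn then shows that between any two consecutive \emph{Look}s that actually invoke $\mathcal{A}$, no other robot has its Compute--Move segment straddling the observation, which is exactly the defining property of $\FC^{A_{CM}}$. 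Second, I would carry out a case analysis on the reachable phase pairs $(\texttt{r.phase},\texttt{peer.phase})$ depicted in Fig.~\ref{fig:trans}, and verify three per-turn properties: each decision turn publishes a new light and performs a rigid move before advancing $\textsf{exc}\to\textsf{cpy}\to\textsf{rst}\to\textsf{exc}$; the \textsf{cpy} phase is light-preserving and only propagates the state marker; and starting from any reachable phase configuration, a fresh decision turn is reached within at most four simulator epochs. This yields the refinement mapping $T\mapsto 4T$.

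The main obstacle I expect is the stale-snapshot phenomenon intrinsic to plain \ASY: in $\FC^A$, a robot may base its \emph{Compute} on a snapshot taken before the peer's most recent \texttt{light} or \texttt{my-state} update. I would handle this by proving that the invariant above confines stale observations to \emph{non-decision} turns, where the rules only advance the handshake and, if they perform a move at all, move to the current position. Consequently, no call to $\mathcal{A}$ is ever executed on a stale snapshot, and the visible state that $\mathcal{A}$ sees at each decision turn matches the one it would have seen in $\FC^{A_{CM}}$. A small additional subtlety is the simultaneous-echo case, in which both robots race to echo each other's \texttt{my-state}; here I would argue that the $CM$-atomic barrier still holds because both robots agree on the combined update at the next stopped configuration.

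Finally, the color count follows from inspecting the transition diagram: the raw product $3\times 2\times 2$ of handshake variables admits $12$ combinations, but a reachability argument restricted to two robots shows that only seven patterns ever appear along any execution of \texttt{SIM}($\mathcal{A}$). Multiplying by the $k$ colors of $\mathcal{A}$ yields the claimed bound of $7k$. Combining the $4T$-epoch refinement with this color bound gives Lemma~\ref{le:Sim-FCAM-by-FCA-12-k-cols}, and the same simulator, when restricted to $LC$-atomic schedules, will also yield Theorem~\ref{th:FCALC=FCS} as stated.
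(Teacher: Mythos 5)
Your proposal follows essentially the same route as the paper's proof in Appendix~C: the \textsf{M}-marker invariant guaranteeing that no call to $\mathcal{A}$ is made while the peer is inside its simulated Compute--Move segment, the per-cycle refinement mapping onto atomic steps of $\FC^{A_{CM}}$, the liveness/phase-progression argument giving the $4T$ epoch bound, and the reachability argument reducing the $3\times2\times2=12$ raw combinations to the $7$ patterns in the transition diagram. One minor caution: the paper does permit a call to $\mathcal{A}$ on a snapshot taken after the peer's \emph{Look} but before its \emph{Compute} (so both robots may decide in the same cycle), which is still admissible in $\FC^{A_{CM}}$ since only the interval $[t_C,t_E]$ must be unobserved; your phrasing that ``no call is ever executed on a stale snapshot'' should be weakened accordingly, but this does not change the substance of the argument.
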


\begin{algorithm}[t]
\caption{{\tt SIM($\mathcal{A}$)} - for robot $r$}
\label{algo:simA-A-CM}
{\small
\begin{tabbing}
111 \= 11 \= 11 \= 11 \= 11 \= 11 \= 11 \= \kill

{\em State Compute}\crm
\Cl \> r.des $\leftarrow$ r.pos  \crm
\Cl \> {\bf case} other.phase {\bf of}\crm
\Cl \>exc: {\bf case} (peer.your-state, peer.my-state) {\bf of} \crm
\Cl \>\>(W,W),\crm
\Cl \> \>(W,M): \;Execute the {\tt Compute} of $\mathcal{A}$ \`// determining my color $r.light$ and destination $r.des$ // \crm
\Cl \>\>\>\>\>r.phase $\leftarrow$ cpy\crm
\Cl \>\>\>\>\>r.my-state $\leftarrow$ M\crm
\Cl \>\>\>\>\>r.your-state $\leftarrow$ peer.my-state\crm
\Cl \>\>(M,W):\;r.phase $\leftarrow$ exc\crm
\Cl \>\>\>\>\>r.your-state $\leftarrow$ peer.my-state\crm
\Cl \>\>(M,M): \;r.phase $\leftarrow$ cpy\crm
\Cl \>\>{\bf end-case}\crm

\Cl \>cpy: {\bf case} (peer.your-state, peer.my-state) {\bf of} \crm
\Cl \>\>(M,W): \;r.phase $\leftarrow$ exc\crm
\Cl \>\>\>\>\>r.your-state $\leftarrow$ peer.my-state\crm
\Cl \>\>(W,M): \;r.phase $\leftarrow$ cpy\crm
\Cl \>\>\>\>\>r.your-state $\leftarrow$ peer.my-state\crm
\Cl \>\>(M,M): \;r.phase $\leftarrow$ rst\crm
\Cl \>\>\>\>\>r.your-state $\leftarrow$ W\crm
\Cl \>\>\>\>\>r.my-state $\leftarrow$ W\crm
\Cl \>\>{\bf end-case}\crm

\Cl \>rst: \;r.phase $\leftarrow$ exc\crm
\Cl \>\>r.your-state $\leftarrow$ W\crm
\Cl \>\>r.my-state $\leftarrow$ W\crm
\Cl \> {\bf end-case}\crm

{\em State Move}\crm
Move to $r.des$;\crm
\end{tabbing}
}
\end{algorithm}

\paragraph*{Behavior of the simulator {\tt SIM}($\mathcal{A}$) (for two robots)}

We simulate any two-robot algorithm $\mathcal{A}$ designed for the $CM$-{\bf atomic} model
$\FC^{A_{CM}}$ inside the plain asynchronous model $\FC^{A}$ by means of a short, finite
handshake driven by \emph{public} variables.

Each robot $r\in\{a,b\}$ maintains three public lights:
\begin{itemize}
  \item \textit{r.light} — the light value that $\mathcal{A}$ exposes for $r$,
  \item $\textit{r.phase} \in \{\textsf{exc},\textsf{cpy},\textsf{rst}\}$ \ (handshake progress),
  \item $\textit{r.my-state} \in \{W,M\}$ \ (whether $r$ has already decided in the current cycle),
  \item $\textit{r.your-state} \in \{W,M\}$ \ ($r$’s last copy of the peer’s $\textit{my-state}$).
\end{itemize}
At every {\em Compute}, we first set $r.\textit{des}\gets r.\textit{pos}$; if a call to
$\mathcal{A}$ occurs, it may then update $r.\textit{des}$.

The simulation protocol {\tt SIM}($\mathcal{A}$) is described in Algorithm~\ref{algo:simA-A-CM}.
The \emph{outer} case distinction is on the \emph{peer’s} phase ($\textit{peer.phase}$):
\textsf{exc} (decision stage), \textsf{cpy} (copy/wait), or \textsf{rst} (reset).
When $\textit{peer.phase}=\textsf{exc}$, an \emph{inner} case on
$(\textit{peer.your-state},\textit{peer.my-state})\in\{W,M\}^2$ is used.

\begin{itemize}
  \item In the decision stage (\textsf{exc}), seeing $(W,W)$ or $(W,M)$ from the peer makes $r$
        call $\mathcal{A}$, then set $\textit{r.my-state}$ to $M$ and move to \textsf{cpy}.
        From a symmetric start, one or both robots may thus call $\mathcal{A}$, which is admissible in $\FC^{A_M}$.
  \item In the copy stage (\textsf{cpy}), no calls to $\mathcal{A}$ are made; the robots only
        propagate flags until both see $(M,M)$, then both enter \textsf{rst} and reset to $(W,W)$.
  \item In the reset stage (\textsf{rst}), both immediately re-align to $(\textsf{exc},W,W)$,
        opening the next cycle.
\end{itemize}

Because the \emph{outer} branch keys on the \emph{peer’s} phase, even if $r$ is locally in \textsf{cpy},
as long as the peer remains in \textsf{exc}, $r$ executes the \textsf{exc}-branch at its next activation.
Hence, while the peer is inactive, $r$ may call $\mathcal{A}$ \emph{repeatedly}, producing several
consecutive moves; in $\FC^{A_M}$ this corresponds to several consecutive \emph{atomic} steps of $r$.

The transition of configurations is shown in Fig.~\ref{fig:trans}.
\medskip
\begin{figure}
\centering\includegraphics[keepaspectratio, width=14cm]{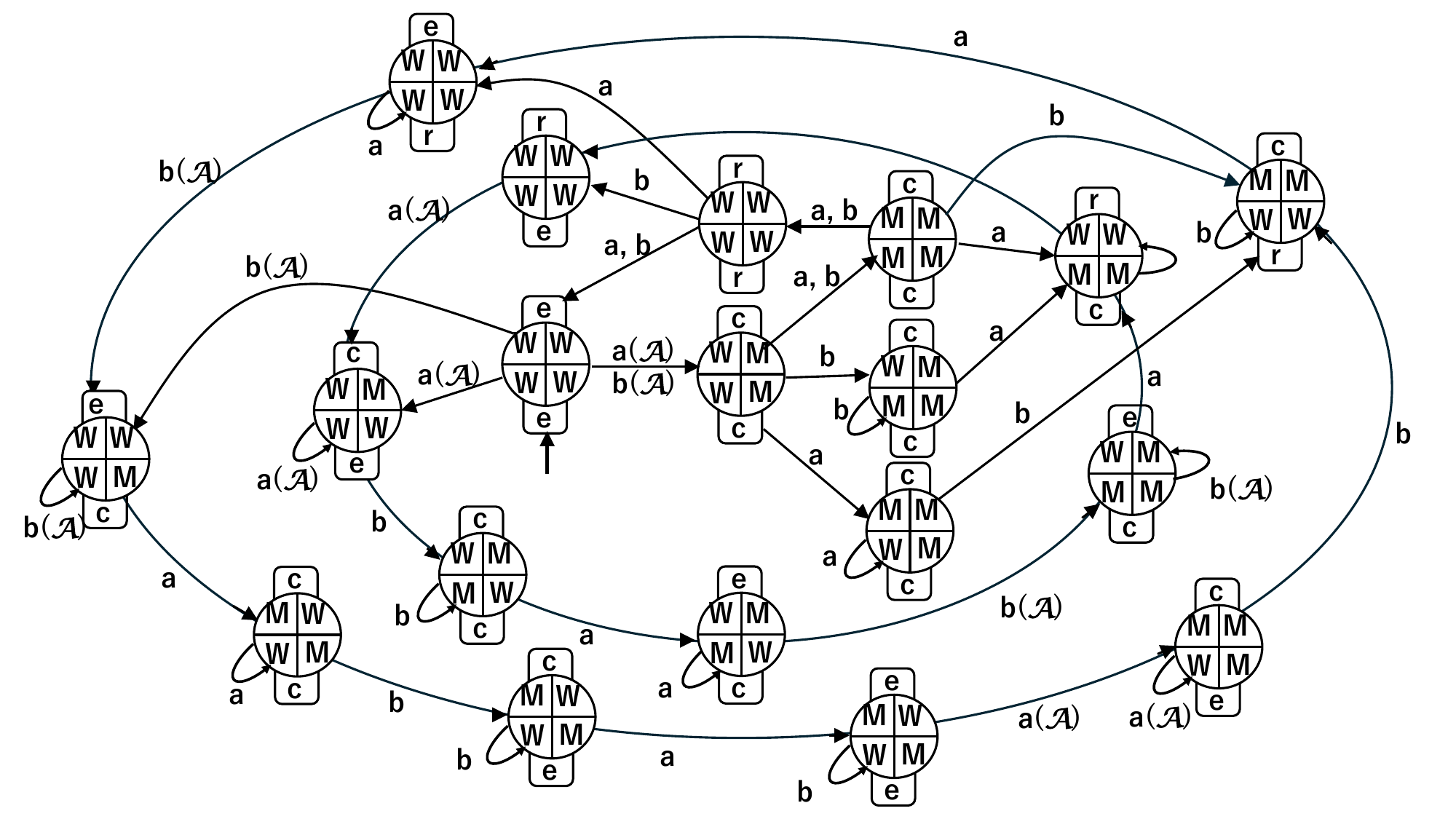}
\centering\includegraphics[trim=6cm 12cm 0cm 1.5cm, clip,width=12cm]{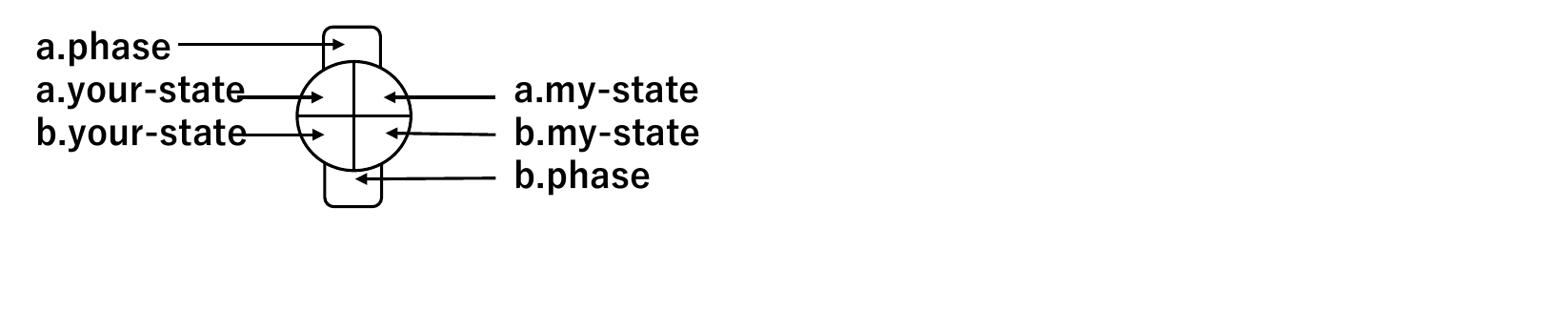}
    \caption{Transition Diagram of {\tt SIM}$(\mathcal{A})$. Each node encodes the colored configuration of the two robots, where the upper (resp. lower) semicircle and the symbol above (resp. below) it represent the state of robot $a$ (resp. $b$). The symbols \textsf{e}, \textsf{c}  and \textsf{r} denote \textsf{exc}, \textsf{cpy}, and \textsf{rst}, respectively. Directed edges correspond to possible moves between nodes. 
An edge labeled $r \in \{q,b\}$
indicates that robot 
 performs an action at that transition,
whereas a label $r(\mathcal{A}) (r \in \{q,b\})$
 means that robot 
$r$ executes one step of the simulated algorithm $\mathcal{A}$.
These labeled edges specify how the pair of robots progresses through the state space during the simulation.}
    \label{fig:trans}
\end{figure}

\paragraph*{Correctness}

In {\tt SIM}($\mathcal{A}$), each robot’s behavior from the start of its {\em Compute} phase until the completion of its {\em Move} is intentionally hidden from the peer.
Specifically, during the {\em Compute} phase, the robot sets \texttt{my-state} to \textsf{M}.
When the peer observes \textsf{M}, it refrains from executing $\mathcal{A}$ until the state returns to normal, instead entering a mode where it copies the peer’s state.
During the \textsf{cpy} phase, each robot copies the peer’s \texttt{my-state}, allowing it to correctly infer the peer’s current execution status.
This ensures that every call to $\mathcal{A}$ is observed by the other robot as
a single, indivisible event—exactly as in the $CM$-atomic model.
Consequently, the continuous motion generated by a call to $\mathcal{A}$
can be treated as one \emph{atomic move} with respect to the peer’s view.

We say that a configuration is \emph{aligned} when both robots are in state
$(\textsf{exc},W,W)$ or one robot is state $(\textsf{exc},W,W)$ and another one is in $(\textsf{rst},W,W)$.  
A \emph{cycle} begins at an aligned configuration and ends when the next aligned
configuration is reached.
Within one cycle, the following invariants hold:
\begin{itemize}
  \item \textbf{(I-M) Flag monotonicity:}  
        $\textit{my-state}$ changes only from $W\!\to\!M$ (it returns to $W$ only via
        the reset stage).  
        $\textit{phase}$ progresses monotonically
        $\textsf{exc}\!\to\!\textsf{cpy}\!\to\!\textsf{rst}\!\to\!\textsf{exc}$.
  \item \textbf{(I-Q) Call uniqueness:}  
        Each robot calls $\mathcal{A}$ at most once per cycle.
        After a call occurs, no further calls appear until the reset stage.
  \item \textbf{(I-D) Stable destinations:}  
        $\textit{des}$ is updated only at a call to $\mathcal{A}$; otherwise it remains fixed.
  \item \textbf{(I-Mov) One rigid move per call:}  
        Every call to $\mathcal{A}$ triggers exactly one continuous rigid move to
        $\textit{des}$, after which the robot stays still until the end of the cycle.
\end{itemize}

\paragraph*{Safety (atomic-step simulation).}
By (I-Q), at most one new decision is made per robot per cycle.
If both robots call $\mathcal{A}$ in the same cycle, their actions correspond to
two atomic moves occurring within the same abstract step of $\FC^{A_{CM}}$.
If only one calls, the other remains stationary during that step.
Since no robot observes another’s in-progress move (by the Move-atomic guarantee),
the pair of continuous moves within a cycle constitutes one atomic transition
of $\mathcal{A}$.

\paragraph*{Liveness.}
From $(\textsf{exc},W,W)$, some robot inevitably calls $\mathcal{A}$ on
snapshot $(W,W)$ or $(W,M)$, initiating the cycle.
Through the copy phase (\textsf{cpy}), both eventually observe $(M,M)$,
then simultaneously enter \textsf{rst} and reset to $(\textsf{exc},W,W)$.
Under weak fairness, every cycle completes in finite time, and infinitely
many cycles occur (no deadlock).

\paragraph*{Refinement mapping to $\FC^{A_{CM}}$.}
Each cycle corresponds to one abstract atomic step of $\mathcal{A}$ in $\FC^{A_M}$.
Robots that called $\mathcal{A}$ in the cycle perform exactly one atomic move
to their computed destination; others remain stationary.
By (I-D) and (I-Mov), continuous motions are collapsed into single atomic transitions,
and aligned configurations coincide with post-states of the abstract steps.
If one robot is inactive for several consecutive cycles, its peer’s repeated calls
map to multiple consecutive abstract steps of the same robot.
Thus, the concrete execution under {\tt SIM}($\mathcal{A}$) refines the abstract
execution of $\mathcal{A}$ in $\FC^{A_{CM}}$.

From Fig.~\ref{fig:trans}, one execution epoch of algorithm $\mathcal{A}$ by robots $a$ and $b$ is simulated by four epochs in the simulator.
The number of colors used in the simulation is $3\times2\times2=12$; however, as shown in Fig.~\ref{fig:trans}, only seven distinct patterns actually appear in the transition diagram.
Therefore, this simulation requires only $7k$ colors in total, where $k$ is the number of colors used by algorithm $\mathcal{A}$.

\section{Shrinking ROtation (SRO)}\label{app:sro}
\paragraph*{\textit{SRO} (Shrinking ROtation)~\cite{FSW19}}

Two robots $a$ and $b$ are  initially  placed in arbitrary distinct points (forming the initial configuration $C_0$).
The two robots uniquely identify a square (initially $Q_0$) whose diagonal is given by the segment  between them\footnote{By  square, we means the entire space delimited by the four sides.}. 
Let $a_0$ and $b_0$ indicate the initial positions of the robots, $d_0$  the segment between them, and $\length(d_0)$ its length.
Let $a_i$ and $b_i$ be the  positions of $a$ and $b$ in   configuration $C_i$ ($i \geq 0$). 
The problem consists of moving from  configuration $C_i$  to $C_{i+1}$ in such a way that 
either Condition~1 or Condition~2 is satisfied, and Condition~3 is satisfied in either case:

\begin{itemize}
\item Condition~1: $d_{i+1}$ is a $90^\circ$ clockwise rotation of $d_i$
and thus  $\length(d_{i+1})= \length(d_i)$,
\item Condition~2: $d_{i+1}$ is a ``shrunk'' $45^\circ$ clockwise rotation of $d_i$ 
such that $\length(d_{i+1}) = \frac{\length(d_i)}{\sqrt{2}}$, 
\item Condition~3: $a_{i+1}$ and $b_{i+1}$ must be contained in the square $Q_{i-1}$\footnote{We define $Q_{-1}$ to be the whole plane.}.
\end{itemize}
Think of the segment between the two robots as a \emph{rotating ruler} and the square it defines as a \emph{picture frame}.
At each step, one of two clockwise operations is performed inside the current frame:
\begin{itemize}
  \item \textbf{Pure rotation:} rotate the ruler by \(90^\circ\) around its midpoint; its length is unchanged.
  \item \textbf{Pivot-and-shrink:} pivot the ruler by \(45^\circ\) about one endpoint while shortening it by a factor \(1/\sqrt{2}\).
\end{itemize}
In both cases, the next robot positions remain inside the previous square \(Q_{i-1}\).
Repeating this yields a clockwise “turn-and-shrink” dynamic: if only \(90^\circ\) steps are taken, the diagonal simply spins at constant length; whenever \(45^\circ\) steps are inserted, the diagonal (and thus the square) shrinks by \((1/\sqrt{2})^k\) after \(k\) such steps, forming a nested sequence of squares and bringing the robots progressively closer—always within the last frame.

\section{Proof of Lemma~\ref{le:dmsd-rigid-asynch}}\label{app:imp-DMSD}

\setcounterref{lemmaduplicate}{le:dmsd-rigid-asynch}
\addtocounter{lemmaduplicate}{-1}

\begin{lemmaduplicate}[Reprint of Lemma~\ref{le:dmsd-rigid-asynch} on 
page~\pageref{le:dmsd-rigid-asynch}]\label{duplicatele:dmsd-rigid-asynch}
No nontrivial deterministic algorithm solves \textsc{DMSD} in $\FS^A$ or $\OB^A$.
\end{lemmaduplicate}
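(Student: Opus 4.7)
I would prove this by contradiction: assume $\mathcal{A}$ is a nontrivial deterministic algorithm that solves $\DMSD$ in $\FS^A$ (the $\OB^A$ case is handled identically by suppressing the memory component). Nontriviality supplies some fair execution whose timeline contains a positive-length $\Move$; I focus on the \emph{first} such move. Since no earlier move can have happened, this move originates from the initial configuration: some robot, call it $a$, is at its initial position $x$ and has computed a destination $p$ with $|p-x|>0$, while the peer $b$ remains at $b(0)$ and the pairwise distance is still $D_0$.

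Next, I would exhibit the ambushing asynchronous schedule. Because plain $\ASY$ offers no $M$-atomicity guarantee, the adversary may activate $b$ for an instantaneous $\Look$--$\Compute$ while $a$ is still traversing the segment $[x,p]$; by rigidity combined with the variable-speed profile permitted by the $\Move$ semantics, $a$'s observed position can be placed at any prescribed interior point $c\in(x,p)$. The deterministic compute function then selects $b$'s destination $q(c)$ as a function of the snapshot (and, in $\FS^A$, of $b$'s initial memory). The adversary lets both $\Move$s complete and suspends the next activations for a strictly positive interval, so that $(p, q(c))$ is a joint-stopped configuration and $\DMSD$ forces $|p-q(c)|/D_0\in\Dyad$ for every admissible $c$.

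The core step is to show that the adversary can always choose $c$—together, if necessary, with the variable-disorientation rotation $\theta$ of $b$'s local frame—so that $|p-q(c)|/D_0 \notin \Dyad$. The available parameter space $(c,\theta)\in(x,p)\times[0,2\pi)$ is two-dimensional and uncountable, whereas $\Dyad\cdot D_0$ is countable. If $\mathcal{A}$'s local compute map $g$ is not rotationally equivariant, varying $\theta$ alone produces a continuous curve of global destinations $b(0)+R_\theta\, g(R_{-\theta}(c-b(0)))$, and the continuous distance function to $p$ cannot take only countably many values unless it is constant—a case that forces $g\equiv 0$ on the relevant orbit and contradicts nontriviality. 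If $g$ \emph{is} equivariant, then varying $c$ traces a one-parameter image curve whose intersection with the countable family of dyadic-radius circles around $p$ is countable; since $(x,p)$ is uncountable, the adversary picks any $c$ outside that countable safe set and obtains a non-dyadic final distance.

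The main obstacle will be the degenerate subcases in which $g$ happens to be constant on the segment $(x,p)$ or tracks a prescribed dyadic-radius circle; I plan to close these by continuing the execution past the first joint stop and reapplying the ambush to the subsequent positive-length move guaranteed by nontriviality. Iterating, the adversary accumulates a sequence of independent uncountable parameter families against a fixed countable safe set, so an algorithm that resists indefinitely must eventually cease to issue any positive-length move, contradicting nontriviality. The verification that the compounded schedule is fair—every robot eventually performs $\Look$, $\Compute$, and $\Move$ infinitely often—is standard once the ambushes are interleaved with ordinary round-robin activations between joint stops.
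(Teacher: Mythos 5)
Your overall strategy coincides with the paper's: interrupt the first positive-length move so that the peer observes the mover at an adversarially chosen interior point $c$, let both moves complete to produce a joint stop, and then use the fact that $\Dyad$ is countable with empty interior to force a non-dyadic stopped distance. The gap is in the step where you must guarantee that the achieved distance actually \emph{varies} over your parameter family. You treat the responder's destination $q(c)$ as an essentially arbitrary continuous function of the observed point and argue by cardinality, but your own case analysis does not close: (i) the assertion that a constant distance-to-$p$ ``forces $g\equiv 0$ on the relevant orbit'' is false --- the destinations could sweep a circle centred at $p$, or the responder could simply decline to move, giving the constant distance $|p-b(0)|$, which may perfectly well be a dyadic multiple of $D_0$; (ii) the genuinely dangerous degenerate case, where the image curve tracks a single dyadic-radius circle about $p$, is only ``planned'' to be handled by re-ambushing later moves, with no argument that the same degeneracy cannot recur at every subsequent joint stop. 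As written, a hypothetical algorithm whose responses always land on dyadic circles defeats your adversary, so the contradiction is not yet established.

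The paper closes exactly this hole by exploiting that every snapshot is a two-point scene and hence similar to the initial one: the interrupted robot and the responding robot necessarily apply the \emph{same} coefficient $\lambda\in(0,1]$ to their respective segments, so the stopped distance is the explicit affine function $x\mapsto|1-2\lambda+\lambda^{2}x|\,D_0$ of the interruption fraction $x$, which is non-constant precisely because $\lambda^{2}\neq 0$; its image is therefore a nondegenerate interval and must contain non-dyadic ratios. If you insert this self-similarity observation (both robots run the same deterministic rule on similar two-point snapshots, so $q(c)$ is determined by the same $\lambda$ that produced $p$), the cardinality/equivariance case split becomes unnecessary and your degenerate cases disappear; without it, the proof is incomplete.
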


\begin{proof}[Proof]
Let $D=0:=|b(0)-a(0)|$. Assume the algorithm commands a nonzero move when a robot looks at $(a(0),b(0))$. W.l.o.g.\ robot $a$ is activated first and, by similarity invariance in a two-point scene, chooses a destination on the line $\overline{a(0)b(0)}$ of the form
\[
  A' \;=\; (1-\lambda)a(0)+\lambda b(0) \qquad (\lambda\in(0,1]) .
\]
Robot $a$ starts moving \textsc{rigid}ly towards $A'$. At an adversarially chosen fraction $x\in(0,1)$ of $a$'s progress, robot $b$ is activated and sees $a$ at
\[
  A_x \;=\; a(0) + x(A'-a(0)) \;=\; a(0) + x\lambda v .
\]
By the same invariance, $b$ computes a destination on $\overline{a(0)b(0)}$ with the same coefficient $\lambda$ applied to the seen configuration:
\[
  B' \;=\; (1-\lambda)b(0) + \lambda A_x .
\]
The adversary adjusts speeds so that the two arrivals occur simultaneously (or simply considers the later arrival time); that time is a ``simultaneous stop''. The distance at that time is
\[
  \lVert A'-B'\rVert \;=\; \bigl\lVert \bigl((1-\lambda)-\lambda+\lambda^2 x\bigr) v \bigr\rVert
  \;=\; \bigl|\,1-2\lambda+\lambda^2 x\,\bigr|\,D_0 .
\]
For any fixed $\lambda\in(0,1]$, the map $x\mapsto |1-2\lambda+\lambda^2 x|$ is a non-constant affine function on $(0,1)$, hence its image contains a nondegenerate interval. Since $\Dyad$ is countable and has empty interior, the adversary can pick $x$ so that
\[
  \frac{\lVert A'-B'\rVert}{D_0} \;=\; |1-2\lambda+\lambda^2 x| \;\notin\; \Dyad ,
\]
violating \textsc{DMSD} at a simultaneous stop. The only way to avoid this is the degenerate ``never move'' policy, which we exclude as trivial.
\end{proof}

\section{Definition of Single Move(SM)}\label{app:SM}

\begin{definition} 


{\bf Single Move} (SM):
Let $a,b,$ be two robots on distinct locations $a(0), b(0)$
where $r(t)$ denotes the position of  $r\in\{a,b\}$ at time $t\geq 0$.
The problem SM requires each of the two robots to move only once to a different location.
More precisely, an algorithm solves SM iff it satisfies the following temporal geometric predicate:

$$
SM \equiv \moves{a}=1\ \land\ \moves{b}=1.
$$\qed

\end{definition}

\section{Proof of Lemma~\ref{lem:FCOM-CMatomic}}\label{app:pos-RDAM}

\setcounterref{lemmaduplicate}{lem:FCOM-CMatomic}
\addtocounter{lemmaduplicate}{-1}

\begin{lemmaduplicate}[Reprint of Lemma~\ref{lem:FCOM-CMatomic} on 
page~\pageref{lem:FCOM-CMatomic}]
\label{duplicatelem:FCOM-CMatomic}
Algorithm~\ref{alg:FCOM-C} deterministically achieves 
$RDAM$.
\end{lemmaduplicate}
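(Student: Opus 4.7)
The plan is to perform a case analysis on the ordering of the two robots' first \Compute\ events and on the timing of their first \Look{s}, using the $A_{CM}$ atomicity constraint to prune intermediate observations. Algorithm~\ref{alg:FCOM-C} has a monotone structure: lights only transition $\INIT\to\ANCHOR$, never back, and a robot with light $\INIT$ has never moved (every motion is produced by the INIT-branch, which simultaneously writes $\ANCHOR$). Assuming WLOG $t_C(a,1)\le t_C(b,1)$, $b$ has not yet Computed at $t_L(a,1)$, so $a$'s first snapshot shows $(b(0),\INIT)$ and $a$ takes the INIT-branch at $t_C(a,1)$: it sets $a.L:=\ANCHOR$ and moves rigidly to $M:=(a(0)+b(0))/2$. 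The $A_{CM}$ constraint $t_L(b,1)\notin [t_C(a,1),t_E(a,1)]$ then splits the remaining analysis into exactly two regimes.

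In Regime~I, $t_L(b,1)<t_C(a,1)$ (including the simultaneous case $t_L(a,1)=t_L(b,1)$). Then $b$'s snapshot still shows $(a(0),\INIT)$, so $b$'s first \Compute\ is also an INIT-branch: $b.L:=\ANCHOR$ and $b$ moves rigidly to the same point $M$. From then on $b$ never moves again, because every $t_L(b,j)$ with $j\ge 2$ satisfies $t_L(b,j)>t_E(b,1)>t_C(b,1)\ge t_C(a,1)$ and hence sees $a.L=\ANCHOR$. Whether $a$ performs additional moves depends on whether some $t_L(a,j)$ with $j\ge 2$ falls before $t_C(b,1)$: if no such index exists, every later \Look\ by $a$ sees $b.L=\ANCHOR$, so $\moves{a}=1$ and both robots are co-located at $M$ forever, giving $RDV_1$; if such an index exists, then $a$ still observes $b.L=\INIT$ with $b$ at $b(0)$ and triggers further halving moves, producing $\moves{a}\ge 2$ together with $\moves{b}=1\le 1$ and $\Stopped{b}$, giving the $AM$ disjunct.

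In Regime~II, $t_L(b,1)>t_E(a,1)$: $b$ observes $(M,\ANCHOR)$, so $b$'s first \Compute\ is a no-op; $b.L$ stays $\INIT$ forever and, by an easy induction, every later $b$-activation is also a no-op. Meanwhile every $a$-activation still sees $b.L=\INIT$ at $b(0)$ and takes the INIT-branch, halving the remaining distance to $b(0)$. By fairness $a$ is activated infinitely often, so $\moves{a}=\infty\ge 2$, $\moves{b}=0\le 1$, and $\Stopped{b}$ all hold, giving $AM$.

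The main obstacle, and the essential use of $A_{CM}$, is excluding the intermediate observation where $b$ could \Look\ inside $[t_C(a,1),t_E(a,1)]$: such a \Look\ would show $a.L=\ANCHOR$ already committed but at an arbitrary in-flight position of $a$, after which $b$'s no-op branch could leave the configuration in a state where neither $RDV_1$ nor $AM$ can be guaranteed in general. Under $A_{CM}$ this case is forbidden by definition, so the two regimes above are exhaustive; the remaining bookkeeping---verifying that each INIT-branch move is a genuine positive-length rigid motion contributing a fresh open interval to $\MoveSet{\cdot}$, and that each no-op branch truly preserves position so that move counts and $\Stopped{\cdot}$ match the predicates of $RDAM$---is routine.
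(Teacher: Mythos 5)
Your proof is correct and follows essentially the same route as the paper's: an exhaustive case analysis on the interleaving of the two robots' first cycles, using the $CM$-atomicity constraint $t_L(b,1)\notin[t_C(a,1),t_E(a,1)]$ to exclude mid-move observations and reduce everything to "peer saw $\INIT$ before my Compute" versus "peer saw $\ANCHOR$ after my Move-end," each landing in $RDV_1$ or $AM$. Your WLOG-plus-two-regimes packaging is a tidier presentation of the paper's four cases (its Cases 1 and 3 fall into your Regime I, its Cases 2 and 3(ii) into your Regime II), but the substance is the same.
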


\begin{proof}
We analyze all possible executions under a fair $CM$-{\bf atomic} \ASY.
(1) both look simultaneously (symmetric INIT), (2) one completes its midpoint move before the other’s first Look, 
(3) one looks while the other is in its $CM$ segment, and (4) both have arrived and are $\ANCHOR$. 

Let $m$ be $(p_1+p_2)/2$.
\paragraph{Case~1: both look simultaneously (symmetric INIT).}
Both robots see $L=\INIT$, apply the same rule, and compute the same midpoint $m$.
Because of CM-atomicity, neither observes the other’s motion.
Each moves rigidly to $m$ and stops there after exactly one move.
Hence $RDV_1$ holds (both move once and rendezvous).

\paragraph{Case~2: one completes its midpoint move before the other’s first Look.}
Suppose robot $a$ performs the first Look, sets $a.L=\ANCHOR$, and moves rigidly to $m$.
Robot $b$ performs its first Look after $a$ has reached $m$, so it observes $a.L=\ANCHOR$
and executes the “do nothing” rule.
Robot $b$ thus remains at its initial position forever ($\mathrm{moves}(b)=0$).
By fairness, $a$ is activated again; it now sees $b.L=\INIT$ and therefore applies the midpoint
rule once more, moving to the midpoint between $m$ and $b$’s position.
Hence $\mathrm{moves}(a)\ge2$, $\mathrm{moves}(b)\le1$, and $b$ is eventually stopped.
Therefore $AM$ holds.

\paragraph{Case~3: one looks while the other is in its $CM$ segment.}
Assume $b$ looks while $a$ is executing its Compute--Move for the midpoint rule.
By CM-atomicity, $b$’s snapshot is either:
\begin{enumerate}
    \item pre-move: $a$ still appears with $a.L=\INIT$, so $b$ also applies the midpoint rule,
          which reduces to Case~1; or
    \item post-move: $a$ has arrived and displays $a.L=\ANCHOR$, so $b$ applies the “do nothing” rule,
          reducing to Case~2.
    \item \emph{symmetric stale-snapshot interleaving:} after the first robot reaches $m$ and becomes
          $\ANCHOR$, the second robot starts its first midpoint move; if the first robot looks again
          while the second is still in its CM segment, CM-atomicity yields a stale snapshot
          ($L_{\text{peer}}=\INIT$ at the old location), so it applies the midpoint rule once more.
          Then the second completes its first move to $m$ and anchors.
          At the first simultaneous-stop time we have $\mathrm{moves}$ asymmetry, so $AM$ holds.

\end{enumerate}
In either subcase, $RDV_1$ or $AM$ is satisfied.

\paragraph{Case~4: both have arrived and are $\ANCHOR$.}
Once both are $\ANCHOR$, any subsequent Looks see $L=\ANCHOR$,
and both remain stopped forever.
No “swap” or further motion occurs.
Hence once $RDV_1$ or $AM$ has been realized, it remains true thereafter.

\medskip
These four cases exhaust all adversarial CM-atomic schedules.
In every possible fair execution, the outcome satisfies either $A$ or $B$.
Hence Algorithm~\ref{alg:FCOM-C} solves $RDV_1\lor AM$ deterministically in the stated model. \qed
\end{proof}

\section{Proof of Symmetry preseving lemma}\label{app:SPL}

\smallskip
\noindent\textbf{Symmetry preservation lemma.}
From this initial condition, an adversarial scheduler can preserve,
after every Look/Compute/Move,
\[
a(t)=-b(t)\qquad\text{and}\qquad \mathrm{color}_a(t)=\mathrm{color}_b(t).
\]
Indeed, during the (finite) \FSY\ prefix, both robots see identical (mirror) snapshots,
perform identical computations, and command mirror destinations; \textsc{rigid} motion returns them
to a mirror-symmetric configuration. In the subsequent \RSY\ phase, when robot say $a$ is activated
alone and moves to a destination $f(\text{view},\text{color})$, the next activation of robot $b$
occurs from the mirrored view with the same color, hence robot~$b$ commands the mirror destination;
after its \textsc{rigid} move, mirror symmetry (and equal colors) is restored. By induction, symmetry
is preserved forever.

\smallskip
\noindent\textbf{Corollary (move-count equality).}
Let $\moves{r}$ denote the number of commanded moves completed by robot $r$ up to any time.
Under \RR with preserved symmetry, at each alternation either both increase their
move count by one (over two consecutive activations) or neither does; hence
\[
\moves{a}=\moves{b}\quad\text{at all times.}
\]



\end{document}